\theoremstyle{plain}
\newtheorem{theorem}{Theorem}[section]
\newtheorem{lemma}[theorem]{Lemma}
\newtheorem{corollary}[theorem]{Corollary}
\theoremstyle{definition}
\newtheorem{definition}[theorem]{Definition}
\newtheorem{assumption}{Assumption}
\theoremstyle{remark}
\newcommand{\R}{\mathbb R}
\newcommand{\E}{\mathbb E}
\newcommand{\eps}{\varepsilon}
\newcommand{\abs}[1]{\left| #1 \right|}
\newcommand{\norm}[1]{\left\| #1 \right\|}
\newcommand{\SqrNorm}[1]{\left\| #1 \right\|^2}
\newcommand{\pars}[1]{\left( #1 \right)}
\newcommand{\sqbr}[1]{\left[ #1 \right]}
\newcommand{\InnerProd}[1]{\left\langle #1 \right\rangle}
\newcommand{\Exp}[1]{\E \sqbr{#1}}
\newcommand{\Var}[1]{\mathbf{Var} \sqbr{#1}}
\newcommand{\ExpSqrNorm}[1]{\E \norm{#1}^2}
\newcommand{\tOh}[1]{\tilde{O} \pars{#1}}
\newcommand{\tOm}[1]{\tilde{\Omega} \pars{#1}}
\newcommand{\vzero}{\mathbf{0}}
\newcommand{\vg}{\mathbf g}
\newcommand{\vv}{\mathbf v}
\newcommand{\vx}{\mathbf x}
\newcommand{\vy}{\mathbf y}
\newcommand{\vH}{\mathbf H}
\DeclareMathOperator*{\avg}{avg}
\DeclareMathOperator{\minimize}{minimize}
\newcommand{\x}[1]{\vx_{#1}}
\newcommand{\tx}[1]{\vx'_{#1}}
\newcommand{\difx}[1]{\bar{\vx}_{#1}}
\newcommand{\ourNoise}[1]{\xi_{#1}}
\newcommand{\tourNoise}[1]{\xi'_{#1}}
\newcommand{\difourNoise}[1]{\bar{\xi}_{#1}}
\newcommand{\sgdNoise}[1]{\zeta_{#1}}
\newcommand{\tsgdNoise}[1]{\zeta'_{#1}}
\newcommand{\difsgdNoise}[1]{\bar{\zeta}_{#1}}
\newcommand{\sgdErr}[1]{\mathrm{err}^{\mathrm{sgd}}_{#1}}
\newcommand{\quadErr}[1]{\mathrm{err}^{\mathrm{Q}}_{#1}}
\newcommand{\dif}[1]{\mathrm{dif}_{#1}}
\newcommand{\hesInt}[1]{\Gamma_{#1}}
\newcommand{\logErr}{\log \nicefrac{1}{\tdelta}}
\newcommand{\fmax}{f_{\max}}
\newcommand{\batch}[1]{\mathcal{B}_{#1}}
\newcommand{\g}[1]{\vg_{#1}}
\newcommand{\id}{\mathbf{I}}
\newcommand{\tdelta}{\tilde \delta}
\newcommand{\tsigma}{\tilde \sigma}
\newcommand{\fullVar}{\tilde \sigma}
\newcommand{\step}{\eta}
\renewcommand{\P}[1]{\mathbb P \sqbr{#1}}
\newcommand{\sqrtra}{\sqrt{\rho \alpha}}
\newcommand{\sqrtfacr}{\sqrt{\alpha^3 / \rho}}
\newcommand{\escIter}{\mathcal I}
\newcommand{\df}{\mathcal F}
\newcommand{\escRad}{\mathcal R}
\newcommand{\eigen}{\gamma}
\newcommand{\lmin}{\lambda_{\min}}
\newcommand{\cstep}{c_{\step}}
\newcommand{\cit}{c_{\escIter}}
\newcommand{\crad}{c_{\escRad}}
\newcommand{\cf}{c_{\df}}
\newcommand{\subfigwidth}{0.48}
\begin{document}

\title{Noise is All You Need: Private Second-Order Convergence of Noisy SGD}
\author{%
Dmitrii Avdiukhin \thanks{dmitrii.avdiukhin@northwestern.edu,
Northwestern University. }
\and
Michael Dinitz \thanks{ mdinitz@cs.jhu.edu, Johns Hopkins University.}
\and
Chenglin Fan \thanks{ fanchenglin@gmail.com,  Seoul National University. }
\and
Grigory Yaroslavtsev 
\thanks{grigory@gmu.edu,
George Mason University.}
}

\newpage

\setcounter{page}{1}
\maketitle




\begin{abstract}
Private optimization is a topic of major interest in machine learning, with differentially private stochastic gradient descent (DP-SGD) playing a key role in both theory and practice. Furthermore, DP-SGD is known to be a powerful tool in contexts beyond privacy, including robustness, machine unlearning, etc.
Existing analyses of DP-SGD either make relatively strong assumptions (e.g., Lipschitz continuity of the loss function, or even convexity) or prove only first-order convergence (and thus might end at a saddle point in the non-convex setting).
At the same time, there has been progress in proving second-order convergence of the non-private version of ``noisy SGD'', as well as progress in designing algorithms that are more complex than DP-SGD and do guarantee second-order convergence.
We revisit DP-SGD and show that ``noise is all you need'': the noise necessary for privacy already implies second-order convergence under the standard smoothness assumptions, even for non-Lipschitz loss functions.
Hence, we get second-order convergence essentially for free: DP-SGD, the workhorse of modern private optimization, under minimal assumptions can be used to find a second-order stationary point.
\end{abstract}
\section{Introduction}


In recent years, there has been a growing interest in private data analysis, from academic research to real-world applications.
\emph{Differential privacy}~\citep{dwork_CalibratingNoise_2006} provides strong privacy guarantee for each individual, while allowing to train accurate machine learning models.
Informally, an algorithm is differentially private if, for any person, the outcome of the algorithm is roughly the same regardless of whether or not this person's data is included.
Differential privacy has become one of the most popular formalizations of privacy in both theory and practice~\citep{dwork_AlgorithmicFoundations_2014,DBLP:series/synthesis/2016Li}, and has been adopted in a variety of settings including regression,  combinatorial optimization, continuous optimization, principal component analysis, matrix completion, reinforcement learning  and deep learning~\citep{chaudhuri_PrivacypreservingLogistic_2008,chaudhuri_DifferentiallyPrivate_2011,DBLP:conf/soda/GuptaLMRT10,agarwal_CpSGDCommunicationefficient_2018,ge_MinimaxOptimalPrivacyPreserving_2018,jain_DifferentiallyPrivate_2018, DBLP:conf/icml/VietriBKW20, abadi_DeepLearning_2016}.


A typical machine learning problem can be stated as \emph{empirical risk minimization} (ERM), where we optimize over the parameters of the model in order to minimize the empirical risk. 
Formally, ERM is:
\begin{align*}
    \minimize_{ w \in \R^d } f( w ) := \frac{1}{n} \sum_{i=1}^n f_i(w),
\end{align*}
where $n$ is the number of samples, $f_i$'s are differentiable functions and $w$ are the model weights.

Since sampled data points typically correspond to information about individuals, it is natural to ask whether we can do ERM \emph{privately}~\citep{bassily_PrivateEmpirical_2014}.  While there are various algorithmic approaches for solving ERM, Stochastic Gradient Descent (SGD) plays a key role as a very basic yet widely used practical algorithm. SGD is a first-order optimization algorithm that updates the model parameters based on the gradient of the loss function with respect to these parameters.
Due to the simplicity and efficiency of SGD, one of the most important techniques in private optimization is Differentially Private SGD (DP-SGD) (see e.g.~\citet{abadi_DeepLearning_2016}), which is implemented as a part of TensorFlow library and in PyTorch\footnote{\url{https://github.com/tensorflow/privacy}, \url{https://github.com/pytorch/opacus/}}.
DP-SGD guarantees differential privacy by adding noise to the gradients computed during the training process. By carefully controlling the noise, this can be achieved without substantially sacrificing the overall utility of the trained model.
DP-SGD now plays a central role in private optimization, and has been studied extensively in recent years~\citep{song2013stochastic,bassily_PrivateEmpirical_2014,abadi_DeepLearning_2016,wang2017differentially,bassily2019private,feldman2020private,asi2021private,yang2022normalized,DBLP:conf/iclr/FangLF023}.

SGD converges to a point with a small gradient, which can be a local minimum or a saddle point.
In general, finding a local minimum is NP-hard for non-convex functions~\citep{DBLP:conf/colt/AnandkumarG16}.
Despite this, the problem of finding a local minimum by escaping from saddle points has received significant attention~\citep{DBLP:conf/colt/GeHJY15,DBLP:conf/nips/BhojanapalliNS16, DBLP:conf/icml/0001JZ17, jin_HowEscape_2017,carmon_AnalysisKrylov_2018,jin_AcceleratedGradient_2018,allen-zhu_NatashaFaster_2018,allen-zhu_NEON2Finding_2018,fang_SPIDERNearOptimal_2018,tripuraneni_StochasticCubic_2018,xu_FirstorderStochastic_2018, daneshmand_EscapingSaddles_2018, carmon_FirstOrderMethods_2020,zhou_StochasticRecursive_2020,zhou_StochasticNested_2020, jin_NonconvexOptimization_2021, zhang_EscapeSaddle_2021}.
Many non-convex ERM functions have been shown to be strict saddles~\citep{DBLP:conf/colt/GeHJY15}, meaning that a second-order stationary point (SOSP) (i.e. a point with a small gradient and a bounded Hessian) can be sufficiently  close to a local minimum. Similarly to differential privacy, adding noise to the gradient during training has been the key technique for escaping from saddle points~\citep{DBLP:conf/colt/GeHJY15,jin_NonconvexOptimization_2021}. 

Since adding noise to gradients during training is the fundamental idea behind both private SGD and second-order convergence of SGD, it is natural to ask whether ``noise is all you need'': is adding noise enough to guarantee both privacy and second-order convergence simultaneously?  Or do we need more sophisticated algorithms to simultaneously guarantee privacy and second-order convergence?
There are a number of technical complications that arise when trying to analyze both properties. Most notably, standard versions of DP-SGD~\citep{abadi_DeepLearning_2016} use a technique known as \emph{gradient clipping}, which goes beyond simply adding noise.
We ask a simple yet fundamental question:
\begin{center}\textit{Does noisy SGD converge to a second-order stationary point while also providing privacy?}\end{center}





\subsection{Our Results}

In this paper, we give an affirmative answer the above question: we show that an extremely simple version of noisy SGD is enough to find a second-order stationary point privately.
We consider a version of DP-SGD which does not use gradient clipping~-- the usual technique used to ensure privacy~-- and doesn't assume bounded domain.
Instead, we run vanilla SGD, but with carefully chosen noise added to each stochastic gradient. 
Our main result is the following, where $\vx$ is an $\alpha$-second order stationary point ($\alpha$-SOSP) for a twice-differentiable function $f$ if $\|\nabla f(\vx)\| < \alpha$ and $\lmin(\nabla^2 f(\vx)) > -\sqrtra$ (see Definition~\ref{def:sosp}):

\begin{theorem}[Informal, see Theorem~\ref{thm:combine_main}]
    
    Under the standard assumptions of non-convex optimization~-- in particular, without the Lipschitz condition~-- noisy SGD (without gradient clipping) is $(\eps,\delta)$-differentially private and for any $\alpha = \tilde{\Omega} \pars{\nicefrac{d^{1/4}}{\sqrt{n \eps}}}$ finds an $\alpha$-second-order stationary point w.h.p.  Moreover, the number of stochastic oracle calls with variance $\sigma^2$ is
    $\tOh{\nicefrac{\sigma^2}{\alpha^4}}$.
    
\end{theorem}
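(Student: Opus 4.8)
The plan is to split the statement into a privacy claim and a second-order convergence claim, linked only through the scale of the injected Gaussian noise. Privacy will come from composing (subsampled) Gaussian mechanisms; convergence will come from a ``descend or escape'' analysis in which the \emph{same} noise added for privacy is what drives the escape from saddle points, so that no additional perturbation is ever injected and second-order convergence genuinely comes ``for free''.

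\textbf{Privacy.} Each iteration publishes a minibatch gradient perturbed by isotropic Gaussian noise of per-coordinate variance $r^2$; over $T$ iterations this is a composition of $T$ (subsampled) Gaussian mechanisms, so by the moments-accountant / R\'enyi-DP bound it is $(\eps,\delta)$-differentially private provided $r^2 = \tilde\Omega\!\pars{\Delta^2\, T\,\logErr/(n\eps)^2}$, where $\Delta$ is the per-example $\ell_2$-sensitivity of a minibatch gradient and $b$ the batch size. The only nonstandard point is that, without a Lipschitz assumption or clipping, $\Delta$ is not bounded a priori; I would control it by a trajectory argument. Using the smoothness descent inequality together with a martingale (Azuma/Freedman) concentration bound on the accumulated noise terms, show that with probability $1-\tdelta$ every iterate stays inside a bounded region; smoothness and the fact that $f$ (hence, after a mild normalization, each $f_i$) is bounded below then give $\norm{\nabla f_i(\x t)} \le G$ along the trajectory, hence $\Delta \le G/b$. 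Conditioning on this event and folding its failure probability into $\delta$ yields the privacy guarantee.

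\textbf{Convergence.} At each step $t$ there are two regimes. (i) \emph{Large gradient}, $\SqrNorm{\nabla f(\x t)} \ge \alpha^2$: $\ell$-smoothness gives $\Exp{f(\x{t+1})} \le f(\x t) - \tfrac{\step}{2}\SqrNorm{\nabla f(\x t)} + \tfrac{\step^2\ell}{2}\Exp{\norm{\x{t+1}-\x t + \step\nabla f(\x t)}^2}$, and since the last term is $\Theta(\step^2\ell(d r^2 + \sigma^2))$, choosing $\step$ small enough that it is dominated by $\tfrac{\step}{4}\alpha^2$ yields expected decrease $\Om{\step\alpha^2}$ per such step. (ii) \emph{Small gradient but negative curvature}, $\SqrNorm{\nabla f(\x t)}<\alpha^2$ and $\lmin(\nabla^2 f(\x t)) < -\sqrtra$: I would run a coupling / Gaussian-anticoncentration argument in the style of Jin et al. Linearizing the dynamics around $\x t$ and projecting onto the most negative eigenvector $\vv$ of $\nabla^2 f(\x t)$, the projection evolves as $u_{k+1} \approx (1+\step\gamma)u_k + \step\,\langle\text{noise}_k,\vv\rangle$ with $\gamma := -\lmin(\nabla^2 f(\x t)) \ge \sqrtra$; the isotropic noise makes $|u_k|$ grow like $(1+\step\gamma)^k$, so within $\escIter = \tOh{1/(\step\sqrtra)}$ steps the iterate leaves a ball of radius $\escRad = \tilde\Theta(\sqrt{\alpha/\rho})$ with high probability. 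The $\rho$-Hessian-Lipschitz bound on the error of the quadratic model converts this displacement along a $\gamma$-negative-curvature direction into a function-value decrease of $\df = \tOm{\sqrtfacr}$ over the episode (using $\gamma\,\escRad^2 \gtrsim \sqrtra\cdot\alpha/\rho = \sqrtfacr$). Contrapositively, if no escape occurs then $\lmin(\nabla^2 f(\x t)) \ge -\sqrtra$, which together with $\SqrNorm{\nabla f(\x t)}<\alpha^2$ means $\x t$ is an $\alpha$-SOSP.

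\textbf{Putting it together, and the main obstacle.} Since $f$ is bounded below, there are $O((f(\x0)-f^*)/(\step\alpha^2))$ large-gradient steps and $\tilde O((f(\x0)-f^*)/\sqrtfacr)$ escape episodes, each of length $\escIter = \tOh{1/(\step\sqrtra)}$; using $\sqrtfacr\cdot\sqrtra = \alpha^2$ the total horizon is $T = \tilde O((f(\x0)-f^*)/(\step\alpha^2))$, and with the step size forced by regime (i) this is $T = \tOh{\sigma^2/\alpha^4}$ stochastic-oracle calls, while a union bound over the $\tilde O(1)$ episodes keeps the overall failure probability small. Substituting the privacy-calibrated $r^2 = \tilde\Theta(G^2 T\,\logErr/(n\eps)^2)$ and this $T$ into the accuracy constraint of regime (i), $\alpha^2 = \tilde\Omega(\step\,\ell\, d\, r^2)$, produces precisely the floor $\alpha = \tilde\Omega\!\pars{d^{1/4}/\sqrt{n\eps}}$. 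I expect the crux to be regime (ii) under \emph{privacy} noise: there the per-step scale $r$ is not a free parameter but is pinned by $(\eps,\delta,n,T)$ and is comparatively large, so one must show the escape completes within an episode short enough that the quadratic Taylor model (controlled only by $\rho$) stays valid, while simultaneously this noise must not push iterates far enough to break the trajectory-boundedness used in the privacy argument. Balancing these competing demands — enough noise to escape and to be private, little enough to preserve both the Taylor approximation and the trajectory bound — and threading the stochastic-gradient variance $\sigma^2$ through all of these estimates at once, is the main technical difficulty.
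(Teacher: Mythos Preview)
Your plan is essentially the paper's plan: privacy via a trajectory-based gradient bound feeding the (subsampled) Gaussian mechanism, convergence via a descent-lemma count of large-gradient steps plus a Jin-et-al.\ coupling argument for saddle escape, and then solving for the smallest $\alpha$ compatible with the privacy-calibrated noise. You also correctly locate the crux (the privacy noise scale is pinned by $(\eps,\delta,n,T)$ rather than tunable, so the escape analysis must tolerate large noise).

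One step does not go through as you wrote it. For the sensitivity bound you argue that the iterates stay in a bounded region and that ``$f$ (hence, after a mild normalization, each $f_i$) is bounded below'' plus smoothness then controls $\norm{\nabla f_i(\x t)}$. Under the paper's assumptions neither half holds: the domain is unbounded (so there is no a priori region containing the trajectory), and only $f$—not the individual $f_i$—is assumed smooth or bounded below, so you cannot invoke $\norm{\nabla f_i}^2 \le 2L(f_i - f_i^*)$. What the paper actually uses (Lemma~\ref{lem:grad_bound}, imported from \citet{DBLP:conf/iclr/FangLF023}) is: the descent lemma plus martingale concentration keep $f(\x t)-f^*$ bounded w.h.p.; smoothness of $f$ then bounds $\norm{\nabla f(\x t)}$ via $\norm{\nabla f}^2 \le 2L(f-f^*)$; and the norm-subgaussian concentration assumption (Assumption~\ref{ass:grad_concentration}) bounds $\norm{\nabla f_i(\x t) - \nabla f(\x t)}$ uniformly over $i$ and $t$ with high probability. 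That concentration assumption is precisely what replaces the Lipschitz hypothesis here, and your sketch should invoke it explicitly rather than an $f_i$-lower-bound argument that the assumptions do not support.
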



In non-private optimization, an arbitrary small $\alpha$ can be achieved at the cost of larger running time, with $\tOh{\nicefrac{\sigma^2}{\alpha^4}}$ being a standard number of stochastic oracle calls~\citep{arjevani_LowerBounds_2023}.
On the other hand, in differentially private optimization, there is a trade-off between privacy and accuracy, with $\nicefrac{d^{1/4}}{\sqrt{n \eps}}$ being the standard convergence rate.
Our result has a number of important differences when compared to recent related work on privately finding second-order stationary points.

\paragraph{Gradient clipping}
Our algorithm is very simple: it is essentially just ``vanilla'' SGD but with noise added to preserve privacy.
Contrary to that, the standard version of DP-SGD (e.g., from~\citet{abadi_DeepLearning_2016}) uses a technique known as \emph{gradient clipping}, where if the algorithm encounters a large gradient, it first scales the gradient to have a small norm before adding noise.
In general, clipping introduces non-vanishing bias term hurting convergence (see \citet{koloskova_RevisitingGradient_2023,chen_UnderstandingGradient_2020} for examples and convergence rates).
In contrast, based on ideas from~\citet{DBLP:conf/iclr/FangLF023}, we design a simpler differentially private algorithm where clipping is not necessary.

\paragraph{More involved algorithms.}
Most notably, there is a recent line of work on this problem (private second-order convergence), with the goal of achieving optimal (or near-optimal) rates.
This requires using more involved algorithms, often based on SPIDER~\citep{fang_SPIDERNearOptimal_2018} or SpiderBoost~\citep{wang_SpiderBoostMomentum_2019}.
In particular,~\citet{DBLP:conf/icml/AroraBGGMU23} introduced a private version SpiderBoost to find second-order stationary points privately, later refined by~\citet{liu2023private}. While these are important results, they require more involved algorithms, which are rarely used in practice.
SGD and DP-SGD are still the workhorses of modern (private) machine learning. Understanding their powers and limitations is the key motivation of our work.

\paragraph{Stronger Assumptions, Weaker Algorithms.} 
\citet{wang_DifferentiallyPrivate_2019} analyze simple first-order algorithms for private second-order convergence. They give upper bounds for private ERM in a number of settings, including finding approximate local minima. Our bounds offer two important improvements.

First,~\citet{wang_DifferentiallyPrivate_2019} only consider \emph{non-stochastic} gradients.
In other words, rather than computing stochastic gradients through random minibatches, in every iteration they use the entire dataset to compute an exact gradient. 
In practice, computing full gradients is usually prohibitively expensive.
Second, we make less restrictive assumptions on the loss functions: \citet{wang_DifferentiallyPrivate_2019}, as common in differentially private optimization, assume that the loss functions are Lipschitz continuous, while we assume that they are smooth~-- a standard assumption in non-convex optimization~-- and have a Lipschitz Hessian~-- a standard assumption for the second-order convergence~\citep{jin_NonconvexOptimization_2021,fang_SPIDERNearOptimal_2018}.
Replacing a Lipschitz assumption~\citep{Searcoid2006} with less restrictive assumptions has recently been studied for \emph{first}-order convergence of DP-SGD~\citep{DBLP:conf/iclr/FangLF023}; we continue this line of work by generalizing it to second-order convergence.
As in~\citet{DBLP:conf/iclr/FangLF023}, since we are making very limited assumptions about the loss function, it is not possible to show global optimality, unlike in~\citet{wang_DifferentiallyPrivate_2019}, who are able to do so by assuming Lipschitzness and regularization.

\paragraph{Broader Impact}
We show that noisy SGD without clipping finds second-order stationary points, which are good approximations of local minima.
In many cases, local minima are global minima (see e.g.~\citet{sun_CompleteDictionary_2017,ge_MatrixCompletion_2016a,kawaguchi_DeepLearning_2016}), and hence DP-SGD is capable of finding high-quality solutions while preserving privacy.

One of the main privacy applications is federated learning~\citep{kairouz_AdvancesOpen_2021}, where multiple clients cooperate in training a single model.
Our result implies that noisy distributed SGD preserves the privacy of its clients.
Our result shows that noisy SGD is a good choice for major areas with sensitive data, including healthcare~\citep{ficek2021differential,hassan_DifferentialPrivacy_2020}, Internet of Things~\citep{jiang_DifferentialPrivacy_2021}, geolocation~\citep{andres_GeoindistinguishabilityDifferential_2013}, etc.
For certain applications, such as natural language processing, differential privacy is crucial due to models potentially memorizing the inputs~\citep{carlini_SecretSharer_2019a}.

Aside from the above direct applications, differential privacy guarantee are useful for machine unlearning~\citep{sekhari_RememberWhat_2021}, which is an important direction, in particular, due to General Data Protection Regulation.
Since differential privacy guarantees that output is not sensitive to any given example, the model trained by noisy SGD is guaranteed to be able to unlearn a noticeable fraction of inputs.
Similarly, low sensitivity to individual samples makes the model trained with noisy SGD robust~\citep{lecuyer_CertifiedRobustness_2019}, i.e. a significant number of adversarial examples is needed to noticeably affect the model.
Moreover, such models have provable bounds on generalization~\citep{jung_NewAnalysis_2024}.

\subsection{Technical Overview}
 
As mentioned above, our algorithm is simply a ``vanilla'' SGD with noise.
Despite this simplicity, there are some interesting and important subtleties.
At a very high level, the noise we have to add to preserve privacy depends on the magnitude of the gradients we see in the algorithm.
The standard solution is to use gradient clipping, which guarantees that gradient norm is small compared with noise.
We, on the other hand, use the fact that standard assumptions in non-convex optimization imply that the gradient norms can be uniformly bounded depending on the initial objective value (as pointed out in~\citet{DBLP:conf/iclr/FangLF023} for private first-order convergence) to set the noise in such a way that clipping is essentially not necessary.

The above observation simplifies our analysis significantly: we can use the existing privacy analysis of DP-SGD with clipping almost as a black box, and can focus on proving convergence to a second-order stationary point of ``noisy'' SGD with no complications from clipping.  Our analysis of convergence has a number of technical subtleties, but is overall inspired by the non-private analysis of the second-order convergence of noisy SGD.  The main difference is that the noise we add for privacy is \emph{significantly} larger than the noise that we would have to add if our only goal was to escape from saddle points. 
Hence, proving convergence requires carefully arguing that this added noise is still dominated by the fast convergence of SGD.  


\subsection{Related Work}

\textbf{Convergence Analysis of DP-SGD.}  
DP-SGD with gradient clipping was initially proposed by~\citet{abadi_DeepLearning_2016}.
Asymptotically, under the Lipschitzness assumption, DP-SGD was shown to produce a tight utility-privacy tradeoff~\citep{bassily2019private}.
However, the understanding of DP-SGD remains limited without Lipschitz continuity.
The analysis of convergence rate of DP-SGD without Lipschitzness and bounded domain assumptions is a challenging task~\citep{wang2022differentially}.
On one hand, negative examples are shown in ~\citet{chen_UnderstandingGradient_2020,song2021evading} where DP-SGD in general might not converge. 
On the other hand, \citet{DBLP:conf/iclr/FangLF023} showed that DP-SGD (with or without clipping) converges in the unbounded domain without  Lipschitzness.
The main idea of~\citet{DBLP:conf/iclr/FangLF023} is to show that w.h.p. clipping never actually happens, at least under the widely used light-tail assumption (Assumption~\ref{ass:grad_concentration}).
Another work by~\citet{yang2022normalized} studied the convergence of DP-SGD under the generalized smoothness condition.
The concurrent work by~\citet{bu2023automatic} shows that a small clipping threshold can
yield promising performance in training language models.

The original implementation~\citep{abadi_DeepLearning_2016} of gradient clipping can be computationally inefficient as one needs to calculate the norm of the gradient for each individual sample in every iteration.
Due to its importance and centrality in private optimization and machine learning, there has been significant work on improving the efficiency of DP-SGD from both algorithmic and engineering perspectives; see~\citet{goodfellow2015efficient,abadi_DeepLearning_2016,rochette2019efficient,bu2021fast,subramani2021enabling} for some examples of such work. 

\textbf{Analysis of (private) convergence to an $\alpha$-SOSP.}
An \emph{$\alpha$-first-order stationary point} ($\alpha$-FOSP, Definition~\ref{def:fosp}) is point $\vx$ so that $\|\nabla f(\vx)\| \le \alpha$.
Analyses of convergence to an $\alpha$-FOSP are a cornerstone of non-convex optimization (see, e.g., classical texts~\citet{bertsekas_NonlinearProgramming_1997,nocedal_NumericalOptimization_1999}).
Quantitative analysis of convergence to an $\alpha$-second-order stationary point ($\alpha$-SOSP, Definition~\ref{def:sosp}) started with the breakthrough work by~\citet{DBLP:conf/colt/GeHJY15}, further refined by~\citet{jin_HowEscape_2017,carmon_AnalysisKrylov_2018,jin_AcceleratedGradient_2018,carmon_FirstOrderMethods_2020,jin_NonconvexOptimization_2021} and most recently in~\citet{zhang_EscapeSaddle_2021}, who show an almost optimal bound. 
Due to the prevalence of SGD in deep learning, stochastic methods have attracted the most attention (see~\citet{allen-zhu_NatashaFaster_2018,allen-zhu_NEON2Finding_2018,fang_SPIDERNearOptimal_2018,tripuraneni_StochasticCubic_2018,xu_FirstorderStochastic_2018,zhou_StochasticRecursive_2020,zhou_StochasticNested_2020} for the case of Lipschitz gradients and~\citet{DBLP:conf/colt/GeHJY15,daneshmand_EscapingSaddles_2018} for non-Lipschitz gradients).
For an in-depth summary of the previous work on unconstrained non-convex optimization, we refer the reader to~\citet{jin_NonconvexOptimization_2021}.

For private second-order convergence, a private version of SpiderBoost was proposed by~\citet{DBLP:conf/icml/AroraBGGMU23} and was then refined by~\citet{liu2023private}.
Since here we are interested in understanding the fundamental power of SGD rather than in obtaining optimal convergence rates, the main baseline we compare with is ~\citet{wang_DifferentiallyPrivate_2019,DBLP:conf/pkdd/WangX20}, which essentially uses noisy gradient descent.
Compared to this result, we allow stochastic gradients and make weaker assumptions about the structure of the problem. 


\textbf{Roadmap.} In Section~\ref{sec:pre} we describe the basic definitions and assumptions.  Section~\ref{sec:algorithm} describes our version of  DP-SGD without clipping, as well as the privacy analysis and the outline of the convergence analysis. Empirical evaluations are provided in section~\ref{sec:experiments}, with additional experimental results provided in Appendix~\ref{app:experimental_details}.
The proof details excluded from the main body can be found in Appendix~\ref{app:convergence_proof}.
\section{Preliminaries}
\label{sec:pre}

We use $\| \cdot \|$ to denote the vector 2-norm or matrix operator norm if not otherwise specified.
We use $\avg_{i=1}^n \cdots$ to denote $\frac{1}{n} \sum_{i=1}^n \cdots$.
\begin{definition}[Differential Privacy~\citep{DBLP:conf/eurocrypt/DworkKMMN06}]
    If for any two datasets $D$ and $D'$ which differ by a single data point and any $O\subseteq Range(\mathbb{A})$, an algorithm $\mathbb{A}$  satisfies 
    \[
        \P{\mathbb{A}(D)\in O} \leq e^\eps \cdot \P{\mathbb{A}(D')\in O} + \delta
        ,
    \]
    then algorithm $\mathbb{A}$ is said to be ($\eps, \delta)$-Differentially Private ($(\eps, \delta)$-DP).
\end{definition}


We want to minimize function $f(\vx) = \avg_{i=1}^N f_i(\vx)$, where each $f_i$ is a continuously differentiable function $\R^d \to \R$.  We make the following assumptions which are widely used for the analysis of high probability convergence of SGD~\cite{DBLP:journals/siamjo/NemirovskiJLS09, DBLP:journals/siamjo/GhadimiL13a, fang_SPIDERNearOptimal_2018, DBLP:conf/colt/HarveyLPR19, feldman2020private,DBLP:conf/iclr/FangLF023}.\footnote{
    Assumption~\ref{ass:main}.\ref{ass:stoc} is usually stated as $\Exp{\SqrNorm{g(\vx) - \nabla f(\vx)}} \le \sigma^2$, where $g$ is the stochastic gradient oracle.
    Assumptions~\ref{ass:main}.1 - \ref{ass:main}.3 are standard in non-convex settings for convergence to a first-order stationary point, while additional Assumption~\ref{ass:main}.\ref{ass:lip_hes} is standard for convergence to a second-order stationary point.
}
%
\begin{assumption}
    \label{ass:main}
    The objective function $f$ and the component functions $f_i$ satisfy the following conditions.
    \begin{enumerate}
        \item (Boundedness) 
            There exists $\fmax$ such that $\fmax > f(\x{0}) - f(\vx^*)$.
        \item\label{ass:stoc} (Stochastic variance) The stochastic variance is bounded by $\sigma^2$, i.e. $\avg_{i=1}^N \norm{\nabla f_i(\vx) - \nabla f(\vx)}^2 \le \sigma^2$.
        \item (Smoothness) The objective function $f$ is $L$-smooth, i.e. $\norm{\nabla f(\vx) - \nabla f(\vy)} \le L \norm{\vx - \vy}$ for all $\vx, \vy \in \R^d$.
        \item\label{ass:lip_hes} (Lipschitz Hessian) The objective function $f$ has a $\rho$-Lipschitz Hessian, i.e. $\norm{\nabla^2 f(\vx) - \nabla^2 f(\vy)} \le \rho \norm{\vx - \vy}$ for all $\vx, \vy \in \R^d$.
    \end{enumerate}
\end{assumption}
For a minibatch $\batch{t}$, we define $f_{\batch{t}} = \avg_{i \in \batch{t}} f_i$.
\begin{definition}
    \label{def:norm_subgaussian}
    A random vector $X$ is norm-subgaussian (or $\text{nSG}(\sigma^2)$) if $\Exp{\exp\pars{\frac{\SqrNorm{X - \Exp{X}}}{\sigma^2}}} \le e$.
\end{definition}
We make the following assumption, which is used to obtain high-probability bounds.
\begin{assumption}[Gradient concentration]
    \label{ass:grad_concentration}
    For any $\vx$, the stochastic gradient at $\vx$ is $\text{nSG}(\sigma^2)$:
    \[
        \avg\nolimits_{i=1}^N \exp \pars{\frac{\norm{\nabla f_i(\vx) - \nabla f(\vx)}^2}{\sigma^2}} < e
    \]
\end{assumption}
The above assumption is required in \Cref{lem:grad_bound} to get logarithmic dependence on $\delta$, but is not required in our convergence analysis if one is willing to add sufficiently large noise.

%

An approximate first-order stationary point requires that the gradient is small.
\begin{definition}[$\alpha$-FOSP]
    \label{def:fosp}
    We say that $\vx$ is an $\alpha$-first order stationary point ($\alpha$-FOSP) for a differentiable function $f$ if $\|\nabla f(\vx)\| < \alpha$.
\end{definition}
A second-order stationary point additionally requires that the function is almost convex near the point.
\begin{definition}[$\alpha$-SOSP~\citep{nesterovP06}]
    \label{def:sosp}
    We say that $\vx$ is an $\alpha$-second order stationary point ($\alpha$-SOSP) for a twice-differentiable function $f$ if $\|\nabla f(\vx)\| < \alpha$ and $\lmin(\nabla^2 f(\vx)) > -\sqrtra$.
\end{definition}

\section{Algorithm and Analysis}
\label{sec:algorithm}
Algorithm~\ref{alg:old_dpsgd} shows standard DP-SGD with clipping~\citep{abadi_DeepLearning_2016}.
Algorithm~\ref{alg:dpsgd} shows our simpler version of DP-SGD without clipping which, as we show next, still guarantees second-order convergence and differential privacy even without Lipschitz assumption.

\begin{algorithm}[t!]
    \caption{Differentially private Clipped SGD}
    \label{alg:old_dpsgd}
\begin{algorithmic}
    \STATE {\bfseries Input:} Clipping threshold $C$, noise variance $\Delta^2$, batch size $B$, number of iterations $T$
    \FOR{$t = 0, 1, 2, \ldots, T-1$}
        \STATE Sample minibatch $\batch{t}$ of size $B$ \\
        \STATE $\g{t}^{(i)} \gets \nabla f_i(\x{t}) / \max \pars{1, \frac{\|\nabla f_i(\x{t})\|}{C}}$ for each $i \in \batch{t}$  \\
        \STATE Sample $\xi_t \sim \mathcal{N}(0, \Delta^2 \id)$ \\
        \STATE $\x{t + 1} \gets \x{t} - \step \pars{\avg_{i \in \batch{t}} \g{t}^{(i)} + \xi_t}$
    \ENDFOR
\end{algorithmic}
\end{algorithm}

\begin{algorithm}[t!]
    \caption{Differentially private SGD without clipping } 
    \label{alg:dpsgd}
\begin{algorithmic}
    \STATE {\bfseries Input:} Batch size $B$, number of iterations $T$
    \STATE Choose noise scale $\Delta$ as in Theorem~\ref{thm:dp}
    \FOR{$t = 0, 1, 2, \ldots, T-1$}
        \STATE Uniformly sample minibatch $\batch{t}$ of size $B$ \\
        \STATE Sample $\xi_t \sim \mathcal{N}(0, \Delta^2 \id)$ \\
        \STATE $\x{t+1} \gets \x{t} - \step \pars{\avg_{i \in \batch{t}} \nabla f_i(\x{t}) + \xi_t}$
    \ENDFOR
\end{algorithmic}
\end{algorithm}



We begin in Section~\ref{sec:DP} by showing that Algorithm~\ref{alg:dpsgd} is indeed differentially private.
The proof of second-order convergence is more involved, so we give an outline of our analysis and some of the proofs in Section~\ref{ssec:convergence} (the details can be found in the Appendix).
We then combine the privacy and convergence analysis to give our main overall result in Section~\ref{ssec:combine}.
In Section~\ref{ssec:high_prob} we describe how to find a second-order stationary point w.h.p.
In this section, $\tOh{\cdot}$ hides polynomial dependence on $L, \rho, \fmax$, and polylogarithmic dependence on all parameters.

\subsection{Analysis of Differential Privacy} \label{sec:DP}

The proof that \cref{alg:dpsgd} is differentially private is quite intuitive: we use the fact from previous work that \cref{alg:old_dpsgd} is differentially private, and then show that, thanks to the way we set the noise in \cref{alg:dpsgd}, clipping is unlikely to happen and thus the two algorithms behave identically.  So the only difference in privacy is the small probability that clipping \emph{does} occur, which can be folded into the $\delta$ parameter of the privacy guarantee.  

\begin{theorem}[Standard DP for clipped SGD~\citep{abadi_DeepLearning_2016}]
    \label{thm:standard_dp}
    There exist constants $c_1$ and $c_2$, such that for any $\eps < c_1 T B^2 / n^2$, Algorithm~\ref{alg:old_dpsgd} is $(\eps, \delta)$-DP for any $\delta > 0$ if $\Delta > c_2 \frac{C \sqrt{T \log \nicefrac{1}{\delta}}}{n \eps}$.
\end{theorem}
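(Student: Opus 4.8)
\emph{Overall approach.} The plan is to recover this statement as a restatement, in our normalization, of the moments-accountant guarantee of \citet{abadi_DeepLearning_2016} (their Theorem~1). The argument decomposes into three pieces: (i) a per-iteration $\ell_2$-sensitivity computation, (ii) privacy amplification by subsampling together with a bound on the log-moments of one subsampled Gaussian step, and (iii) moment composition over the $T$ iterations followed by conversion to an $(\eps,\delta)$ statement.

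\emph{Per-iteration sensitivity and noise multiplier.} In iteration $t$, \cref{alg:old_dpsgd} perturbs $\avg_{i \in \batch{t}} \g{t}^{(i)}$, which is an average of $B$ vectors each of Euclidean norm at most $C$ because of the clipping step $\g{t}^{(i)} \gets \nabla f_i(\x{t}) / \max\pars{1, \nicefrac{\|\nabla f_i(\x{t})\|}{C}}$. Changing a single data point alters at most one summand, so the $\ell_2$-sensitivity of this average is at most $\nicefrac{2C}{B}$ (and $\nicefrac{C}{B}$ for add/remove adjacency). Hence adding $\xi_t \sim \mathcal N(0, \Delta^2 \id)$ is, after rescaling, exactly the Gaussian mechanism with noise multiplier $z := \nicefrac{\Delta B}{(2C)}$; the factor $2$ gets absorbed into $c_2$. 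This step is the only content specific to our formulation.

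\emph{Subsampled Gaussian log-moments.} Since $\batch{t}$ is a uniform size-$B$ subset of the $n$ points, step $t$ is a subsampled Gaussian mechanism with sampling rate $q = \nicefrac{B}{n}$. The core lemma of \citet{abadi_DeepLearning_2016} bounds the $\lambda$-th log-moment $\alpha_t(\lambda)$ of the step-$t$ privacy-loss random variable by $\alpha_t(\lambda) = \Oh{\nicefrac{q^2 \lambda^2}{z^2}}$ for all integers $\lambda$ up to roughly $z^2$ (equivalently $\lambda \lesssim \nicefrac{Cn}{(\Delta B)}$); the essential point is the \emph{quadratic} $q^2$ dependence rather than linear, obtained from a careful binomial expansion of the moment of the mixture $(1-q)\mathcal N(0,z^2) + q\,\mathcal N(1,z^2)$ against $\mathcal N(0,z^2)$ and a tail estimate on the higher-order terms in the valid range of $\lambda$. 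Establishing this $q^2$ bound is the delicate part and the main obstacle; everything else is bookkeeping. (A minor subtlety: \citet{abadi_DeepLearning_2016} use Poisson subsampling while \cref{alg:old_dpsgd} uses fixed-size subsampling, but the amplification holds for both up to constants, folded into $c_1, c_2$.)

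\emph{Composition and conversion.} Log-moments add under composition, so after $T$ steps $\alpha(\lambda) = \sum_t \alpha_t(\lambda) = \Oh{\nicefrac{T q^2 \lambda^2}{z^2}}$. The standard tail bound gives $(\eps,\delta)$-DP whenever $\delta \ge \exp\pars{\alpha(\lambda) - \lambda \eps}$, so it suffices to pick an admissible $\lambda \approx \nicefrac{2\log \nicefrac{1}{\delta}}{\eps}$ with $\alpha(\lambda) \le \nicefrac{\lambda\eps}{2}$. The requirement that this $\lambda$ lie in the admissible range $\lambda \lesssim z^2$, together with $\nicefrac{Tq^2\lambda^2}{z^2} \lesssim \lambda\eps$, reduces after substituting $z = \nicefrac{\Delta B}{(2C)}$ and $q = \nicefrac{B}{n}$ to exactly the two stated conditions $\eps < c_1 \nicefrac{T B^2}{n^2}$ and $\Delta > c_2 \nicefrac{C\sqrt{T \log \nicefrac{1}{\delta}}}{(n\eps)}$. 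Tracking the absolute constants through \citet{abadi_DeepLearning_2016} yields $c_1, c_2$. In the write-up I would simply cite their Theorem~1 and present only the sensitivity/normalization check of step~(ii).
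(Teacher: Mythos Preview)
The paper does not prove this theorem at all: it is stated as a black-box citation of Theorem~1 in \citet{abadi_DeepLearning_2016} and is used directly in the proof of \cref{thm:dp} without further justification. Your proposal goes beyond what the paper does by sketching the moments-accountant argument from the cited reference; the sketch is correct (sensitivity computation, the $q^2$ subsampled-Gaussian moment bound, additive composition, and the tail conversion), and your derivation of how the two conditions $\eps < c_1 T B^2/n^2$ and $\Delta > c_2 C\sqrt{T\log(1/\delta)}/(n\eps)$ arise from choosing $\lambda \approx \log(1/\delta)/\eps$ within the admissible range is the right bookkeeping. For the purposes of matching the paper, a single sentence citing \citet{abadi_DeepLearning_2016} would have sufficed.
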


We will use the following bound on the gradients from~\citet{DBLP:conf/iclr/FangLF023}, who proved first-order convergence for DP-SGD without Lipschitz assumption.

\begin{lemma}[Bound on the gradient; Proposition 4.3 from \citet{DBLP:conf/iclr/FangLF023}, simplified]
    \label{lem:grad_bound}

    Under Assumptions~\ref{ass:main} and~\ref{ass:grad_concentration}, when $\eta \le \min(1/L, 1 / (\sigma \sqrt{T}))$, with probability at least $1 - \delta$, for some absolute constant $c$:
    \begin{align*}
        &\max_{i \in [n], t = 0, ..., T-1} \|\nabla f_i(\x{t})\|
        \le 2 \sqrt{L \fmax} + \\
        &\qquad+ c \pars{L \sqrt{\log \nicefrac{T}{\delta}}
            + \sigma \sqrt{\log (n T)} + \sqrt{\sigma \log \nicefrac{1}{\delta}}
        }
        .
    \end{align*}
\end{lemma}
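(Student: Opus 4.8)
The plan is to reduce everything to a bound on the \emph{objective value} along the trajectory and then invoke $L$-smoothness. For an $L$-smooth $f$ attaining its minimum $f^* := f(\vx^*)$, one has the standard inequality $\|\nabla f(\vx)\|^2 \le 2L\,(f(\vx) - f^*)$ for all $\vx$. Hence it suffices to show that, with probability at least $1-\delta$, $f(\x t) - f^* = O(\fmax)$ plus lower-order additive terms, simultaneously for every $t \le T-1$; taking square roots and multiplying by $\sqrt{2L}$ then yields the $2\sqrt{L\fmax}$ main term together with the two $\log$-dependent additive terms. Passing from $\nabla f$ to the component gradients is a union bound: $\|\nabla f_i(\x t)\| \le \|\nabla f(\x t)\| + \|\nabla f_i(\x t) - \nabla f(\x t)\|$, and by Assumption~\ref{ass:grad_concentration} the deviation $\|\nabla f_i(\x t) - \nabla f(\x t)\|$ for a uniformly random $i$ is $\text{nSG}(\sigma^2)$, so it is $O(\sigma\sqrt{\log(nT/\delta)})$ after a union bound over all $n$ components and $T$ iterates -- this is the source of the $\sigma\sqrt{\log(nT)}$ term.

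The core is the uniform-in-$t$ bound on the objective gap. Apply the smoothness descent inequality $f(\x{t+1}) \le f(\x t) + \langle \nabla f(\x t), \x{t+1}-\x t\rangle + \tfrac{L}{2}\|\x{t+1}-\x t\|^2$ to the step $\x{t+1} - \x t = -\eta\big(\avg_{i\in\batch t}\nabla f_i(\x t) + \xi_t\big)$, write the total mean-zero perturbation as $\zeta_t := \avg_{i\in\batch t}\nabla f_i(\x t) + \xi_t - \nabla f(\x t)$, and telescope from $0$ to $t$:
\[ f(\x t) - f^* \le \fmax - c_1\eta\sum_{s<t}\|\nabla f(\x s)\|^2 - \eta\sum_{s<t}\langle\nabla f(\x s),\zeta_s\rangle + c_2 L\eta^2\sum_{s<t}\|\zeta_s\|^2, \]
where $\eta \le 1/L$ is used to keep the coefficient of $\sum\|\nabla f(\x s)\|^2$ negative. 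It remains to control the two perturbation sums uniformly in $t$ with probability $1-\delta$. Each $\|\zeta_s\|^2$ is sub-exponential with scale $\sigma^2$ (the squared norm-subgaussian increment, plus the Gaussian privacy noise), so a Bernstein bound gives $\sum_{s<T}\|\zeta_s\|^2 = O(T\sigma^2 + \sigma^2\log\tfrac{1}{\delta})$, and $\eta^2 \le 1/(\sigma^2 T)$ makes $L\eta^2\cdot T\sigma^2 = O(L\eta) = O(1)$. The linear term is a martingale whose $s$-th increment is $\text{nSG}$ with scale $\|\nabla f(\x s)\|\sigma$; a norm-subgaussian martingale concentration inequality of Freedman/Azuma type bounds $|\sum_{s<t}\langle\nabla f(\x s),\zeta_s\rangle|$, uniformly over $t$, by roughly $\sqrt{\sigma^2\log\tfrac{T}{\delta}\sum_{s<t}\|\nabla f(\x s)\|^2}$ plus a lower-order additive piece. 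Using Young's inequality $\sqrt{ab}\le\tfrac{1}{2}(a/c_1 + c_1 b)$ to re-absorb $\sum\|\nabla f(\x s)\|^2$ into the negative term, the inequality becomes self-referential; closing it with a short induction on $t$ (or a stopping-time argument) gives $f(\x t)-f^* = O(\fmax) + O(\sigma\log\tfrac{1}{\delta}) + (\text{lower order})$ for all $t$, from which the $L\sqrt{\log(T/\delta)}$ and $\sqrt{\sigma\log(1/\delta)}$ additive terms emerge after the square root.

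The main obstacle is exactly this circularity combined with the demand for a \emph{uniform} bound: the concentration of the martingale cross-term depends, through its variance proxy $\sum_{s<t}\|\nabla f(\x s)\|^2$, on the very trajectory we are bounding, and the estimate must hold at every $t$ at once, so the argument has to be set up as a bootstrap / stopping time and the failure probability must be prevented from accumulating over the $T$ steps -- this is what forces the $\log T$ factors. A secondary point worth care is that, since no Lipschitz condition is assumed, there is genuinely no a priori bound on $\|\nabla f_i\|$, so the whole argument must be routed through the objective value and $L$-smoothness rather than bounded directly; and one must verify that in the regime $\eta \le \min(1/L,\,1/(\sigma\sqrt T))$ the accumulated privacy noise contributes nothing beyond the $\zeta_t$-based sums already handled above.
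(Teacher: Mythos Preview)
The paper does not prove this lemma at all: it is quoted verbatim as ``Proposition~4.3 from \citet{DBLP:conf/iclr/FangLF023}, simplified'' and used as a black box in the proof of Theorem~\ref{thm:dp}. There is therefore no in-paper proof to compare your proposal against.

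That said, your outline is a sensible reconstruction of how such a bound is typically obtained, and it matches the spirit of the argument in the cited reference: route the bound on $\|\nabla f(\x t)\|$ through the smoothness inequality $\|\nabla f(\vx)\|^2 \le 2L(f(\vx)-f^*)$, control $f(\x t)-f^*$ uniformly in $t$ via the telescoped descent lemma plus a Freedman/Azuma-type bound on the martingale cross term (re-absorbing the data-dependent variance proxy via Young's inequality), and finally pass to the component gradients via the deterministic consequence of Assumption~\ref{ass:grad_concentration} that $\max_i \|\nabla f_i(\vx)-\nabla f(\vx)\| \le \sigma\sqrt{1+\log n}$ for every $\vx$. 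One point to be careful about: as stated, the lemma's hypotheses and conclusion involve only $\sigma$ and not the privacy-noise scale $\Delta$, so the trajectory being analyzed here is the plain (noisy-)SGD one; your write-up folds the Gaussian privacy noise $\xi_t$ into $\zeta_t$, which would in general introduce a $d\Delta^2$ contribution that the stated bound does not contain. In the paper's usage this is harmless (Theorem~\ref{thm:dp} only needs that the gradients along the trajectory are bounded by the constant $C$, and the dependence on $\Delta$ is absorbed into the choice of $\Delta$ itself), but for a clean proof of the lemma as stated you should treat the iterates as those of SGD with mean-zero noise of scale $\sigma$ only.
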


We can now prove our main differential privacy bound.
\begin{theorem}
    \label{thm:dp}
     Let $\eta \le \min(1/L, 1 / (\sigma \sqrt{T}))$.
     There exist $c_1, c_2 > 0$, such that for any $\delta > 0, \eps < c_1 T B^2 / n^2$, Algorithm~\ref{alg:dpsgd} is $(\eps, 2\delta)$-DP if $\Delta > c_2 \frac{C \sqrt{T \log \nicefrac{1}{\delta}}}{n \eps}$, where \\ $C = 2 \sqrt{L \fmax} + c \pars{L \sqrt{\log \nicefrac{T}{\delta}} + \sigma \sqrt{\log (n T)} + \sqrt{\sigma \log \nicefrac{1}{\delta}}}$.
\end{theorem}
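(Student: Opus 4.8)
The plan is to reduce the claim to the known privacy guarantee for \emph{clipped} DP-SGD. Write $\mathbb{A}$ for \Cref{alg:dpsgd} and $\mathbb{A}_{\mathrm{clip}}$ for \Cref{alg:old_dpsgd} run with clipping threshold equal to the constant $C$ in the statement, noise scale $\Delta$, batch size $B$, and $T$ iterations. The key point is that, with this $C$ and $\Delta$, the per-sample clipping operation inside $\mathbb{A}_{\mathrm{clip}}$ never actually fires, except on an event of probability at most $\delta$; on the complementary event $\mathbb{A}$ and $\mathbb{A}_{\mathrm{clip}}$ produce identical outputs. Privacy of $\mathbb{A}$ then inherits from privacy of $\mathbb{A}_{\mathrm{clip}}$, with this $\delta$-probability ``bad'' event folded into the failure parameter.

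I would carry this out in four steps. (i) Invoke \Cref{thm:standard_dp} with its constants $c_1, c_2$: the hypotheses of the present theorem are exactly that $\eps < c_1 T B^2/n^2$ and $\Delta > c_2\, C \sqrt{T \log \nicefrac{1}{\delta}}/(n\eps)$, so $\mathbb{A}_{\mathrm{clip}}$ is $(\eps,\delta)$-DP. (ii) Couple $\mathbb{A}$ and $\mathbb{A}_{\mathrm{clip}}$ on a fixed dataset by feeding them the same minibatches $\batch{t}$ and the same Gaussian vectors $\xi_t$ at every step, and let $E$ be the event $\set{\norm{\nabla f_i(\x{t})} \le C \text{ for all } t < T,\ i \in \batch{t}}$, where $\x{t}$ denotes the iterates of $\mathbb{A}$. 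An induction on $t$ shows that on $E$ the two trajectories coincide: if they agree through step $t$ and no component gradient at $\x{t}$ exceeds $C$, then each clip factor $\max\pars{1, \norm{\nabla f_i(\x{t})}/C}$ equals $1$, so the clipped update reduces to the unclipped one and the iterates agree at step $t+1$. (iii) Bound $\P{\neg E}$: the hypotheses of \Cref{lem:grad_bound} hold (we assume $\eta \le \min(1/L, 1/(\sigma\sqrt T))$, and Assumptions~\ref{ass:main} and~\ref{ass:grad_concentration}), and $C$ is set to exactly the bound that lemma provides, so $\max_{i,t}\norm{\nabla f_i(\x{t})} \le C$ with probability at least $1-\delta$, i.e.\ $\P{\neg E}\le\delta$. (iv) Combine: since $\mathbb{A}$ equals $\mathbb{A}_{\mathrm{clip}}$ except on an event of probability $\le\delta$ for every dataset, in particular for any two adjacent $D$ and $D'$, a standard failure-event argument upgrades the $(\eps,\delta)$-DP of $\mathbb{A}_{\mathrm{clip}}$ to $(\eps,2\delta)$-DP of $\mathbb{A}$.

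I expect steps (ii)--(iii) to be the crux. \Cref{lem:grad_bound} controls the gradients \emph{along the trajectory of $\mathbb{A}$} (the algorithm \emph{without} clipping), so one must check, via the coupling and induction above, that this is precisely the trajectory that $\mathbb{A}_{\mathrm{clip}}$ follows on the good event $E$ --- otherwise appealing to \Cref{thm:standard_dp}, a statement about $\mathbb{A}_{\mathrm{clip}}$, would be circular. The remaining work in step (iv) is bookkeeping: the clipping-failure event must be charged on both $D$ and $D'$, and the $O(1)$ multiplicative loss ($e^\eps$) is carried in the usual way; this is what pins down the additive constant in front of $\delta$, namely the ``$2\delta$'' in the statement.
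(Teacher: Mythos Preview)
Your proposal is correct and follows essentially the same approach as the paper: invoke \Cref{thm:standard_dp} for the clipped algorithm, use \Cref{lem:grad_bound} to show clipping never fires except with probability $\le\delta$, and fold that failure into the $\delta$ parameter to obtain $(\eps,2\delta)$-DP. Your write-up is in fact more careful than the paper's, which sketches the same argument without spelling out the coupling, the induction that the two trajectories coincide on the good event, or the potential circularity you flag in step~(iii).
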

\begin{proof}
    %
    From Theorem~\ref{thm:standard_dp}, we know that Algorithm~\ref{alg:old_dpsgd} is $(\eps,\delta)$-DP for our choice of $\Delta$.
    For our Algorithm~\ref{alg:dpsgd}, let $\mathcal{E}$ be the event that all gradient norms are less than $C$.
    By Lemma~\ref{lem:grad_bound}, all gradient norms are less than $C$ with probability at least $1 - \delta$, so $\mathcal{E}$ holds with high probability $1-\delta$.

    So by \cref{thm:standard_dp}, if we condition on $\mathcal{E}$, then \cref{alg:dpsgd} is $(\epsilon, \delta)$-private.
    On the other hand, if $\mathcal{E}$ doesn't hold, then we have no privacy guarantee, but this happens with probability at most $\delta$.
    Hence, the overall privacy guarantee is $(\epsilon, 2\delta)$ as claimed.
  %
    %
\end{proof}

\subsection{Outline of the Convergence Analysis}
\label{ssec:convergence}

In this section, we show that, for a fixed $\Delta^2$, for any $\alpha$, and for a certain choice of $\step$ and $T$, Algorithm~\ref{alg:dpsgd} finds an $\alpha$-SOSP.
In Section~\ref{ssec:combine}, we show for which $\alpha$ we guarantee both differential privacy and convergence.
Our main result shows that at least a constant fraction of iterates are $\alpha$-SOSP.
\begin{theorem}
    \label{thm:convergence_main}
    Under Assumption~\ref{ass:main}, when $\Delta^2 \ge \sigma^2 / B$ and $\step = \tOh{\alpha^2 / d \Delta^2}$
    for $T = 64 \fmax / (\step \alpha^2)$, w.h.p. at least half of visited iterates of \Cref{alg:dpsgd} are $\alpha$-SOSP.
\end{theorem}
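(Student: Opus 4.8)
The plan is to follow the classical template for second-order convergence of perturbed SGD (Ge et al.; Jin et al.; Daneshmand et al.), while carefully tracking that the injected noise has per-coordinate variance $\Delta^2 \ge \sigma^2/B$, i.e.\ possibly far larger than in the non-private analysis. By Definition~\ref{def:sosp}, an iterate $\x{t}$ fails to be an $\alpha$-SOSP exactly when either (i)~$\norm{\nabla f(\x{t})} \ge \alpha$ (a \emph{large-gradient} iterate) or (ii)~$\norm{\nabla f(\x{t})} < \alpha$ but $\lmin\pars{\nabla^2 f(\x{t})} \le -\sqrtra$ (a \emph{saddle} iterate). We bound the number of iterates of each type by a definite amount of objective decrease, and use that $f$ decreases by at most $\fmax$ overall (Assumption~\ref{ass:main}.1). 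Throughout, write $\g{t} = \avg_{i \in \batch{t}} \nabla f_i(\x{t})$, so the minibatch noise $\g{t} - \nabla f(\x{t})$ has second moment at most $\sigma^2/B \le \Delta^2$, and the total effective per-step noise $\g{t} - \nabla f(\x{t}) + \ourNoise{t}$ is norm-subgaussian with parameter $\Oh{\Delta^2}$ (dominated by the Gaussian $\ourNoise{t}$); this is what powers all high-probability statements below, so no additional concentration assumption on the $f_i$ is needed here.

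\emph{Step 1 (descent on large-gradient iterates).} By $L$-smoothness,
\[
f(\x{t+1}) \le f(\x{t}) - \step \InnerProd{\nabla f(\x{t}),\, \g{t} + \ourNoise{t}} + \tfrac{L \step^2}{2}\SqrNorm{\g{t} + \ourNoise{t}}.
\]
Taking conditional expectation, using $\E\sqbr{\g{t}\mid \x{t}} = \nabla f(\x{t})$, $\E\SqrNorm{\ourNoise{t}} = d \Delta^2$, and $\step = \tOh{\alpha^2/(d\Delta^2)} \le 1/L$, gives $\Exp{f(\x{t+1}) \mid \x{t}} \le f(\x{t}) - \tfrac{\step}{4}\SqrNorm{\nabla f(\x{t})} + \Oh{L \step^2 d \Delta^2}$. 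Choosing the hidden constant in $\step$ small enough (in particular, small relative to $L$) makes the error term at most $\tfrac{\step\alpha^2}{8}$, so every large-gradient iterate decreases the conditional expectation of $f$ by $\Om{\step\alpha^2}$. Summing over $t = 0,\dots,T-1$, controlling the martingale term $\sum_t \step\InnerProd{\nabla f(\x{t}),\, \g{t} + \ourNoise{t} - \nabla f(\x{t})}$ and the accumulated smoothness terms $\sum_t \tfrac{L\step^2}{2}\SqrNorm{\ourNoise{t}} = \Oh{T L \step^2 d\Delta^2}$ by Azuma-type bounds for norm-subgaussian increments (the latter is $\le \fmax/2$ by the choice of $\step$ and $T$), and invoking $f(\x{0}) - f(\x{T}) < \fmax$, we conclude that with high probability the number of large-gradient iterates is $\Oh{\fmax/(\step\alpha^2)} \le T/4$.

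\emph{Step 2 (escaping saddles despite large noise).} Fix a saddle iterate $\x{t_0}$ and let $\vv$ be a minimum-eigenvalue direction of $\nabla^2 f(\x{t_0})$. Run the ``improve-or-localize'' coupling over an epoch of $\escIter = \tOh{1/(\step\sqrtra)}$ iterations: compare two runs sharing all randomness except the $\vv$-component of the noise; to first order their difference obeys $\x{t+1} - \tx{t+1} \approx \pars{\id - \step \nabla^2 f(\x{t_0})}\pars{\x{t} - \tx{t}}$, and since $\nabla^2 f(\x{t_0})$ has an eigenvalue $\le -\sqrtra$, this gap grows geometrically, so at least one run leaves the ball of radius $\escRad = \tOh{\sqrt{\alpha/\rho}}$ around $\x{t_0}$ within $\escIter$ steps. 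By $L$-smoothness and the $\rho$-Lipschitz Hessian this forces an objective decrease of $\df = \tOm{\sqrtfacr}$ with probability at least $1-q$ for polynomially small $q$. The genuinely new ingredient relative to the non-private proof is that the injected noise makes the iterate perform a random walk of per-step size $\approx \step\sqrt{d}\,\Delta$, hence total size $\approx \sqrt{\escIter}\,\step\sqrt{d}\,\Delta$ over the epoch; we must choose $\escIter$ and $\escRad$ (equivalently, constrain $\step = \tOh{\alpha^2/(d\Delta^2)}$) so that this wandering stays inside the region where the quadratic Taylor expansion around $\x{t_0}$ is accurate up to the $\rho$-correction, while the larger noise only helps, by seeding a larger $\vv$-component. \emph{This is the main obstacle}: showing that the geometric growth along $\vv$ still dominates the accumulated $\rho$-Hessian perturbation over the full epoch once the noise is blown up to the privacy scale.

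\emph{Step 3 (counting).} By the parameter choices, $\df = \tOm{\escIter\step\alpha^2}$, so charging an entire escape epoch to the single saddle iterate that triggered it still yields an amortized decrease $\Om{\step\alpha^2}$ per iterate on every bad iterate. A standard epoch decomposition — split $\{0,\dots,T-1\}$ into consecutive blocks of length $\escIter$, using a half-block shift across two griddings so that every saddle iterate lies in the first half of some block — together with Step~2 gives: the number of saddle iterates is $\Oh{\fmax\,\escIter/\df} = \Oh{\fmax/(\step\alpha^2)} \le T/4$, after a union bound over the $\Oh{T/\escIter}$ epochs absorbs the per-epoch failure probability $q$ into the overall high-probability event. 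Combining with the $\le T/4$ large-gradient iterates from Step~1, the total number of non-SOSP iterates is at most $T/2$; hence, with high probability, at least half of the visited iterates of Algorithm~\ref{alg:dpsgd} are $\alpha$-SOSP.
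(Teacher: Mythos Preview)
Your plan follows essentially the same architecture as the paper's proof: a high-probability Descent Lemma bounds the number of large-gradient iterates by $T/4$ (paper's Lemma~\ref{lem:quarter_large_grad}), an improve-or-localize lemma plus a two-run coupling along the minimum-eigenvector direction bounds the saddle iterates by $T/4$ (paper's Lemmas~\ref{lem:improve_or_localize}--\ref{lem:quarter_saddle}), and the two bounds combine. Two technical points in your write-up would not go through as stated.

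\textbf{The nSG claim.} You assert that $\g{t} - \nabla f(\x{t}) + \ourNoise{t}$ is $\mathrm{nSG}\pars{\Oh{\Delta^2}}$ and that ``this is what powers all high-probability statements below, so no additional concentration assumption on the $f_i$ is needed.'' This is false on two counts. First, the Gaussian part alone is $\mathrm{nSG}(\Theta(d\Delta^2))$, not $\Theta(\Delta^2)$. Second, and more importantly, under Assumption~\ref{ass:main} alone the minibatch error has only bounded second moment $\sigma^2/B$; adding an independent Gaussian does not make a possibly heavy-tailed random variable subgaussian. The paper's main appendix proof in fact invokes nSG concentration (hence implicitly Assumption~\ref{ass:grad_concentration}); to match the theorem's hypotheses one must treat the minibatch-noise terms separately with Chebyshev/Markov (which the paper does in a dedicated section), and this changes the required inequality between $\Delta^2$ and $\sigma^2/B$ by a failure-probability factor.

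\textbf{Per-epoch escape probability.} In Step~2 you claim the escape from a saddle succeeds with probability $1-q$ for \emph{polynomially small} $q$, and in Step~3 you union-bound over the $\Oh{T/\escIter}$ epochs. The coupling argument you describe does not yield this: the lower bound on the $\vv$-component of the accumulated noise is a one-dimensional Gaussian anti-concentration estimate, which gives only a \emph{constant} success probability per epoch (the paper obtains $\ge 1/3$ in Lemma~\ref{lem:good_term} and Lemma~\ref{lem:bucket_change}). With constant per-epoch failure you cannot union-bound over $T/\escIter$ epochs. The correct aggregation is a supermartingale/averaging argument: over many disjoint escape windows a constant fraction succeed w.h.p., so the cumulative descent exceeds $\fmax$ if more than $T/4$ iterates are saddles. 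Your Step~3 should be rewritten along these lines rather than as a union bound.
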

%
We would like to point out that the fraction $1/2$ is somewhat arbitrary; we can guarantee that any constant fraction of the iterates are $\alpha$-SOSP, at the cost of increased number of iterations.
The full proof of this theorem can be found in Appendix~\ref{app:convergence_proof}; we outline the proof here.

The proof of convergence follows the general structure of that of~\citet{jin_NonconvexOptimization_2021}.
However, unlike the standard stochastic gradient settings, when we select the noise variance $\Delta^2$ guaranteeing the best convergence rate, for noisy SGD the variance must be chosen according to Theorem~\ref{thm:dp}.
Moreover, by construction, the noise is large compared with the gradients: its norm is proportional to $C \sqrt{d}$, where $C$ upper-bounds gradient norms.
Hence, our analysis must be able to handle the case when the noise is large.

In the large noise settings, we can assume smoothness of the objective function instead of smoothness of each component function.
\citet{jin_NonconvexOptimization_2021} require $\tilde{O}(\sigma^2 / \alpha^4)$ iterations to find an $\alpha$-SOSP when component function smoothness is assumed, and $\tilde{O}(d \sigma^2 / \alpha^4)$ iterations when only the objective function smoothness is assumed.
In our case, due to large noise, we don't lose the $d$ factor in the convergence rate, allowing us to get the standard $\nicefrac{d^{1/4}}{\sqrt{n \eps}}$ bound on $\alpha$.

Similarly, while \Cref{ass:grad_concentration} is required in \Cref{lem:grad_bound}, contrary to \citet{jin_NonconvexOptimization_2021}, in \Cref{thm:convergence_main} the assumption is not necessary when the noise is sufficiently large.
See Appendix~\ref{app:no_nSG} for the details.


%
\begin{proof}[Proof sketch]
    We independently bound the number of points with gradient norm larger than $\alpha$ (Lemma~\ref{lem:quarter_large_grad}) and the number of points with Hessian having an eigenvalue less than $-\sqrtra$ (Lemma~\ref{lem:quarter_saddle}).
    In Appendix~\ref{app:convergence_proof}, we prove that there are at most $\nicefrac{T}{4}$ points of the first type, and at most $\nicefrac{T}{4}$ points of the second type, which implies that at least $\nicefrac{T}{2}$ points are $\alpha$-SOSP.

    The main idea is to show that, if a point is not an $\alpha$-SOSP, then w.h.p. the objective value decreases by at least $\df = \tOm{\alpha^{3/2}}$ after at most $\escIter = \tOh{\nicefrac{1}{\step \sqrt{\alpha}}}$ iterations.
    It implies that w.h.p., if more than half of the points are not $\alpha$-SOSP, after $T$ iterations the objective value decreases by at least $\frac{\df T}{2\escIter} = \tOm{\frac{\alpha^{3/2} \fmax / (\step \alpha^2)}{1 / (\step \sqrt{\alpha})}} = \tOm{\fmax}$.
    We choose our parameters so that this value is strictly greater than $\fmax$, which leads to contradiction.
    
    \paragraph{Large gradients} For the points with gradients larger than $\alpha$, the analysis is similar to standard SGD analysis.
    We show that the function value decreases by $\df$ after one iteration w.h.p.
    By the folklore Descent Lemma, we have:
    \begin{align*}
        f(\x{t}) - f(\x{t+1})
        &\ge \InnerProd{\nabla f(\x{t}), \x{t} - \x{t+1}}
        - \frac{L}{2} \SqrNorm{\x{t} - \x{t+1}}
    \end{align*}
    Using that $\Exp{\x{t} - \x{t+1}} = \step \nabla f(\x{t})$ and $\ExpSqrNorm{\x{t} - \x{t+1}} = \step^2 (\SqrNorm{\nabla f(\x{t})} + \tsigma^2)$, where $\tsigma^2 = d \Delta^2 + \nicefrac{\sigma^2}{B}$ is the full noise variance, we have
    \begin{align*}
        \E[f(\x{t}) - f(\x{t+1})] &\ge \pars{\step - L \step^2} \|\nabla f(\x{t})\|^2 - \frac{L \step^2}{2} \tsigma^2
    \end{align*}
    Hence, for $\step \le \min(1, \nicefrac{\alpha^2}{\tsigma^2}) / (2L)$, if $\|\nabla f(\x{t})\| \ge \alpha$, the function decreases by $\step \alpha^2 / 4$ in expectation.
    In Lemma~\ref{lem:descent} we prove a high-probability version of this result.
    
    \paragraph{Saddle points}
    We first consider the case when the objective function is quadratic, i.e. $f(\vx) = \vx^\top \vH \vx / 2$.
    Then the gradient step is
    \begin{align*}
        \x{t+1} = \x{t} - \step \nabla f(\x{t}) = (\id - \step \vH) \x{t}
        .
    \end{align*}
    Hence, gradient descent emulates power method for matrix $\id - \step \vH$.
    Since $L$ is the smoothness constant, it is an upper bound on eigenvalues of the Hessian $\vH$, and by choosing $\step < \nicefrac{1}{L}$ we guarantee that $\id - \step \vH$ is positive definite.
    Hence, the smallest eigenvalue $-\gamma$ of $\vH$ corresponds to the largest eigenvalue $1 + \step \gamma > 1$ of $\id - \step \vH$, with the same eigenvector $\vv_1$.
    Hence, we know that after $\escIter$ iterations, the projection of $\x{\escIter}$ on $\vv_1$ increases by a factor of $(1 + \step \gamma)^\escIter \approx \exp(\escIter \step \gamma)$.
    Conversely, all positive eigenvalues of $\vH$ correspond to the eigenvalues between $0$ and $1$ of $1 - \step \vH$, and hence the projection on their eigenvectors is vanishing.

    The above reasoning shows that gradient descent for quadratic functions escapes from saddle points.
    For general functions, we need to take into account stochastic gradients and quadratic approximation error.
    The former issue is addressed by selecting a sufficiently small step size $\step \le \nicefrac{\alpha^2}{L \tsigma^2}$ so that the stochastic noise is smoothed.
    The latter issue is addressed by ensuring that the objective function improves while iterates stay in a ball where the Hessian doesn't change significantly.
    Using that $\gamma \ge \sqrtra$, this motivates the choice of parameters $\df$ and $\escIter$:
    \begin{compactitem}
        \item
            Since the Hessian is $\rho$-Lipschitz and the smallest eigenvalue is at most $-\sqrtra$, by ensuring that we stay in a ball of radius $\escRad = \sqrt{\alpha / 4\rho} = \Theta(\sqrt{\alpha})$, we guarantee that the Hessian has an eigenvalue at most $-\sqrtra / 2$.
        \item
            In this ball, the best objective improvement $\df$ we can hope for is if we move in the direction of $\vv_1$, which is $\sqrtra \escRad^2 = \Omega(\alpha^{3/2})$.
        \item
            As in calculations above, to reach the boundary of the ball, we need $\exp(\escIter \step \sqrtra) = \Theta(\escRad)$, leading to $\escIter = \tOh{1 / \step \sqrt{\alpha}}$.
    \end{compactitem}
    More formally, we use the fact that, if the point moves by distance $\escRad$ within $\escIter$ iterations, then the objective value improves by $\Omega(\df)$ (Corollary~\ref{cor:i_or_l}).
    We give the complete proof in Appendix~\ref{app:convergence_proof}.
\end{proof}

\subsection{Differentially Private Escaping from Saddle Points}
\label{ssec:combine}

In this section, we prove our main result, which combines the second-order convergence and differential privacy.
Intuitively, in Section~\ref{ssec:convergence} we showed that for any fixed $\Delta^2$, for any $\alpha$ we can select $\step$ and $T$ which guarantee the second-order convergence.
However, $\Delta^2$ is not fixed and depends on $T$ (see Theorem~\ref{thm:dp}), which leads to circular dependency: the number of iterations $T$ grows as variance $\Delta^2$ grows, and $\Delta^2$ grows as $T$ grows.
Hence, it is not a priori clear that it is possible to guarantee both differential privacy and second-order convergence simultaneously.
Nevertheless, since the number of iterations also depends on $\alpha$, we show that for a sufficiently large (but still quite small) $\alpha$ it is possible to guarantee both differential privacy and convergence to an $\alpha$-SOSP.

\begin{theorem}
    \label{thm:combine_main}
    Let Assumptions~\ref{ass:main} and~\ref{ass:grad_concentration} be satisfied, and let $\Delta$ satisfy conditions in Theorem~\ref{thm:dp} as well as $\Delta^2 \ge \sigma^2 / B$.
    Then for $\step = \tOh{\alpha^2 / (d \Delta^2)}$, Algorithm~\ref{alg:dpsgd} is $(\eps, 2\delta)$-DP and after $T = 64 \fmax / (\step \alpha^2)$ iterations,
    w.h.p. at least half of the points are $\alpha$-SOSP for $\alpha = \tilde{\Omega} \pars{\frac{\fmax d^{1/4}}{\sqrt{n \eps}}}$.
\end{theorem}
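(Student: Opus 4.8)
The statement follows by composing the privacy guarantee of Theorem~\ref{thm:dp} with the convergence guarantee of Theorem~\ref{thm:convergence_main}; essentially the only real work is untangling the circular dependency between the noise scale $\Delta$ and the number of iterations $T$. Recall the two ingredients. Theorem~\ref{thm:dp}, whose hypotheses are supplied by Assumptions~\ref{ass:main} and~\ref{ass:grad_concentration} via the gradient bound of Lemma~\ref{lem:grad_bound}, guarantees that Algorithm~\ref{alg:dpsgd} is $(\eps, 2\delta)$-DP as soon as $\step \le \min\pars{\nicefrac{1}{L},\ \nicefrac{1}{\sigma\sqrt T}}$, $\eps < c_1 T B^2/n^2$, and $\Delta \ge c_2\, C\,\sqrt{T\log\nicefrac{1}{\delta}}\,/\,(n\eps)$, where $C = O(\sqrt{L\fmax})$ up to additive terms that depend only on $L,\sigma$ and polylogarithms; in particular the $\fmax$-dependence of $C$ is $\Theta(\sqrt{\fmax})$. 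Theorem~\ref{thm:convergence_main} guarantees that, for any target $\alpha$, taking $\step = \tOh{\alpha^2/(d\Delta^2)}$ and running $T = 64\fmax/(\step\alpha^2)$ iterations makes at least half of the visited iterates $\alpha$-SOSP w.h.p., provided only that $\Delta^2 \ge \sigma^2/B$. The plan is to produce a single triple $(\Delta,\step,T)$ satisfying all of these constraints simultaneously; the tension is that the privacy lower bound on $\Delta$ grows with $T$, whereas $T$ grows as $\Delta$ (and hence the allowed step size) shrinks.

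To find which $\alpha$ admit such a triple, I would first chain the constraints together treating them as tight. From $\step = \tilde\Theta\pars{\alpha^2/(d\Delta^2)}$ and $T = 64\fmax/(\step\alpha^2)$ one gets $T = \tilde\Theta\pars{\fmax\, d\,\Delta^2/\alpha^4}$; plugging in the privacy requirement at equality, $\Delta^2 = \tilde\Theta\pars{C^2\, T\log\nicefrac{1}{\delta}/(n\eps)^2}$, and cancelling the common factor $T$, the self-consistency condition reduces to $\alpha^4 = \tilde\Theta\pars{\fmax\, d\, C^2\log\nicefrac{1}{\delta}\,/\,(n\eps)^2}$. Substituting $C = \Theta(\sqrt{\fmax})$ (up to $L,\rho,\sigma$ and polylog factors) turns this into a lower bound of the form $\alpha \ge \tilde\Omega\pars{\mathrm{poly}(L,\rho,\fmax)\cdot\nicefrac{d^{1/4}}{\sqrt{n\eps}}}$ in which the exponent of $\fmax$ is at most $1$; hence it is implied by the hypothesis $\alpha = \tilde\Omega\pars{\nicefrac{\fmax\, d^{1/4}}{\sqrt{n\eps}}}$ with a large enough hidden constant.

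Conversely, for any $\alpha$ above that threshold I would exhibit the configuration and check every precondition. Pick $T$ as small as the constraints (i) $\eps < c_1 T B^2/n^2$ and (ii) $c_2^2\, C^2\, T\log\nicefrac{1}{\delta}/(n\eps)^2 \ge \sigma^2/B$ permit; set $\Delta^2 := c_2^2\, C^2\, T\log\nicefrac{1}{\delta}/(n\eps)^2$ (so both the privacy bound on $\Delta$ and $\Delta^2\ge\sigma^2/B$ hold), and set $\step := 64\fmax/(T\alpha^2)$, so that $T = 64\fmax/(\step\alpha^2)$ holds by construction. It then remains to verify $\step \le \tOh{\alpha^2/(d\Delta^2)}$ — which, after the factors of $T$ cancel, is exactly the self-consistency inequality $\alpha^4 \gtrsim \fmax\, d\, C^2/(n\eps)^2$ and holds because $\alpha$ is above threshold — and $\step \le \min\pars{\nicefrac{1}{L},\ \nicefrac{1}{\sigma\sqrt T}}$, which holds because $\step = \Theta(1/T)$ is tiny for the chosen $T$. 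All preconditions being in place, Theorem~\ref{thm:dp} yields $(\eps,2\delta)$-DP for Algorithm~\ref{alg:dpsgd} and Theorem~\ref{thm:convergence_main} yields that at least half the iterates are $\alpha$-SOSP, which is the claim. I expect the main obstacle to be making this fixed-point argument precise: because $C$ itself grows (logarithmically) with $T$, the inequality $\alpha^4 \gtrsim \fmax\, d\, C^2/(n\eps)^2$ no longer decouples cleanly, so one must choose $T$ minimally subject to (i)–(ii), bound $C$ at that value, and confirm that the resulting requirement on $\alpha$ is no stronger than $\alpha = \tilde\Omega\pars{\nicefrac{\fmax\, d^{1/4}}{\sqrt{n\eps}}}$. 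The remaining steps — bounding $C$ via Lemma~\ref{lem:grad_bound}, checking the two step-size inequalities, and tracking the suppressed polylogarithmic constants — are routine.
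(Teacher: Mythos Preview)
Your proposal is correct and follows essentially the same approach as the paper: chain $\step = \tilde\Theta(\alpha^2/(d\Delta^2))$, $T = 64\fmax/(\step\alpha^2)$, and the privacy constraint $\Delta \gtrsim C\sqrt{T}/(n\eps)$ together, cancel the $\Delta^2$ that appears on both sides, and read off the self-consistency condition $\alpha^4 \gtrsim \fmax\, d\, C^2/(n\eps)^2$. You are in fact more careful than the paper about verifying the auxiliary preconditions ($\step \le \min(1/L,1/(\sigma\sqrt T))$, $\eps < c_1 TB^2/n^2$, $\Delta^2 \ge \sigma^2/B$) and about the logarithmic dependence of $C$ on $T$, all of which the paper silently absorbs into the $\tilde\Omega$.
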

\begin{proof}
    From Theorem~\ref{thm:dp} we know that, to guarantee that Algorithm~\ref{alg:dpsgd} is $(\eps,\delta)$-differentially private, we must select $\Delta > c_2 \frac{C \sqrt{T \log \nicefrac{1}{\delta}}}{n \eps}$, where $C$ is selected as in Theorem~\ref{thm:dp}.
    We need to choose $\alpha$ so that such $\Delta$ exists.
    Recall that 
    \[
        T = \frac{64 \fmax}{\step \alpha^2}
        = c \frac{L \fmax (\sigma^2/B + d \Delta^2)}{\alpha^4}
        .
    \]
    for some constant $c$.
    It suffices to choose $\alpha$ so that for some constant $c$
    \[
        \Delta > c \frac{C \sqrt{L \fmax (\sigma^2/B + d \Delta^2) \log \nicefrac{1}{\delta}}}{n \eps \alpha^2}
        ,
    \]
    Using $\Delta^2 \ge \sigma^2 / B$, eliminating $\Delta$ on both sides, and rearranging the terms,
    the inequality is satisfied if we choose $\alpha = \tOm{\frac{\fmax d^{1/4}}{\sqrt{n \eps}}}$.
\end{proof}

\subsection{Differentially Private Selection of SOSP}
\label{ssec:high_prob}

While Theorem~\ref{thm:combine_main} shows that a substantial fraction of points are SOSP\footnote{This is common in analysis of SGD for non-convex functions. For example, the folklore analysis bounds $\avg_{t=0}^{T-1} \SqrNorm{\nabla f(\x{t})}$, implying that most of the iterates have small gradients}, we still need to find one of these points.
The simplest approach is to sample random points and check that the point is a SOSP by computing the norm of the gradient and the eigenvalue of the Hessian.
To preserve privacy, we use \textsc{AboveThreshold} algorithm by~\citet{dwork_AlgorithmicFoundations_2014}.
By adding Laplacian noise $\text{Lap}\pars{\nicefrac{C}{n \eps}}$ to the gradient norm and Laplacian noise $\text{Lap}(\nicefrac{L}{n \eps})$ to the smallest eigenvalue of the Hessian, we guarantee privacy.
Since $\alpha$ depends on $n$ as $n^{-1/2}$, $\sqrt{\alpha}$ depends on $n$ as $n^{-1/4}$, and noise depends on $n$ as $n^{-1}$, for a sufficiently large $n$, namely $n = \tOm{(\eps\fmax^2 \sqrt{d})^{-1}}$ we approximate the gradient norm up to additive error $\nicefrac{\alpha}{2}$, and the eigenvalue up to additive error $\nicefrac{\sqrtra}{2}$.

We note that the smallest eigenvalue can be computed efficiently.
Many machine learning frameworks allow efficient computation of stochastic Hessian-vector products~-- that is, for any vector $\vv$, one can compute $\nabla^2 f_i(\vx) \vv$~-- using a constant number of back-propagation passes.
With this, we can find the smallest eigenvalue using stochastic variants of the power method, see e.g.~\citet{oja1985stochastic}.




\section{Experiments}
\label{sec:experiments}

In this section, we give the experimental results.
We compare convergence of DP-SGD (Algorithm~\ref{alg:dpsgd}) for various choices of $\eps$ with convergence of SGD.
We train a neural network on CIFAR-10 and CIFAR-100 image datasets~\citep{krizhevsky_LearningMultiple_2009}, as well as CoLa corpus~\citep{warstadt_NeuralNetwork_2019}.
For CIFAR-10, we use a convolutional neural network (see Appendix~\ref{app:experimental_details} for the detailed outline of the experimental setup) and a ResNet-18 neural network.
For CIFAR-100, we use pretrained ResNet-18, and for CoLa dataset we use pretrained uncased base BERT model.
All experiments are performed on a single NVIDIA A100 GPU.

\begin{figure*}[t!]
    \centering
    \begin{subfigure}[t]{\subfigwidth\textwidth}
        \includegraphics[width=\textwidth]{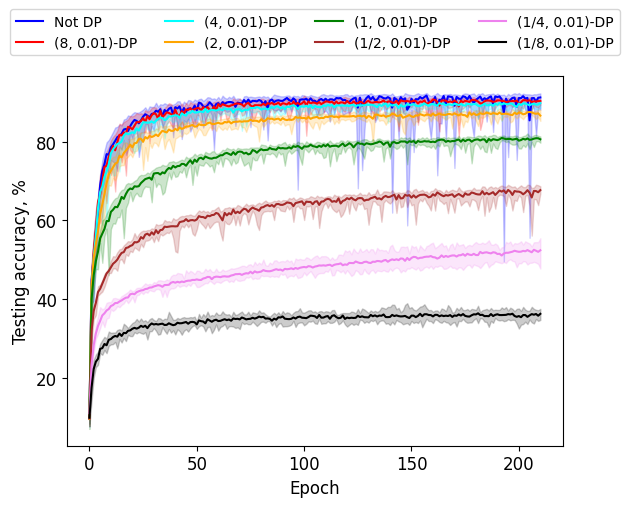}
        \caption{
            Convolutional model (see Appendix~\ref{app:experimental_details})
        }
        \label{fig:cifar10_cnn}
    \end{subfigure}
    \hfill
    \begin{subfigure}[t]{\subfigwidth\textwidth}
        \includegraphics[width=\textwidth]{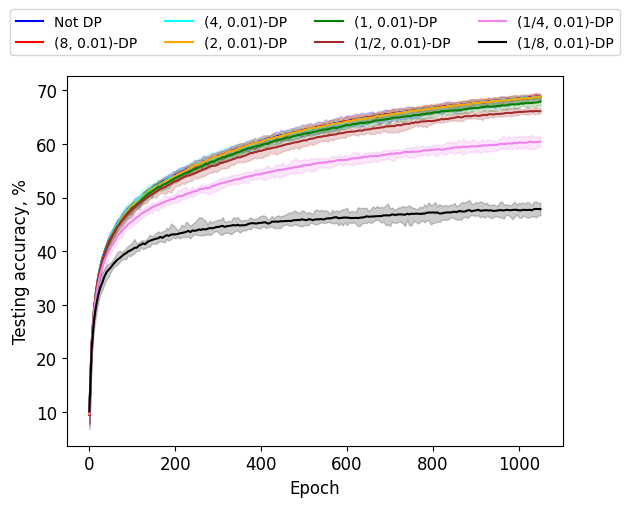}
        \caption{
            ResNet-18
        }
        \label{fig:cifar10_resnet}
    \end{subfigure}
    \caption{
        Testing accuracy of DP-SGD on CIFAR-10 dataset for various choices of $\eps$.
        Testing accuracy is averaged over $10$ runs, with the shaded area showing the minimum and the maximum values over the runs.
    }
    \label{fig:experiments_cifar10}
\end{figure*}

\begin{figure*}[t!]
    \centering
    \begin{subfigure}[t]{\subfigwidth\textwidth}
        \includegraphics[width=\textwidth]{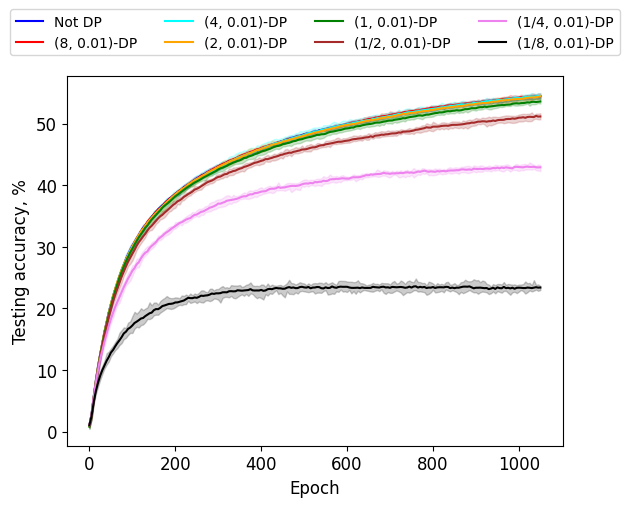}
        \caption{
            CIFAR-100 dataset
        }
        \label{fig:convergence}
    \end{subfigure}
    \hfill
    \begin{subfigure}[t]{\subfigwidth\textwidth}
        \includegraphics[width=\textwidth]{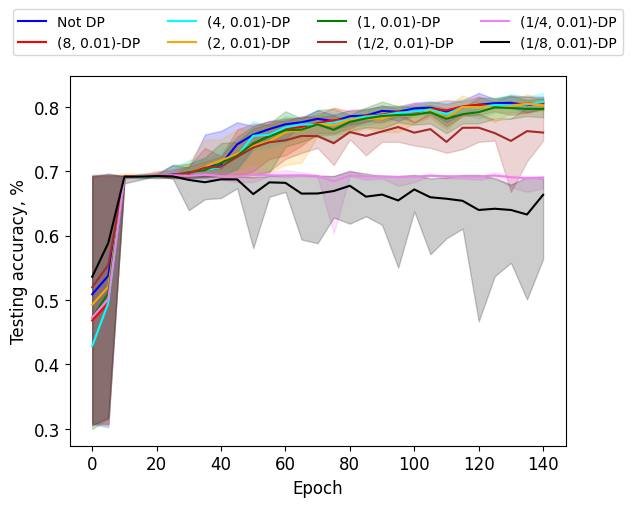}
        \caption{
            CoLa dataset
        }
        \label{fig:train_acc}
    \end{subfigure}
    \caption{
        Testing accuracy of DP-SGD  on CIFAR-100 and CoLa datasets.
        Testing accuracy is averaged over $10$ runs, with the shaded area showing the minimum and the maximum values.
    }
    \label{fig:experiments_cola_cifar100}
\end{figure*}

For our experiments, we choose error probability $\delta=0.01$ and learning rate $\step=0.1$.
We consider $\eps \in \{2, 4, 8\}$ similarly to~\citet{abadi_DeepLearning_2016}, as well as $\eps \in \{\nicefrac{1}{8}, \nicefrac{1}{4}, \nicefrac{1}{2}, 1\}$.
\Cref{fig:experiments_cifar10,fig:experiments_cifar10} shows testing accuracy for DP-SGD with the above choices of $\eps$, as well as testing accuracy of non-differentially private SGD.
Our results show monotone dependence on $\eps$, matching our prediction in Theorem~\ref{thm:combine_main} that for greater value of $\eps$ we find a better solution.
Moreover, for $\eps \in \{4, 8\}$, the testing accuracy is close to that of the non-differentially private SGD, which means that we obtain strong privacy guarantees without sacrificing the quality of the model.

Recall that the algorithm requires $C$~-- an upper bound on all gradient norms~-- which we choose empirically.
In Appendix~\ref{app:experimental_details} we present additional experimental results, including training accuracy and algorithm convergence when $C$ chosen as in \Cref{thm:dp}.

\section{Conclusion}
We have shown that DP-SGD, even without gradient clipping, converges to a second-order stationary point under minimal assumptions; in particular, without assuming that the loss function is Lipschitz.  As DP-SGD is the workhorse of modern private optimization, this fundamentally improves our understanding of private optimization. 
Moreover, we showed that experimentally, for the values of $\eps$ considered in~\citet{abadi_DeepLearning_2016}, on a standard CIFAR-10 dataset our performance is comparable to the \emph{non}-private classical SGD algorithm.


\bibliography{main}
\bibliographystyle{plainnat}

\newpage
\appendix
\onecolumn
\section{Experimental Details}
\label{app:experimental_details}

We perform experiments on NVIDIA RTX A5000.
To improve the performance of our experiments, we use FFCV library~\citep{leclerc2023ffcv}.
We use the Convolutional Neural Network with the following architecture:
\begin{lstlisting}
    conv_bn(3, 64, kernel_size=3, stride=1, padding=1),
    conv_bn(64, 128, kernel_size=5, stride=2, padding=2),
    Residual(Sequential(conv_bn(128, 128), conv_bn(128, 128))),
    conv_bn(128, 256, kernel_size=3, stride=1, padding=1),
    MaxPool2d(2),
    Residual(Sequential(conv_bn(256, 256), conv_bn(256, 256))),
    conv_bn(256, 128, kernel_size=3, stride=1, padding=0),
    AdaptiveMaxPool2d((1, 1)),
    Flatten(),
    Linear(128, 10, bias=False),
    Mul(0.2)    
\end{lstlisting}
where \lstinline{conv_bn}$(c_i, c_o, k, s, p)$ consists of 1) a \lstinline{Conv2d} layer with $c_i$ input channels, $c_o$ output channels, kernel size $k$, stride $s$, and padding $p$, 2) batch normalization layer, and 3) ReLU activation layer.

Aside from $\eps$ and $\delta$, our algorithm requires two hyperparameters: learning rate $\step$ and the Gaussian noise variance $\Delta^2$.
We select the learning rate to be $0.1$, which is the largest value which still achieves convergence.
The parameter $\Delta^2$ is computed per Theorem~\ref{thm:dp} based on $C$~-- the bound on the gradient norms throughout the algorithm execution.
While $C$ can be computed based on Lemma~\ref{lem:grad_bound}, this bound is pessimistic; in practice, we choose $C$ to be the upper bound on the gradient norm, which, for training a convolutional neural network on CIFAR-10 dataset, equals $2.5$ (as we verify in Figure~\ref{fig:grad_norms}).


In Figure~\ref{fig:testing_C}, we show testing accuracy, but with gradient bound $C$ chosen as in Theorem~\ref{thm:dp}.
The figure shows the trend similar to Figure~\ref{fig:convergence}, with lower accuracies due to using a pessimistic value of $C$.
The experimental setup is the same as in Section~\ref{sec:experiments}, i.e. on CIFAR-10 dataset with $\delta=0.01$.

\begin{figure}[t!]
    \centering
    \begin{subfigure}[t]{0.45\textwidth}
        \includegraphics[width=\textwidth]{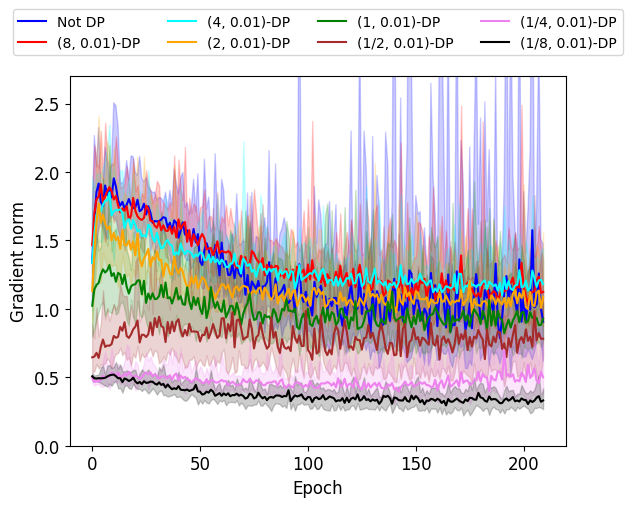}
        \caption{
            Training accuracy averaged over $10$ runs on CIFAR-10 dataset, with the shaded area showing the minimum and the maximum values over the runs.
        }
        \label{fig:grad_norms}
    \end{subfigure}
    \hfill
    \begin{subfigure}[t]{0.45\textwidth}
        \includegraphics[width=\textwidth]{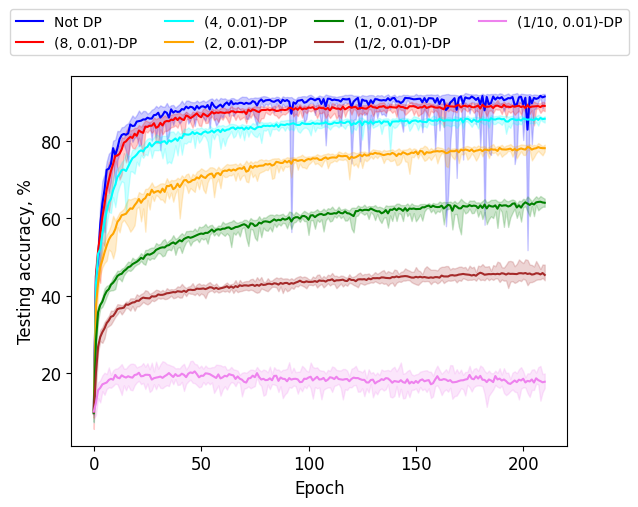}
        \caption{
            Testing accuracy averaged over $10$ runs on CIFAR-10 dataset, with the shaded area showing the minimum and the maximum values over the runs.
            Compared with Figure~\ref{fig:convergence}, we bound gradient norms using $C$ from Theorem~\ref{thm:dp}.
        }
        \label{fig:testing_C}
    \end{subfigure}
    \label{fig:additional_exp_details}
    \caption{Additional results for training a convolutional neural network using DP-SGD on CIFAR-10 dataset}
\end{figure}

\section{Convergence Proof}



\label{app:convergence_proof}


\begin{definition}
    \label{def:params}
    Let $\fullVar = \nicefrac{\sigma^2}{B} + d \Delta^2$ be total noise variance.
    Our choice of parameters is the following ($\cstep, \cit, \crad, \cf$ hide polylogarithmic dependence on all parameters):
    \begin{equation*}
        \begin{aligned}
            \text{Step size} && \step &= \cstep \min\pars{\frac{\alpha^2}{L (1 + \tsigma^2)}, \frac{\fmax}{L \tsigma^2}} \\
            \text{Total number of iterations} && T &= \frac{64\fmax}{\step \alpha^2} \\
            \text{Iterations required for escaping} && \escIter &=  \cit \frac {1} {\step \sqrtra} \\
            \text{Escaping radius} && \escRad &= \crad \sqrt {\alpha / \rho} \\
            \text{Objective decrease after successful escaping} && \df &= \cf \sqrtfacr
        \end{aligned}
    \end{equation*}
\end{definition}
Intuitively, the parameters are chosen as follows.
Assume that the current point is $\vzero$ and the function is quadratic, i.e. $f(\vx) = \vx^T H \vx$.
If $\vx$ is a saddle point, then we know that the smallest eigenvalue of $H$ is $-\eigen < -\sqrtra$ with the corresponding eigenvector $\vv_1$.
Then, the largest function decrease would be in the direction of $\vv_1$, and this direction can be found using power method for the appropriate matrix.
But, if we look at the gradient descent update:
\[
    \x{t+1}
    = \x{t} - \step \nabla f(\x{t})
    = \x{t} - \step 2 H \x{t}
    = (\id - \step 2 H) \x{t},
\]
we see that for quadratic functions, gradient descent is exactly the power method for the matrix $\id - \step 2H$, which has the same eigenvectors as $H$. The eigenvalue $-\eigen$ of $H$ corresponds to the eigenvalue $1 + 2 \step \eigen$ of $1 - \step 2 H$, which is the largest (by absolute value) eigenvalue when $\step \le 1/L$.
Note that power method requires some initial randomness, which explains while randomness is required to escape from saddle points.

The above intuition motivates the choice of parameters.
\begin{itemize}
    \item
        \textbf{Escaping radius $\escRad$}. Since in general the function is not necessarily quadratic, we want to guarantee that the Hessian doesn't change excessively.
        More specifically, we want the smallest eigenvalue to be at most $-\eigen / 2$ for the entire escaping procedure.
        By the Hessian-Lipschitz property $\|\nabla^2 f(\vx) - \nabla^2 f(\vy)\| \le \rho \|\vx - \vy\|$, if we work in the radius $\escRad = \eigen /(2\rho)$, then for any point within the radius $\escRad$ of the initial point, the smallest eigenvalue is at most $-\eigen + \eigen / 2 = -\eigen / 2$.
        When $\eigen > \sqrtra$, we can lower-bound $\escRad$ as $\sqrtra / (2\rho) = \sqrt{\alpha / (4\rho)}$.
    \item
        \textbf{Number of iterations required for escaping $\escIter$}.
        After $t$ iterations, the length of $\x{t}$ along the $\vv_1$ direction is $(1 + \step \eigen)^t \approx e^{t / (\step \eigen)}$.
        Hence, we reach the boundary of the ball of radius $\escRad$ after $(t \log \escRad) / (\step \eigen)$ iterations.
    \item
        \textbf{Function change after escaping $\df$}.
        Since we are working inside the ball of size $\escRad$, the best objective improvement $\df$ we can aim for is achievable by moving by $\escRad \vv_1$.
        This improves the objective by approximately $\escRad^2 \eigen$, which, by the choice of the parameters can be lower-bounded as $\alpha / \rho \cdot \sqrtra = \sqrt{\alpha^3 / \rho}$.
    \item
        \textbf{Total number of iterations $T$}. Every $\escIter$ iterations we improve objective by $\df$.
        Since the total objective improvement cannot exceed $\fmax$, the maximum number of iterations is
        \[
            \frac{\fmax \escIter}{\df}
            \le \frac{\fmax}{(\step \sqrtra)\sqrt{\alpha^3 / \rho}}
            = \frac{\fmax}{\step \alpha^2}
        \]
\end{itemize}

\begin{lemma}[Descent Lemma]
    \label{lem:descent}
    Assume that $\step < 1/L$, for any $t_0$ and $t$.
    Then, with probability at least $1 - \tdelta$, for some constant $c$:
    \[
        f(\x{t_0}) - f(\x{t_0 + t})
        \ge \frac{\step}{8} \sum_{\tau=t_0}^{t_0 + t - 1} \SqrNorm{\nabla f(\x{\tau})} - c \step \tsigma^2 (\step L t + \logErr)
        .
    \]
\end{lemma}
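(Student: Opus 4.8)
The plan is to telescope the one-step smoothness (Descent Lemma) inequality and then control the two resulting noise contributions — a martingale of cross terms and a sum of squared noise norms — by high-probability concentration, absorbing part of the martingale deviation back into the gradient-norm sum via Young's inequality. Write $\vb_\tau := \avg_{i \in \batch{\tau}} \nabla f_i(\x{\tau}) + \xi_\tau - \nabla f(\x{\tau})$ for the total (conditionally zero-mean) noise injected at step $\tau$, so that $\x{\tau+1} - \x{\tau} = -\step(\nabla f(\x{\tau}) + \vb_\tau)$. By $L$-smoothness,
\[
    f(\x{\tau+1}) - f(\x{\tau}) \le \pars{\tfrac{L\step^2}{2} - \step}\SqrNorm{\nabla f(\x{\tau})} + \pars{L\step^2 - \step}\InnerProd{\nabla f(\x{\tau}), \vb_\tau} + \tfrac{L\step^2}{2}\SqrNorm{\vb_\tau} .
\]
Since $\step < 1/L$, the first coefficient is at most $-\step/2$ and the second lies in $[-\step, 0]$; summing over $\tau = t_0, \dots, t_0 + t - 1$ and rearranging,
\[
    f(\x{t_0}) - f(\x{t_0 + t}) \ge \tfrac{\step}{2} \sum_{\tau} \SqrNorm{\nabla f(\x{\tau})} - \abs*{\sum_{\tau}\pars{\step - L\step^2}\InnerProd{\nabla f(\x{\tau}), \vb_\tau}} - \tfrac{L\step^2}{2}\sum_{\tau}\SqrNorm{\vb_\tau} .
\]
It then suffices to bound the last two terms by $O(\step\tsigma^2(\step L t + \logErr))$ plus a $\tfrac{3\step}{8}\sum_\tau\SqrNorm{\nabla f(\x\tau)}$ term that gets absorbed into the first sum.

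For the cross terms, the increments $(\step - L\step^2)\InnerProd{\nabla f(\x{\tau}), \vb_\tau}$ form a martingale-difference sequence with respect to the natural filtration $\mathcal{F}_\tau$ (as $\Exp{\vb_\tau \mid \mathcal{F}_\tau} = \vzero$), and $\vb_\tau$ is norm-subgaussian with parameter $\Theta(\tsigma^2)$ — the minibatch part is $\text{nSG}(\sigma^2/B)$ by Assumption~\ref{ass:grad_concentration} (when the Gaussian noise dominates this part is automatic), and $\xi_\tau$ is $\text{nSG}(\Theta(d\Delta^2))$ — so conditionally on $\mathcal{F}_\tau$ the scalar $\InnerProd{\nabla f(\x{\tau}), \vb_\tau}$ is sub-gaussian with variance proxy $O(\SqrNorm{\nabla f(\x{\tau})}\tsigma^2)$. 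A sub-gaussian martingale concentration inequality with data-dependent variance proxy (Azuma/Freedman type, combined with a peeling union bound over dyadic ranges of $\sum_\tau \SqrNorm{\nabla f(\x{\tau})}$, whose extra $\log$ factors are folded into the constant and into $\logErr$) gives, with probability at least $1 - \tdelta$,
\[
    \abs*{\sum_{\tau}\pars{\step - L\step^2}\InnerProd{\nabla f(\x{\tau}), \vb_\tau}} \le c_0\,\step\sqrt{\tsigma^2 \logErr \sum_{\tau}\SqrNorm{\nabla f(\x{\tau})}} + c_0\,\step\tsigma^2\logErr .
\]
Applying Young's inequality $\sqrt{uv} \le \lambda u + v/(4\lambda)$ with $u = \sum_\tau \SqrNorm{\nabla f(\x{\tau})}$ and $\lambda = \Theta(\step)$ chosen so that $\lambda \le \tfrac{3\step}{8}$ bounds the square-root term by $\tfrac{3\step}{8}\sum_\tau\SqrNorm{\nabla f(\x{\tau})} + O(\step\tsigma^2\logErr)$.

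For the squared noise norms, each $\vb_\tau$ is $\text{nSG}(\Theta(\tsigma^2))$ and adapted with conditional mean zero, so a standard concentration bound for sums of squared norm-subgaussian vectors yields, with probability at least $1 - \tdelta$, $\sum_\tau \SqrNorm{\vb_\tau} \le c_1(t\tsigma^2 + \tsigma^2\logErr)$; hence $\tfrac{L\step^2}{2}\sum_\tau\SqrNorm{\vb_\tau} \le c_1\step\tsigma^2(\step L t + \step L \logErr) \le c_1\step\tsigma^2(\step L t + \logErr)$ since $\step L < 1$. A union bound over the two failure events (rescaling $\tdelta$ by a constant factor) and combining the three estimates leaves
\[
    f(\x{t_0}) - f(\x{t_0 + t}) \ge \pars{\tfrac{\step}{2} - \tfrac{3\step}{8}}\sum_{\tau}\SqrNorm{\nabla f(\x{\tau})} - c\,\step\tsigma^2(\step L t + \logErr) = \tfrac{\step}{8}\sum_{\tau}\SqrNorm{\nabla f(\x{\tau})} - c\,\step\tsigma^2(\step L t + \logErr),
\]
as claimed.

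The main obstacle is the concentration of the cross terms: because this setting has no Lipschitz / bounded-gradient assumption, the conditional variance proxy $\SqrNorm{\nabla f(\x{\tau})}$ is itself random and a priori unbounded, so a plain Azuma bound does not apply directly. One must invoke an adaptive (self-normalized) sub-gaussian martingale inequality and pair it with a peeling argument over the random total variance $\sum_\tau \SqrNorm{\nabla f(\x{\tau})}$ — and, if needed, a crude uniform ceiling on $\max_\tau \SqrNorm{\nabla f(\x{\tau})}$ obtained from Lemma~\ref{lem:grad_bound} — so that the union bound over dyadic scales only costs $\mathrm{poly}\log$ factors; once this is set up, Young's inequality and the routine squared-norm concentration complete the argument.
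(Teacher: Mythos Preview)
Your proposal is correct and follows essentially the same route as the paper: one-step smoothness, telescoping, then high-probability control of the martingale cross term (absorbing a fraction of $\sum_\tau\SqrNorm{\nabla f(\x{\tau})}$ via Young) and of $\sum_\tau\SqrNorm{\vb_\tau}$. The only cosmetic difference is that where you spell out an adaptive Freedman/peeling argument for the cross term, the paper simply invokes the norm-subgaussian martingale concentration of \citet{jin_ShortNote_2019}, which already delivers the bound in the post-Young form $\tfrac{\step}{8}\sum_\tau\SqrNorm{\nabla f(\x{\tau})}+c\,\step\tsigma^2\logErr$ directly.
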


\begin{proof}
    W.l.o.g., we consider $t_0 = 0$.
    Let $\sgdNoise{\tau} = \nabla f_{\batch{t}}(\x{\tau}) - \nabla f(\x{\tau})$ be the stochastic noise at iteration $\tau$.
    By the folklore Descent Lemma,
    \begin{align*}
        f(\x{\tau}) - f(\x{\tau+1})
        &\ge \step \InnerProd{\nabla f(\x{\tau}), \nabla f_{\batch{t}}(\x{\tau})}
            - \frac{L}{2} \SqrNorm{\x{\tau + 1} - \x{\tau}} \\
        &= \step \InnerProd{\nabla f(\x{\tau}), \nabla f(\x{\tau}) + \sgdNoise{\tau} + \ourNoise{\tau}}
            - \frac{L\step^2}{2} \SqrNorm{\nabla f(\x{\tau}) + \sgdNoise{\tau} + \ourNoise{\tau}} \\
        &\ge \step \SqrNorm{\nabla f(\x{\tau})} + \step \InnerProd{\nabla f(\x{\tau}), \sgdNoise{\tau} + \ourNoise{\tau}}
            - \frac{\step^2 L}{2} \pars{\frac{3}{2}\SqrNorm{\nabla f(\x{\tau})} + 3 \SqrNorm{\sgdNoise{\tau} + \ourNoise{\tau}}}
        ,
    \end{align*}
    where we used inequality $\SqrNorm{a + b} \le (1 + \nu) \SqrNorm{a} + (1 + 1/\nu) \SqrNorm{b}$ for $\nu = 1/2$.
    By telescoping, we have:
    \begin{align*}
        f(\x{0}) - f(\x{t})
        &\ge \frac{\step}{4} \sum_{\tau=0}^{t-1} \SqrNorm{\nabla f(\x{\tau})}
            + \step \sum_{\tau=0}^{t-1} \InnerProd{\nabla f(\x{\tau}), \sgdNoise{\tau} + \ourNoise{\tau}}
            - \frac{3 \step^2 L}{2} \sum_{\tau=0}^{t-1} \SqrNorm{\sgdNoise{\tau} + \ourNoise{\tau}}
    \end{align*}
    Using a concentration inequality for nSG variables~\citep{jin_ShortNote_2019}, with probability at least $1 - \tdelta$ for some constant $c$ we have
    \[
        \abs{\step \sum_{\tau=0}^{t-1} \InnerProd{\nabla f(\x{\tau}), \sgdNoise{\tau} + \ourNoise{\tau}}}
        \le \frac{\step}{8} \sum_{\tau=0}^{t-1} \SqrNorm{\nabla f(\x{\tau})} + c \step \fullVar^2 \logErr
    \]
    For the last term, with probability at least $1 - \tdelta$ we have
    \[
        \frac{3 \step^2 L}{2} \sum_{\tau=0}^{t-1} \SqrNorm{\sgdNoise{\tau} + \ourNoise{\tau}}
        \le c \step^2 L \fullVar^2 (t + \logErr).
    \]
    Combining the above inequalities and using $\step \le 1/L$ finishes the proof.
\end{proof}

\subsection{Large Gradient Case}

The first application of the Descent Lemma is to bound the number of iterations when the gradient is large.
\begin{lemma}
    \label{lem:quarter_large_grad}
    When $T \ge \frac{64 \fmax}{\step \alpha^2}$ and other parameters are as in \Cref{def:params}, the number of iterations $t$ with $\|\nabla f(\x{t})\| \ge \alpha$ is at most $\nicefrac T4$.
\end{lemma}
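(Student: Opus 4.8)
The plan is to invoke the Descent Lemma (Lemma~\ref{lem:descent}) a \emph{single} time over the entire horizon, with $t_0 = 0$ and $t = T$, and combine it with boundedness. Lemma~\ref{lem:descent} gives, with probability at least $1 - \tdelta$,
\[
    f(\x{0}) - f(\x{T}) \ge \frac{\step}{8}\sum_{\tau=0}^{T-1}\SqrNorm{\nabla f(\x{\tau})} - c\step\tsigma^2\pars{\step L T + \logErr}.
\]
Since $f(\x{T}) \ge f(\vx^*)$ and $\fmax > f(\x{0}) - f(\vx^*)$ by Assumption~\ref{ass:main}, the left-hand side is at most $\fmax$, so after rearranging,
\[
    \sum_{\tau=0}^{T-1}\SqrNorm{\nabla f(\x{\tau})} \le \frac{8\fmax}{\step} + 8c\tsigma^2\pars{\step L T + \logErr}.
\]

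Next I would bound the number of large-gradient iterates by a counting argument: if $k$ of the iterates satisfy $\|\nabla f(\x{t})\| \ge \alpha$, then $k\alpha^2 \le \sum_{\tau=0}^{T-1}\SqrNorm{\nabla f(\x{\tau})}$, so it suffices to show the right-hand side above is at most $\tfrac{T\alpha^2}{4}$. Dividing through by $T\alpha^2$, this reduces to showing that each of the three contributions $\frac{8\fmax}{\step\alpha^2 T}$, $\frac{8c\tsigma^2\step L}{\alpha^2}$, and $\frac{8c\tsigma^2\logErr}{\alpha^2 T}$ is at most $\tfrac{1}{12}$ (say). The first is $\le \tfrac{1}{8}$ directly from the hypothesis $T \ge 64\fmax/(\step\alpha^2)$. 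The second is \emph{independent of $T$} and is $\le \tfrac{1}{12}$ precisely because the step size in Definition~\ref{def:params} satisfies $\step \le \cstep\,\alpha^2/(L(1+\tsigma^2)) \le \cstep\,\alpha^2/(L\tsigma^2)$ for a sufficiently small polylogarithmic constant $\cstep$. The third is small because $T \ge 64\fmax/(\step\alpha^2) \ge 64 L\tsigma^2/(\cstep\alpha^2)$, using the other branch $\step \le \cstep\fmax/(L\tsigma^2)$ of the min, so $\frac{\tsigma^2\logErr}{\alpha^2 T} \le \frac{\cstep\logErr}{64L}$, which is small once $\cstep$ absorbs the $\logErr$ factor and the polynomial dependence on $L$. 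I would also note in passing that $\step < 1/L$ holds under the same choice (for $\alpha \le 1$ we have $\alpha^2 \le 1 + \tsigma^2$), so Lemma~\ref{lem:descent} indeed applies. Combining the three bounds gives $\sum_{\tau=0}^{T-1}\SqrNorm{\nabla f(\x{\tau})} \le \tfrac{T\alpha^2}{4}$, hence $k \le T/4$.

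The computation is otherwise routine; the one step that requires genuine care is verifying that the two noise-induced error terms in the Descent Lemma are dominated by $\fmax$ (equivalently, by the gradient-norm sum). This is exactly what the two-branch definition of the step size is engineered for: the branch $\alpha^2/(L(1+\tsigma^2))$ controls the variance accumulated over all $T$ steps (the $\step^2 L\tsigma^2 T$ term, which is why this branch does not involve $\fmax$), while the branch $\fmax/(L\tsigma^2)$ controls the tail term $\step\tsigma^2\logErr$. A minor additional point is that only one invocation of Lemma~\ref{lem:descent} is needed, so no union bound over iterations is required and the failure probability stays $\tdelta$.
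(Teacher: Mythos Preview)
Your proposal is correct and follows essentially the same route as the paper: a single application of the Descent Lemma over the full horizon $[0,T]$, bounding $f(\x{0})-f(\x{T})$ by $\fmax$, and then using the two branches of the step-size definition to kill the two noise terms. The paper phrases it as a proof by contradiction (assume more than $T/4$ large-gradient iterates and derive $f(\x{0})-f(\x{T})>\fmax$), whereas you phrase it as a direct counting bound on $\sum_\tau \SqrNorm{\nabla f(\x{\tau})}$; these are the same argument.

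One cosmetic slip: you state it suffices that each of the three contributions be at most $\tfrac{1}{12}$, but then show the first is only $\le \tfrac{1}{8}$, and $\tfrac{1}{8}+\tfrac{1}{12}+\tfrac{1}{12}=\tfrac{7}{24}>\tfrac{1}{4}$. This is harmless---just target $\tfrac{1}{16}$ for the second and third terms by shrinking $\cstep$---but you should fix the arithmetic.
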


\begin{proof}
    For the sake of contradiction, assume that the number of points $\x{t}$ with $\|\nabla f(\x{t})\| \ge \alpha$ is greater than $\nicefrac T4$.
    Then, by the Descent Lemma we have
    \begin{align*}
        f(\x{0}) - f(\x{T})
        &\ge \frac{\step}{8} \sum_{\tau=0}^{T - 1} \SqrNorm{\nabla f(\x{\tau})}
            - c \step \tsigma^2 (\step L T + \logErr) \\
        &\ge \frac{\step T \alpha^2}{32}  - c \step T \cdot \tsigma^2 \pars{\step L + \frac{\logErr}{T}}
    \end{align*}
    We will show that the right-hand side is at least $\fmax$, leading to contradiction.
    For that, it suffices to guarantee that $\frac{\step T \alpha^2}{32} \ge 2\fmax$ and $c \step T \cdot \tsigma^2 \pars{\step L + \frac{\logErr}{T}} \le \fmax$.
    The former condition is satisfied by our choice of $T$, and substituting the bound on $\fmax$ into the latter condition shows that it suffices to prove
    \[
        64 c \tsigma^2 \pars{\step L + \frac{\step \alpha^2 \logErr}{64 \fmax}} \le \alpha^2
    \]
    It suffices to bound each term with $\alpha^2 / 2$, which is satisfied for
    \begin{align*}
        \step &\le \frac{\alpha^2}{128 \cdot c \tsigma^2 L} \\
        \step &\le \frac{\fmax\logErr}{2 \cdot c \tsigma^2 L}
    \end{align*}
    Both of these conditions are satisfied by our choice of $\step$, with the appropriate choice of $\cstep$.
\end{proof}

\subsection{Escaping From Saddle Points}

The Descent Lemma also implies the following result, which upper-bounds the distance change in terms of the objective changes.
Its \Cref{cor:i_or_l} states that, if after a bounded number of iterations the point moves far from the original point, then the objective value sufficiently decreases.
\begin{lemma}[Improve or localize]
    \label{lem:improve_or_localize}
    For any $t_0$ and $t$, with probability at least $1 - \tdelta$, for some constant $c$:
    \begin{align*}
        \SqrNorm{\x{t_0 + t} - \x{t_0}}
        \le c \step t \pars{(f(\x{t_0}) - f(\x{t_0 + t}) + \step \tsigma^2 (\step L t + \logErr)}
    \end{align*}
\end{lemma}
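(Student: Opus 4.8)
The plan is to write the total displacement as a telescoped sum of the per-iteration updates and then control its ``signal'' and ``noise'' parts separately. By the update rule of \Cref{alg:dpsgd},
\[
    \x{t_0+t} - \x{t_0} = -\step \sum_{\tau = t_0}^{t_0 + t - 1} \pars{\nabla f(\x{\tau}) + \sgdNoise{\tau} + \ourNoise{\tau}},
\]
where $\sgdNoise{\tau} = \nabla f_{\batch{\tau}}(\x{\tau}) - \nabla f(\x{\tau})$ is the minibatch noise. Using $\SqrNorm{\va + \vb} \le 2\SqrNorm{\va} + 2\SqrNorm{\vb}$, it suffices to bound $\SqrNorm{\sum_\tau \nabla f(\x{\tau})}$ and $\SqrNorm{\sum_\tau \pars{\sgdNoise{\tau} + \ourNoise{\tau}}}$ separately, each then multiplied by $2\step^2$.

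For the signal term, Cauchy--Schwarz gives $\SqrNorm{\sum_\tau \nabla f(\x{\tau})} \le t \sum_\tau \SqrNorm{\nabla f(\x{\tau})}$, and rearranging the Descent Lemma (\Cref{lem:descent}) bounds $\sum_\tau \SqrNorm{\nabla f(\x{\tau})}$ by $\frac{8}{\step}\pars{f(\x{t_0}) - f(\x{t_0+t})} + \Oh{\tsigma^2(\step L t + \logErr)}$ with probability at least $1 - \tdelta$. Multiplying by $2\step^2 t$ produces precisely a $\Oh{\step t\,\pars{f(\x{t_0}) - f(\x{t_0+t})}}$ term plus a $\Oh{\step^2 t\,\tsigma^2(\step L t + \logErr)}$ term, of the required form. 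For the noise term, the key observation is that $\set{\sgdNoise{\tau} + \ourNoise{\tau}}_\tau$ is a martingale difference sequence for the natural filtration: the minibatch is sampled uniformly, so $\sgdNoise{\tau}$ is conditionally mean zero, and $\ourNoise{\tau} \sim \mathcal N(0, \Delta^2 \id)$ is independent zero-mean Gaussian. Moreover each increment is norm-subgaussian with parameter $\Oh{\tsigma^2}$ --- the minibatch part by averaging together with \Cref{ass:grad_concentration}, and the Gaussian part because $\mathcal N(0, \Delta^2 \id)$ in $\R^d$ is $\text{nSG}\pars{\Oh{d \Delta^2}}$. The Hoeffding-type concentration for norm-subgaussian martingales~\citep{jin_ShortNote_2019} (the same bound already invoked inside the proof of \Cref{lem:descent}) then yields $\norm{\sum_\tau \pars{\sgdNoise{\tau} + \ourNoise{\tau}}} \le \Oh{\tsigma \sqrt{t \logErr}}$, hence $\SqrNorm{\sum_\tau \pars{\sgdNoise{\tau} + \ourNoise{\tau}}} \le \Oh{\tsigma^2 t \logErr}$, with probability at least $1 - \tdelta$; after multiplying by $2\step^2$ this is absorbed into the $\Oh{\step^2 t\,\tsigma^2(\step L t + \logErr)}$ term.

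Finally I would take a union bound over the two failure events and use $\logErr \le \step L t + \logErr$ to merge the residual terms into the single expression $c\,\step t\pars{f(\x{t_0}) - f(\x{t_0+t}) + \step \tsigma^2(\step L t + \logErr)}$ for a suitable absolute constant $c$, holding with probability at least $1 - \tdelta$ after rescaling $\tdelta$ by a constant; as throughout, the Descent Lemma's requirement $\step < 1/L$ is in force. The main obstacle is obtaining the \emph{linear}-in-$t$ dependence in the noise contribution: a naive Cauchy--Schwarz applied to the whole update sum would give $\sum_\tau \SqrNorm{\sgdNoise{\tau} + \ourNoise{\tau}} \approx t\tsigma^2$ and hence a spurious $\step^2 t^2 \tsigma^2$ term that cannot be hidden inside $\step t \cdot \step \tsigma^2(\step L t + \logErr)$ once $\step L \le 1$; splitting off the martingale part and concentrating its norm directly is exactly what avoids this loss.
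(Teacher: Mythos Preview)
Your proposal is correct and matches the paper's own proof essentially step for step: telescope the update, split signal and noise via $\SqrNorm{\va+\vb}\le 2\SqrNorm{\va}+2\SqrNorm{\vb}$, apply Cauchy--Schwarz plus the Descent Lemma to the gradient sum, and use the nSG martingale concentration of~\citet{jin_ShortNote_2019} on $\sum_\tau(\sgdNoise{\tau}+\ourNoise{\tau})$. Your closing remark about why a naive Cauchy--Schwarz on the full update fails (the spurious $\step^2 t^2 \tsigma^2$ term) is a nice addition that the paper leaves implicit.
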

\begin{proof}
    W.l.o.g. we consider the case when $t_0 = 0$.
    By the definition of $\x{\tau + 1}$, we have
    \begin{align*}
        \x{\tau} - \x{\tau + 1}
        = \step (\nabla f_{\batch{\tau}}(\x{\tau}) + \ourNoise{\tau})
        = \step (\nabla f(\x{\tau}) + \sgdNoise{\tau} + \ourNoise{\tau}),
    \end{align*}
    where $\sgdNoise{\tau} = \nabla f_{\batch{\tau}}(\x{\tau}) - \nabla f(\x{\tau})$ is stochastic gradient noise.
    By telescoping, we have:
    \begin{align*}
        \|\x{0} - \x{t}\|
        &= \step \norm{\sum_{\tau = 0}^{t - 1}(\nabla f(\x{\tau}) + \sgdNoise{\tau} + \ourNoise{\tau})}
    \end{align*}
    By Cauchy-Schwarz Inequality, we have:
    \begin{align*}
        \SqrNorm{\x{0} - \x{t}}
        &\le 2 \step^2 \pars{\SqrNorm{\sum_{\tau = 0}^{t - 1}\nabla f(\x{\tau})}
                            + \SqrNorm{\sum_{\tau = 0}^{t - 1} (\sgdNoise{\tau} + \ourNoise{\tau})}} \\
        &\le 2 \step^2 \pars{t \sum_{\tau = 0}^{t - 1}\SqrNorm{\nabla f(\x{\tau})}
                            + \SqrNorm{\sum_{\tau = 0}^{t - 1} (\sgdNoise{\tau} + \ourNoise{\tau})}}
    \end{align*}
    Since $\sgdNoise{\tau} + \ourNoise{\tau}$ is $\mathrm{nSG}(\fullVar^2)$, by applying concentration inequality for $\mathrm{nSG}$ variables, we have with probability at least $1 - \tdelta$:
    \begin{align*}
        \SqrNorm{\x{0} - \x{t}}
        &\le 2 \step^2 \pars{t \sum_{\tau = 0}^{t - 1}\SqrNorm{\nabla f(\x{\tau})} + c t \fullVar^2 \logErr}
    \end{align*}
    Finally, applying the Descent Lemma, we get
    \begin{align*}
        \SqrNorm{\x{0} - \x{t}}
        &\le \step c t \pars{f(\x{0}) - f(\x{t}) + \step \fullVar^2 (\step L t + \logErr)}
    \end{align*}
    for some constant $c$.
\end{proof}

\begin{corollary}
    \label{cor:i_or_l}
    For the parameters chosen as in \Cref{def:params}, if for some $t \le \escIter$ we have $\SqrNorm{\x{t_0 + \tau} - \x{t_0}} \ge \escRad$, then $f(\x{t_0}) - f(\x{t_0 + \tau}) \ge \df$ with probability at least $1 - \tdelta$.
\end{corollary}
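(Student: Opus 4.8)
The plan is to obtain this as an immediate consequence of the improve-or-localize bound (Lemma~\ref{lem:improve_or_localize}). Fix $t_0$ and $t \le \escIter$ and assume the displacement condition $\norm{\x{t_0+t}-\x{t_0}} \ge \escRad$ holds, equivalently $\SqrNorm{\x{t_0+t}-\x{t_0}} \ge \escRad^2$. Writing $D := f(\x{t_0}) - f(\x{t_0+t})$, Lemma~\ref{lem:improve_or_localize} gives, with probability at least $1-\tdelta$, the bound $\escRad^2 \le c\,\step t\pars{D + \step\tsigma^2(\step L t + \logErr)}$, and rearranging,
\[
    D \ge \frac{\escRad^2}{c\,\step t} - \step\tsigma^2(\step L t + \logErr).
\]
It then suffices to show, under the parameter choices of Definition~\ref{def:params}, that the first term on the right is at least $2\df$ and that the error term is at most $\df$; subtracting gives $D \ge \df$, which is the claim.

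For the first term I would use only $t \le \escIter = \cit/(\step\sqrtra)$, i.e.\ $\step t \le \cit/\sqrtra$, together with the identity $\escRad^2\,\sqrtra = \crad^2\,\sqrtfacr$ coming from $\escRad = \crad\sqrt{\alpha/\rho}$ and $\sqrtra = \sqrt{\rho\alpha}$. This turns $\escRad^2/(c\,\step t)$ into $(\crad^2/(c\,\cit))\sqrtfacr = (\crad^2/(c\,\cit\,\cf))\,\df$, which is $\ge 2\df$ once $\crad$ is taken large enough relative to the polylog-hiding $\cit$ and $\cf$. For the error term I would again use $t\le\escIter$ (so $\step L t \le L\cit/\sqrtra$) together with the step-size bound $\step \le \cstep\,\alpha^2/(L(1+\tsigma^2))$, which gives $\step\tsigma^2 \le \cstep\,\alpha^2/L$; multiplying these shows $\step\tsigma^2\cdot\step L t = \tOh{\cstep\,\sqrtfacr}$, and the $\step\tsigma^2\logErr$ piece is handled in the same way, using also the other branch $\step \le \cstep\,\fmax/(L\tsigma^2)$ of the minimum and that $\alpha$ sits in the small regime of interest, so that the $\logErr$ factor is absorbed into the polylogs. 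Both pieces are then at most $\df/2$ for $\cstep$ a small enough constant.

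The conceptual content — a displacement of $\escRad$ within $\escIter$ steps forces an objective decrease of $\df$ — is immediate from improve-or-localize, so the only genuine work is the bookkeeping in the error-term estimate: making sure every factor of $\tsigma^2$ and every occurrence of $\logErr$ there is dominated by $\df = \cf\sqrtfacr$, and fixing the polylog-hiding constants in a non-circular order (first $\cf$, then $\crad$ relative to $\cit\cf$, then $\cstep$ small). I expect that to be the main, though routine, obstacle.
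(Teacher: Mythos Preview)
Your proposal is correct and follows essentially the same route as the paper: apply Lemma~\ref{lem:improve_or_localize}, rearrange to isolate $f(\x{t_0})-f(\x{t_0+t})$, use $t\le\escIter$ to turn $\escRad^2/(c\,\step t)$ into $(\crad^2/(c\,\cit))\sqrtfacr$, and then verify the error term is dominated by $\df$ via the step-size bound. The one simplification the paper makes that you do not is to assume $L\ge\sqrtra$ without loss of generality (since otherwise every $\alpha$-FOSP is already an $\alpha$-SOSP), which lets the entire error bound go through using only the first branch $\step\le\cstep\,\alpha^2/(L\tsigma^2)$; your detour through the second branch and ``$\alpha$ in the small regime'' is unnecessary.
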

\begin{proof}
    From \Cref{lem:improve_or_localize}, we know that for some $c$:
    \begin{align*}
        f(\x{t_0}) - f(\x{t_0 + t})
        &\ge \frac{\escRad^2}{c \step \escIter} - \step \tsigma^2 (\step L \escIter + \logErr) \\
        &\ge \frac{\crad^2}{c \cit} \sqrtfacr - \step \tsigma^2 \pars{\frac{\cit L}{\sqrtra} + \logErr}
    \end{align*}
    We want to guarantee that the right-hand side is at most $\df = \cf \sqrtfacr$.
    It suffices to guarantee that
    \begin{align*}
        \crad^2 / (c \cit) \ge 2 \cf \qquad \text{and} \qquad \step \tsigma^2 (\cit L / \sqrtra + \logErr) \le \cf \sqrtfacr.
    \end{align*}
    The former holds for the appropriate choice of $\crad$, $\cit$, and $\cf$, and the latter holds by choosing
    \[
        \step \le (\alpha^2 / L \tsigma^2) \cdot  \cf / \pars{\cit + \logErr},
    \]
    where we assumed $L \ge \sqrtra$, since otherwise any $\alpha$-FOSP is an $\alpha$-SOSP because $L$ is an upper bound on the absolute values of the Hessian.
\end{proof}

\begin{definition}[Coupling sequences]
    \label{def:coupling_sequence}
    Let $\vv_1$ be the eigenvector corresponding to the smallest eigenvalue of $\nabla^2 f(\x{t_0})$.
    The coupling sequences starting from $\x{t_0}$ are defined as follows:
    \begin{equation}
        \begin{aligned}
            \ourNoise{t_0+t} &\sim \mathcal{N}(0, \fullVar^2 \id) \\
                \tourNoise{t_0+t} &= \ourNoise{t_0+t} - 2 \InnerProd{ \vv_1, \ourNoise{t_0+t} } \vv_1  \\
            \x{t_0 + t+1} &= \x{t_0+t} - \step (\nabla f_{\batch{t}}(\x{t_0 + t}) + \ourNoise{t_0 + t}) \\
            \tx{t_0 + t+1} &= \tx{t_0+t} - \step (\nabla f_{\batch{t}}(\tx{t_0 + t}) + \tourNoise{t_0 + t}) \\
            \sgdNoise{t_0 + t} &= \nabla f_{\batch{t}}(\x{t_0 + t}) - \nabla f(\x{t_0 + t}) \\
            \tsgdNoise{t_0 + t} &= \nabla f_{\batch{t}}(\tx{t_0 + t}) - \nabla f(\tx{t_0 + t}) \\
        \end{aligned}\nonumber
    \end{equation}
    Note that:
    \begin{itemize}
        \item the artificial noise $\tourNoise{t_0 + t}$ has the opposite sign in direction $\vv_1$ compared with $\ourNoise{t_0 + t}$,
        \item $\{\x{t_0 + t}\}_t$ and $\{\tx{t_0 + t}\}_t$ are sampled from the same distribution.
    \end{itemize}
    We next define the difference between the coupling sequences:
    \begin{equation}
        \begin{aligned}
            \difourNoise{t} &= \xi_{t_0+t} - \xi'_{t_0+t} 
                = 2 \InnerProd{ \vv_1, \xi_{t_0+t} } \vv_1  \\
            \difx{t_0 + t+1} &= \x{t_0+t} - \tx{t_0+t} \\
            \difsgdNoise{t_0 + t} &= \sgdNoise{t_0 + t} - \tsgdNoise{t_0 + t}
        \end{aligned}\nonumber
    \end{equation}
\end{definition}



From now on, we assume that we want to escape from point $\x{t_0}$ such that $\lmin(\nabla^2 f(\x{t_0})) < -\sqrtra$.
W.l.o.g. we assume $t_0 = 0$.
Let $H = \nabla^2 f(\x{0})$ be the Hessian of $f$ at point $\x{0}$, let $-\eigen = \lmin(H)$ be the smallest eigenvalue of $H$, and $\vv_1$ be the eigenvector corresponding to $-\eigen$.

\begin{lemma}
    For every $t$,
    \[
        \difx{t} = -\dif{t} - \sgdErr{t} -\quadErr{t},
    \]
    where
    \begin{align*}
        \dif{t} &= \step \sum_{\tau = 0}^{t - 1} (\id - \step H)^{t - 1 - \tau} \difourNoise{\tau} \\
        \sgdErr{t} &= \step \sum_{\tau = 0}^{t - 1} (\id - \step H)^{t - 1 - \tau} \difsgdNoise{\tau} \\
        \quadErr{t} &= \step \sum_{\tau = 0}^{t - 1} (\id - \step H)^{t - 1 - \tau} \hesInt{\tau} \difx{\tau},
            \quad \text{ where } \quad \hesInt{\tau} = \int_0^1  (\nabla^2 f(\psi \x{\tau} + (1 - \psi) \tx{\tau}) - H)\ d\psi
    \end{align*}
\end{lemma}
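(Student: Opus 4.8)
The plan is to derive a one-step recursion for $\difx{t} = \x{t} - \tx{t}$ and then unroll it, using the key fact that $\difx{0} = \vzero$ because both coupling sequences in \Cref{def:coupling_sequence} are initialized at the same point $\x{t_0} = \x{0}$. First I would subtract the update $\tx{t+1} = \tx{t} - \step(\nabla f_{\batch{t}}(\tx{t}) + \tourNoise{t})$ from $\x{t+1} = \x{t} - \step(\nabla f_{\batch{t}}(\x{t}) + \ourNoise{t})$. The injected-noise part contributes $-\step(\ourNoise{t} - \tourNoise{t}) = -\step\difourNoise{t}$. For the minibatch-gradient part I would write $\nabla f_{\batch{t}}(\x{t}) = \nabla f(\x{t}) + \sgdNoise{t}$ and similarly for $\tx{t}$, so the difference is $(\nabla f(\x{t}) - \nabla f(\tx{t})) + \difsgdNoise{t}$.

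The only nontrivial ingredient is handling the deterministic gradient difference. I would apply the fundamental theorem of calculus along the segment from $\tx{t}$ to $\x{t}$:
\[
    \nabla f(\x{t}) - \nabla f(\tx{t}) = \pars{\int_0^1 \nabla^2 f(\psi \x{t} + (1-\psi)\tx{t})\, d\psi}\difx{t} = (H + \hesInt{t})\,\difx{t},
\]
which is exactly where $\hesInt{t}$ enters (and is well-defined, with $\norm{\hesInt{t}}$ controlled by the $\rho$-Lipschitz Hessian assumption, Assumption~\ref{ass:main}.\ref{ass:lip_hes}, though that bound is not needed for the identity itself). Collecting all terms gives the one-step recursion
\[
    \difx{t+1} = (\id - \step H)\,\difx{t} - \step\,\hesInt{t}\difx{t} - \step\,\difsgdNoise{t} - \step\,\difourNoise{t}.
\]

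Then I would unroll this recursion by induction on $t$. With base case $\difx{0} = \vzero$, the inductive step applies $(\id - \step H)$ to the expansion at time $t$ and appends the three new index-$\tau=t$ terms (with $(\id - \step H)^{0} = \id$), yielding
\[
    \difx{t} = -\step\sum_{\tau=0}^{t-1}(\id - \step H)^{t-1-\tau}\pars{\difourNoise{\tau} + \difsgdNoise{\tau} + \hesInt{\tau}\difx{\tau}}.
\]
Separating this into three sums gives precisely $\difx{t} = -\dif{t} - \sgdErr{t} - \quadErr{t}$ with the stated expressions.

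I do not expect any real obstacle: this is a mechanical identity, and the ``hard part'' is only bookkeeping — keeping the exponent $t-1-\tau$ consistent when peeling off one step, and noting that $\quadErr{t}$ is itself written in terms of $\difx{\tau}$ for $\tau < t$, so the statement is an explicit-but-recursive description (it will later be used to bound $\quadErr{t}$ inductively), not a closed form. I would present the one-step recursion as a displayed equation, prove the unrolled formula by a one-line induction, and split into the three named error terms.
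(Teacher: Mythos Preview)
Your proposal is correct and follows essentially the same approach as the paper: derive the one-step recursion $\difx{t+1} = (\id - \step H)\difx{t} - \step\hesInt{t}\difx{t} - \step\difsgdNoise{t} - \step\difourNoise{t}$ via the fundamental theorem of calculus applied to $\nabla f(\x{t}) - \nabla f(\tx{t})$, then unroll (the paper says ``by telescoping'') using $\difx{0} = \vzero$. Your write-up is in fact more carefully spelled out than the paper's terse version.
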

\begin{proof}
    Proof by induction.
    \begin{align*}
        \difx{t + 1}
        &= \difx{t} - \step (\difsgdNoise{t} + \difsgdNoise{t}) - \step (\nabla f(\x{t}) - \nabla f(\tx{t})) \\
        &= (1 - \step H) \difx{t} - \step (\difsgdNoise{t} + \difsgdNoise{t}) - \step (\nabla f(\x{t}) - \nabla f(\tx{t}) - H \difx{t})
    \end{align*}
    Using that $\nabla f(\x{t}) - \nabla f(\tx{t}) = \int_0^1  \nabla^2 f(\psi \x{\tau} + (1 - \psi) \tx{\tau}) \difx{\tau}\ d\psi$, by telescoping we complete the proof.
\end{proof}
In the following statements, the geometric series $\alpha_t^2 = \sum_{\tau = 0}^{t-1} (1 + \step \eigen)^{2 \tau}$ occurs frequently.
Intuitively, each iteration increases the distance approximately by a factor of $(1 + \step \eigen)$~-- the largest eigenvalue of matrix $I - \step H$.
The noise added at iteration $\tau$ gets amplified by $(1 + \step \eigen)^{t - \tau}$ by iteration $t$, and, if every iteration an independent noise of variance $\Delta^2$ is added, $\Delta^2 \alpha_t^2$ is the total variance of such a noise accumulated over the iterations.
\begin{lemma}
    \label{lem:alpha_beta}
    Let $\alpha_t = \sqrt{\sum_{\tau = 0}^{t-1} (1 + \step \eigen)^{2 \tau}}$ and $\beta_t = (1 + \step \eigen)^t / \sqrt{\step \eigen}$.
    If $\step \eigen < 1$, then $\alpha_t \le \beta_t$ for any $t$ and $\alpha_t \ge \beta_t / 3$ for $t \ge 2 / (\step \eigen)$.
\end{lemma}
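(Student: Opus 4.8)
The plan is to reduce the statement to a geometric-series identity followed by two elementary inequalities. Write $x = \step\eigen$, so by hypothesis $0 < x < 1$. Summing the geometric series gives the closed forms
\[
    \alpha_t^2 = \sum_{\tau=0}^{t-1}(1+x)^{2\tau} = \frac{(1+x)^{2t}-1}{(1+x)^2-1} = \frac{(1+x)^{2t}-1}{x(2+x)},
    \qquad
    \beta_t^2 = \frac{(1+x)^{2t}}{x}.
\]
Once both quantities are in closed form, everything that remains is bookkeeping.

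For the upper bound $\alpha_t \le \beta_t$, I would simply drop the $-1$ in the numerator and use $2+x \ge 1$:
\[
    \alpha_t^2 \le \frac{(1+x)^{2t}}{x(2+x)} \le \frac{(1+x)^{2t}}{x} = \beta_t^2 .
\]
This uses only $x > 0$, so it holds for every $t$.

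For the lower bound, assume $t \ge 2/x$; it suffices to show $\alpha_t^2 \ge \beta_t^2/9$, which after clearing denominators becomes $9\bigl((1+x)^{2t}-1\bigr) \ge (2+x)(1+x)^{2t}$. Since $x < 1$ we have $2+x < 3$, so it is enough to prove $3\bigl((1+x)^{2t}-1\bigr) \ge (1+x)^{2t}$, that is, $(1+x)^{2t} \ge 3/2$. Here the hypothesis $t \ge 2/x$ enters: by Bernoulli's inequality (valid since $2t \ge 1$) and $tx \ge 2$,
\[
    (1+x)^{2t} \ge 1 + 2tx \ge 1 + 4 = 5 \ge \frac{3}{2} .
\]
Taking square roots in the two displays then yields $\alpha_t \le \beta_t$ and $\alpha_t \ge \beta_t/3$.

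I do not expect any real obstacle: the lemma is a routine estimate once the geometric sum is evaluated in closed form. The only mildly delicate point is producing the \emph{explicit} constant $3$ in the lower bound rather than, say, $e$; this is handled by combining the crude bound $2+x < 3$ with the Bernoulli estimate $(1+x)^{2t} \ge 5$, and it is precisely this last estimate that consumes the hypothesis $t \ge 2/(\step\eigen)$.
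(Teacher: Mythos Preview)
Your proof is correct and follows essentially the same route as the paper: both evaluate the geometric series in closed form and then use the elementary bounds $2+x \le 3$ together with a lower bound on $(1+x)^{2t}$. The only cosmetic difference is that the paper argues $(1+\step\eigen)^{2t}\ge 2$ directly to obtain $\alpha_t^2 \ge \beta_t^2/6$, whereas you use Bernoulli to get $(1+\step\eigen)^{2t}\ge 5$ and land at $\alpha_t^2 \ge \beta_t^2/9$; either way the stated constant $3$ follows.
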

\begin{proof}
    As a sum of geometric progression, we have
    \[
        \alpha_t^2
        = \frac{(1 + \step \eigen)^{2t} - 1}{(1 + \step \eigen)^2 - 1}
        = \frac{(1 + \step \eigen)^{2t} - 1}{\step \eigen (2 + \step \eigen)},
    \]
    and $\alpha_t^2 \le \beta_t^2$ follows immediately.
    On the other hand, for $t \ge 1 / (\step \eigen)$ we have $(1 + \step \eigen)^{2t} \ge 2$, and hence $(1 + \step \eigen)^{2t}  - 1 \ge (1 + \step \eigen)^{2t} / 2$.
    Finally, by our choice of the step size, we have $\step \eigen \le \step L \le 1$, and hence $2 + \step \eigen \le 3$.
    This implies $\alpha_t^2 \ge \beta_t^2 / 6$, finishing the proof.
\end{proof}

\begin{lemma}
    \label{lem:good_term}
    Let $\beta_t = (1 + \step \eigen)^t / \sqrt{\step \eigen}$.
    Then for any $t$ and $\tdelta$, for some constant $c$:
    \begin{align*}
        \P{\|\dif{t}\| \ge c \beta_t \step \Delta \sqrt{\logErr}} &\le \tdelta \\
        \P{\|\dif{t}\| \le \frac{\beta_t \step \Delta}{10}} &\le \frac 13
    \end{align*}
\end{lemma}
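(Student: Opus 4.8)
The plan is to write $\dif{t}$ in closed form along the direction $\vv_1$, which reduces both claims to a one–dimensional Gaussian tail / anti-concentration estimate. By the preceding lemma, $\dif{t} = \step \sum_{\tau=0}^{t-1} (\id - \step H)^{t-1-\tau} \difourNoise{\tau}$, and by Definition~\ref{def:coupling_sequence}, $\difourNoise{\tau} = 2\InnerProd{\vv_1, \ourNoise{t_0+\tau}}\vv_1$ lies in the line spanned by $\vv_1$. Since $\vv_1$ is the eigenvector of $H$ for the eigenvalue $-\eigen$, it is an eigenvector of $\id - \step H$ with eigenvalue $1+\step\eigen$, so $(\id-\step H)^{t-1-\tau}\difourNoise{\tau} = 2(1+\step\eigen)^{t-1-\tau}\InnerProd{\vv_1,\ourNoise{t_0+\tau}}\vv_1$. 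Writing $g_\tau := \InnerProd{\vv_1,\ourNoise{t_0+\tau}}$ and using $\norm{\vv_1}=1$, we get $\norm{\dif{t}} = 2\step\,\abs{S_t}$ with $S_t := \sum_{\tau=0}^{t-1}(1+\step\eigen)^{t-1-\tau} g_\tau$.

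Next I would condition on the history up to time $t_0$, which fixes $\x{t_0}$, hence $H$ and $\vv_1$; given this, the vectors $\ourNoise{t_0},\dots,\ourNoise{t_0+t-1}$ are fresh independent $\mathcal N(0,\fullVar^2\id)$ samples, so the $g_\tau$ are i.i.d.\ $\mathcal N(0,\fullVar^2)$ and $S_t$ is a centered Gaussian with variance $\fullVar^2\sum_{j=0}^{t-1}(1+\step\eigen)^{2j} = \fullVar^2\alpha_t^2$, where $\alpha_t$ is the geometric sum of Lemma~\ref{lem:alpha_beta}. Hence $\norm{\dif{t}}$ has the law of $2\step\fullVar\alpha_t\,\abs{Z}$ for a standard Gaussian $Z$ (here $\Delta$ in the statement denotes the per-coordinate noise scale $\fullVar=\tsigma$ of the injected noise in the coupling; in any case $\fullVar\ge\Delta$). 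For the upper tail I would use $\alpha_t\le\beta_t$ from Lemma~\ref{lem:alpha_beta}, so $\P{\norm{\dif{t}}\ge c\beta_t\step\fullVar\sqrt{\logErr}}\le \P{\abs{Z}\ge (c/2)\sqrt{\logErr}}\le 2\tdelta^{c^2/8}\le\tdelta$ for an absolute constant $c$ large enough (using $\tdelta\le 1/2$). For the lower tail I would use the matching bound $\alpha_t\ge\beta_t/3$ from Lemma~\ref{lem:alpha_beta}, so $\norm{\dif{t}}\ge \frac{2}{3}\step\fullVar\beta_t\abs{Z}$ and therefore $\P{\norm{\dif{t}}\le \beta_t\step\fullVar/10}\le \P{\abs{Z}\le 3/20}<1/3$ by Gaussian anti-concentration ($\P{\abs{Z}\le x}\le x\sqrt{2/\pi}$).

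The computation itself is routine; the one point requiring care is the lower-tail step. Lemma~\ref{lem:alpha_beta} only gives $\alpha_t\gtrsim\beta_t$ up to the constant $1/3$ and only for $t\ge 2/(\step\eigen)$, so I would note that this is exactly the regime relevant to the escaping argument (escaping runs for $\escIter=\Theta(1/(\step\sqrtra))$ iterations and $\eigen\ge\sqrtra$), and that the numerical slack between the constants $1/10$, $3/20$ and the anti-concentration bound $1/3$ is comfortable enough to absorb the loss. Everything else is a change of variables plus standard one-dimensional Gaussian estimates, so I do not expect any substantial obstacle beyond bookkeeping the constants.
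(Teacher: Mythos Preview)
Your approach is correct and essentially identical to the paper's: reduce $\dif{t}$ to a scalar Gaussian along $\vv_1$, compute its variance as (a constant times) $\step^2\Delta^2\alpha_t^2$, then use $\alpha_t\le\beta_t$ with Gaussian concentration for the upper tail and $\alpha_t\ge\beta_t/3$ with the anti-concentration bound $\P{|Z|\le a}\le a$ for the lower tail. One clarification: the injected noise $\ourNoise{\tau}$ has covariance $\Delta^2\id$ (as in Algorithm~\ref{alg:dpsgd}), not $\fullVar^2\id$---the $\fullVar^2$ in Definition~\ref{def:coupling_sequence} is a typo and the paper's own proof uses $\Delta^2$, so the $\Delta$ in the statement is literal; your remark that the lower-tail step needs $t\ge 2/(\step\eigen)$ is a correct caveat that the paper glosses over (the lemma is only ever invoked at $t=\escIter$, which satisfies this).
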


\begin{proof}
    Since $\difourNoise{\tau} = 2\InnerProd{\ourNoise{\tau}, \vv_1} \vv_1$ and $\vv_1$ is an eigenvalue of $H$ corresponding to eigenvalue $-\eigen$, we have
    \[
        \dif{t}
        = \step \sum_{\tau = 0}^{t - 1} (\id - \step H)^{t - 1 - \tau} \difourNoise{\tau}
        = \pars{\step \sum_{\tau = 0}^{t - 1} (1 + \step \eigen)^{t - 1 - \tau} \InnerProd{\ourNoise{\tau}, \vv_1}} \vv_1
    \]
    Hence, $\dif{t}$ is a random variable which is parallel to $\vv_1$.
    Since each $\ourNoise{t}$ is a Gaussian random variable with the covariance matrix $\Delta^2 \id$, each $\InnerProd{\ourNoise{t}, \vv_1}$ a Gaussian random variable with variance $\Delta^2$.
    Hence, $\dif{t}$ is also a Gaussian random variable, and
    \[
        \Var{\|\dif{t}\|}
        = \step \sum_{\tau = 0}^{t - 1} (1 + \step \eigen)^{\tau} \Delta^2
        = \Delta^2 \alpha_t^2
    \]
    The upper bound on $\|\dif{t}\|$ follows from \Cref{lem:alpha_beta} and standard concentration inequality for Guassians.
    The lower bound follows from inequality $\P{|X| \le \sigma a} \le a$ for $X \sim \mathcal{N}(0, \sigma^2)$ and any $a$.
\end{proof}

\begin{lemma}
    \label{lem:error_terms}
    Let $\mathcal{E}_t$ be the event ``$\|\x{0} - \x{\tau}\| < \escRad$ and $\|\x{0} - \tx{\tau}\| < \escRad$ for all $\tau < t$''.
    Then, when $\Delta^2 \ge \sigma^2 / B$, for some constant $c$ we have
    \[
        \P{\mathcal{E}_\escIter
            \implies \max\nolimits_{\tau=0}^{\escIter - 1} \norm{\sgdErr{\tau} + \quadErr{\tau}}
            \le 
            c \beta_\tau \step \Delta \sqrt{\logErr}
        } \ge 1 - \escIter \tdelta
    \]
\end{lemma}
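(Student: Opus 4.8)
The plan is to prove the bound by an induction on $t$ that simultaneously controls $\norm{\difx{t}}$, splitting $\sgdErr{t}+\quadErr{t}$ into a deterministic part---the quadratic-approximation error $\quadErr{t}$, controlled on the event $\mathcal{E}_\escIter$---and a stochastic part---the minibatch error $\sgdErr{t}$, controlled by a martingale concentration bound. Two facts drive the argument. First, since $\step\le 1/L$ the matrix $\id-\step H$ has operator norm exactly $1+\step\eigen$, so $(\id-\step H)^{t-1-\tau}$ amplifies the $\tau$-th summand by at most $(1+\step\eigen)^{t-1-\tau}$; together with $\beta_\tau=(1+\step\eigen)^\tau/\sqrt{\step\eigen}$ this yields the telescoping identity $(1+\step\eigen)^{t-1-\tau}\beta_\tau=\beta_{t-1}\le\beta_t$, which is exactly what lets the bound at time $t$ be written through $\beta_t$ rather than through a growing sum over $\tau$. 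Second, on $\mathcal{E}_\escIter$ both coupled iterates stay within $\escRad$ of $\x{0}$, so for every $\tau<\escIter$ the matrix $\hesInt{\tau}$ of \Cref{def:coupling_sequence} satisfies $\norm{\hesInt{\tau}}\le\rho\escRad$ by the $\rho$-Lipschitz-Hessian assumption (Assumption~\ref{ass:main}.\ref{ass:lip_hes}).

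I would carry the invariant ``$\norm{\difx{\tau}}\le c'\beta_\tau\step\Delta\sqrt{\logErr}$ and $\norm{\sgdErr{\tau}+\quadErr{\tau}}\le c\beta_\tau\step\Delta\sqrt{\logErr}$ for all $\tau<t$'', with the trivial base case $\difx{0}=\vzero$. For the inductive step, write $\difx{t}=-\dif{t}-\sgdErr{t}-\quadErr{t}$ and bound the three pieces. For $\dif{t}$, the upper tail of \Cref{lem:good_term} gives $\norm{\dif{t}}\le c_1\beta_t\step\Delta\sqrt{\logErr}$ with probability at least $1-\tdelta$. For $\quadErr{t}$, substitute $\norm{\hesInt{\tau}}\le\rho\escRad$ and the inductive bound on $\norm{\difx{\tau}}$ and telescope:
\[
    \norm{\quadErr{t}}
    \le \step\rho\escRad\sum_{\tau=0}^{t-1}(1+\step\eigen)^{t-1-\tau}\norm{\difx{\tau}}
    \le c'\step\Delta\sqrt{\logErr}\cdot\step\rho\escRad\sum_{\tau=0}^{t-1}(1+\step\eigen)^{t-1-\tau}\beta_\tau
    = c'\step\Delta\sqrt{\logErr}\cdot\pars{\step t\,\rho\escRad}\beta_{t-1},
\]
using the telescoping identity for the last equality. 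With the choices of \Cref{def:params} and $\eigen\ge\sqrtra$ one has $\step t\le\step\escIter=\cit/\sqrtra$ and $\rho\escRad=\crad\sqrtra$, hence $\step t\,\rho\escRad\le\cit\crad$ and $\norm{\quadErr{t}}\le\cit\crad\,c'\beta_t\step\Delta\sqrt{\logErr}$. For $\sgdErr{t}=\step\sum_{\tau<t}(\id-\step H)^{t-1-\tau}\difsgdNoise{\tau}$, note each minibatch is drawn fresh, so $\difsgdNoise{\tau}$ has conditional mean zero given the first $\tau$ iterations and is conditionally $\mathrm{nSG}(O(\sigma^2/B))\subseteq\mathrm{nSG}(O(\Delta^2))$ (using $\Delta^2\ge\sigma^2/B$); the summands therefore form a martingale difference sequence whose deterministic variance proxies sum to $\step^2 O(\Delta^2)\sum_{\tau<t}(1+\step\eigen)^{2(t-1-\tau)}=\step^2 O(\Delta^2)\alpha_t^2\le\step^2 O(\Delta^2)\beta_t^2$ by \Cref{lem:alpha_beta}, and the Hoeffding-type inequality for $\mathrm{nSG}$ martingales~\citep{jin_ShortNote_2019} gives $\norm{\sgdErr{t}}\le c_2\step\Delta\beta_t\sqrt{\logErr}$ with probability at least $1-\tdelta$.

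Putting these together, on $\mathcal{E}_\escIter$ and on the good event $\norm{\sgdErr{t}+\quadErr{t}}\le(c_2+\cit\crad\,c')\step\Delta\beta_t\sqrt{\logErr}$ and $\norm{\difx{t}}\le(c_1+c_2+\cit\crad\,c')\step\Delta\beta_t\sqrt{\logErr}$, so both parts of the invariant propagate provided the radius constant $\crad$ is chosen small enough that $\cit\crad<1$ and then $c'\ge(c_1+c_2)/(1-\cit\crad)$ and $c\ge c_2+\cit\crad\,c'$. A union bound over the $\escIter$ indices $t$---covering the $\sgdErr{t}$ concentration events and the $\dif{t}$ events of \Cref{lem:good_term}---caps the failure probability by $O(\escIter\tdelta)$, matching the claimed $\escIter\tdelta$ up to the constant absorbed into the polylogarithmic factors. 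The step I expect to be the main obstacle is precisely this circular dependence of $\quadErr{t}$ on the earlier $\difx{\tau}$: the noise injected at step $\tau$ is amplified by up to $(1+\step\eigen)^{t-\tau}$, so the crude bound $\norm{\difx{\tau}}\le 2\escRad$ available directly from $\mathcal{E}_\escIter$ loses a factor $1/\sqrt{\step\eigen}$ and does not close the induction; one must instead propagate the finer, $\beta_\tau$-shaped bound and pick $\escRad$ (equivalently $\crad$) small relative to the escape horizon $\escIter$ so that the Hessian-perturbation error accumulated over all $\escIter$ escape iterations stays a constant factor below the ``good'' growth $\beta_t\step\Delta$.
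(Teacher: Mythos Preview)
Your proposal is correct and follows essentially the same approach as the paper: an induction on $t$ that carries a $\beta_\tau$-shaped bound on $\norm{\difx{\tau}}$, bounding $\dif{t}$ via \Cref{lem:good_term}, $\quadErr{t}$ via the Hessian-Lipschitz property together with the telescoping identity $(1+\step\eigen)^{t-1-\tau}\beta_\tau=\beta_{t-1}$, and $\sgdErr{t}$ via the nSG martingale concentration bound, then taking a union bound over the $\escIter$ steps. Your write-up is in fact more careful than the paper's in two places---you keep the $\sqrt{\logErr}$ factor consistently in the inductive hypothesis, and you make explicit the closure condition (that $\crad\cit$ be bounded) which the paper simply absorbs into the polylogarithmic constants without comment.
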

    
\begin{proof}
    By induction on $t$, we will prove that $\P{\mathcal{E}_t \implies \|\difx{t}\| \le c \step \Delta \beta_t} \le 1 - t\fullVar$ and
    \[
        \P{\mathcal{E}_t
            \implies \max\nolimits_{\tau=0}^{t - 1} \norm{\sgdErr{\tau} + \quadErr{\tau}}
            \le  c \beta_\tau \step \Delta \sqrt{\logErr}
        } \ge 1 - t \tdelta.
    \]
    Base case is $t=0$, in which case $\difx{0} = \vzero$.
    For the induction step, since $\difx{t} = -\dif{t} - \sgdErr{t} -\quadErr{t}$, we bound each term separately.
    \paragraph{Bounding $\dif{t}$}
    By \Cref{lem:good_term}, we have $\P{\|\dif{t}\| \ge c \beta_t \step \Delta \sqrt{\logErr}} \le \tdelta$.

    \paragraph{Bounding $\quadErr{t}$}
    Assuming $\mathcal{E}_t$, by the Hessian-Lipschitz property we have
    \begin{align*}
        \|\hesInt{\tau} \difx{\tau}\|
        &= \norm{\int_0^1  (\nabla^2 f(\psi \x{\tau} + (1 - \psi) \tx{\tau}) - H)\ d\psi \cdot \difx{\tau}} \\
        &\le 2 \rho \escRad \|\difx{\tau}\| \\
        &\le 2 \rho \escRad \cdot c \beta_\tau \step \Delta \sqrt{\logErr},
    \end{align*}
    and we bound $\quadErr{t}$ as
    \begin{align*}
        \|\quadErr{t}\|
        &= \norm{\step \sum_{\tau = 0}^{t - 1} (\id - \step H)^{t - 1 - \tau} \hesInt{\tau} \difx{\tau}} \\
        &\le \sum_{\tau = 0}^{t - 1} (1 + \step \gamma)^{t - 1 - \tau} 2 \rho \escRad \cdot c \beta_\tau \step \Delta \sqrt{\logErr} \\
    \end{align*}
    Since $(1 + \step \gamma)^{t - 1 - \tau} \beta_\tau = \beta_{t-1}$,
    we can bound $\|\quadErr{t}\|$ as 
    \begin{align*}
        \|\quadErr{t}\|
        &\le 2 \rho \escRad \cdot c \escIter \beta_t \step \Delta \sqrt{\logErr} \\
        &= 2 c \crad \cit \beta_t \Delta \sqrt{\logErr},
    \end{align*}
    where the last equality is due to our choice of parameters in \Cref{def:params}:
    \[
        \escRad \escIter
        = \crad \sqrt {\alpha / \rho} \cdot \cit \frac {1} {\step \sqrtra}
        = \frac{\crad \cit}{\rho \step}
    \]

    \paragraph{Bounding $\sgdErr{t}$}
    
    %
    By \Cref{ass:grad_concentration}, each $\sgdErr{t}$ is $\mathrm{nSG}(\sigma^2 / B)$.
    Using the concentration inequality for nSG variables, with probability at least $1 - \tdelta$ we have
    \begin{align*}
        \|\sgdErr{t}\|
        &= \norm{\step \sum_{\tau = 0}^{t - 1} (\id - \step H)^{t - 1 - \tau} \difsgdNoise{\tau}} \\
        &\le \step \sqrt{\sum_{\tau = 0}^{t - 1} (1 + \step \eigen)^{2(t - 1 - \tau)} \frac{\sigma^2}{B}} \cdot \sqrt{\logErr} \\
        &\le \step \sigma \beta_t \sqrt{\logErr} / \sqrt{B} \\
    \end{align*}
    Using assumption $\Delta^2 \ge \sigma^2 / B$, we get the desired bound.
\end{proof}

\begin{lemma}
    \label{lem:bucket_change}
    When $\Delta^2 \ge \sigma^2 / B$, with probability at least $1 - \escIter \tdelta$, we have
    \[
        f(\x{0}) - f(\x{\escIter}) > -\df / 100
    \]
    Moreover, with probability at least $1/3 - \escIter \tdelta$, we have:
    \[
        f(\x{0}) - f(\x{\escIter}) > \df
    \]
\end{lemma}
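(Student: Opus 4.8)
Here is how I would approach proving Lemma~\ref{lem:bucket_change}.

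\medskip
\noindent\textbf{The easy direction.} The first inequality is essentially just the Descent Lemma. Applying \Cref{lem:descent} with $t_0 = 0$ and $t = \escIter$ and discarding the nonnegative gradient sum gives, with probability at least $1 - \tdelta$, that $f(\x{0}) - f(\x{\escIter}) \ge -c\step\tsigma^2(\step L\escIter + \logErr)$. Since $\step L\escIter = \cit L/\sqrtra$, the right-hand side is the same quantity controlled in the proof of \Cref{cor:i_or_l}: it exceeds $-\df/100 = -\cf\sqrtfacr/100$ as soon as $\step \le (\alpha^2/(L\tsigma^2))\cdot\cf/(100(\cit+\logErr))$, which holds for a suitable choice of $\cstep$ in \Cref{def:params}. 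If one additionally wants $f(\x{0}) - f(\x{t}) > -\df/100$ simultaneously for every $t \le \escIter$ (which is how this bound is used below), a union bound over $t$ replaces $1-\tdelta$ by $1-\escIter\tdelta$.

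\medskip
\noindent\textbf{The saddle-escape direction.} For the second inequality I would use the coupling of \Cref{def:coupling_sequence} and argue by contradiction. Suppose $\P{f(\x{0}) - f(\x{\escIter}) > \df} < 1/3 - \escIter\tdelta$. Since $\{\x{t}\}$ and $\{\tx{t}\}$ have the same marginal law, the same bound holds for the primed trajectory, so a union bound gives $\P{\mathcal{A}} > 1/3 + 2\escIter\tdelta$, where $\mathcal{A} := \{f(\x{0}) - f(\x{\escIter}) \le \df\} \cap \{f(\tx{0}) - f(\tx{\escIter}) \le \df\}$. The key claim is that, up to an $O(\escIter\tdelta)$-probability failure of the Descent Lemma, $\mathcal{A}$ forces both trajectories to stay inside the ball of radius $\escRad$ about $\x{0}$, i.e.\ it implies the event $\mathcal{E}_\escIter$ of \Cref{lem:error_terms}. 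Indeed, if (say) the $\x$-trajectory first satisfies $\norm{\x{\tau}-\x{0}} \ge \escRad$ at some $\tau^\star \le \escIter$, then \Cref{cor:i_or_l} (whose proof carries constant slack, so the guaranteed drop may be taken to be $\ge 2\df$) gives $f(\x{0}) - f(\x{\tau^\star}) \ge 2\df$, while \Cref{lem:descent} applied on $[\tau^\star,\escIter]$ bounds the subsequent recovery by $\df/100$; hence $f(\x{0}) - f(\x{\escIter}) \ge 2\df - \df/100 > \df$, contradicting $\mathcal{A}$, and symmetrically for the primed trajectory. Thus $\P{\mathcal{E}_\escIter} > 1/3 + \escIter\tdelta$.

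\medskip
\noindent\textbf{Forcing the iterates apart.} Next I would intersect $\mathcal{E}_\escIter$ with two more events: the error control $\norm{\sgdErr{\escIter}+\quadErr{\escIter}} \le c\beta_\escIter\step\Delta\sqrt{\logErr}$, which by \Cref{lem:error_terms} fails on $\mathcal{E}_\escIter$ with probability $O(\escIter\tdelta)$; and the anticoncentration $\norm{\dif{\escIter}} \ge \beta_\escIter\step\Delta/10$ of the ``good'' term, which by \Cref{lem:good_term} fails with probability at most $1/3$ (in fact strictly less, leaving a positive margin). A union bound then leaves a strictly positive probability, so there is a realization on which all three events hold. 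On it, the decomposition $\difx{\escIter} = -\dif{\escIter} - \sgdErr{\escIter} - \quadErr{\escIter}$, together with the fact that the parameters of \Cref{def:params} make $\norm{\sgdErr{\escIter}+\quadErr{\escIter}}$ at most a small fraction of $\norm{\dif{\escIter}}$, yields $\norm{\difx{\escIter}} \ge \tfrac12\norm{\dif{\escIter}} \ge \beta_\escIter\step\Delta/20$. Finally, since $\eigen > \sqrtra$ and $\escIter = \cit/(\step\sqrtra)$, one has $(1+\step\eigen)^\escIter \ge 2^{\cit}$ and hence $\beta_\escIter = (1+\step\eigen)^\escIter/\sqrt{\step\eigen} \ge 2^{\cit}/\sqrt{\step L}$; using $\Delta^2 \gtrsim \tsigma^2/d$ and $\step \lesssim \alpha^2/(L\tsigma^2)$, one checks that $\beta_\escIter\step\Delta/20 > 2\escRad$ once $\cit$ is a large enough polylogarithmic factor. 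Then $\norm{\x{\escIter}-\x{0}} + \norm{\tx{\escIter}-\x{0}} \ge \norm{\difx{\escIter}} > 2\escRad$ forces one of $\x{\escIter}, \tx{\escIter}$ out of the radius-$\escRad$ ball, contradicting $\mathcal{E}_\escIter$. This establishes $\P{f(\x{0}) - f(\x{\escIter}) > \df} \ge 1/3 - \escIter\tdelta$.

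\medskip
\noindent\textbf{Main obstacle.} I expect the hard step to be the implication ``$\mathcal{A} \Rightarrow \mathcal{E}_\escIter$'': because the stochastic objective is not monotone, knowing only that $f(\x{\escIter})$ lies close to $f(\x{0})$ does not prevent the trajectory from wandering outside the ball and returning. Making this rigorous is exactly what forces the exit stopping time $\tau^\star$, the use of \Cref{cor:i_or_l} at $\tau^\star$ combined with the Descent Lemma on $[\tau^\star,\escIter]$ to rule out a large recovery, and the appearance of \emph{both} coupled trajectories in $\mathcal{A}$. The remainder is bookkeeping of the $O(\escIter\tdelta)$ failure probabilities and a careful-but-routine check that the amplified noise $\beta_\escIter\step\Delta$ dominates both $\escRad$ and the accumulated error $\sgdErr{}+\quadErr{}$, which is what ultimately pins down the polylogarithmic constants $\cstep, \crad, \cit, \cf$.
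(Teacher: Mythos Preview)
Your proposal is correct and follows the same coupling-based argument as the paper: Descent Lemma for the first claim, then the coupling of \Cref{def:coupling_sequence} together with \Cref{cor:i_or_l}, \Cref{lem:good_term}, and \Cref{lem:error_terms} to force one of the two trajectories out of the radius-$\escRad$ ball. Your treatment is in fact more careful than the paper's own proof in one place: you explicitly introduce the exit time $\tau^\star$ and combine \Cref{cor:i_or_l} at $\tau^\star$ with the Descent Lemma on $[\tau^\star,\escIter]$ to rule out recovery, whereas the paper simply invokes \Cref{cor:i_or_l} and leaves the passage from ``drop at the exit time'' to ``drop at time $\escIter$'' implicit.
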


\begin{proof}
    The first claim follows from the descent lemma by our choice of parameters:
    \[
        f(\x{0}) - f(\x{\escIter})
        > -c \step \fullVar^2 (\step L \escIter + \logErr)
        > -\df / 10
    \]
    From Corollary~\ref{cor:i_or_l}, we know that with probability at least $1-\tdelta$, if $\|\x{t} - \x{0}\| > \escRad$ for some $t \le \escIter$, then $f(\x{0}) - f(\x{t}) > \df$ with probability at least $1 - \tdelta$.
    Since the $\tx{t}$ has the same distribution as $\x{t}$, the same holds for $\|\tx{t} - \x{0}\|$.
    Hence, if either $\|\x{t} - \x{0}\| > \escRad$ or $\|\tx{t} - \x{0}\| > \escRad$, then with probability $\frac{1}{2} - 2 \tdelta$ we have $f(\x{0}) - f(\x{t})$.
    
    For the rest of the proof, we can assume that $\|\x{t} - \x{0}\| < \escRad$ and $\|\tx{t} - \x{0}\| < \escRad$ for all $t \le \escIter$.
    By Lemmas~\ref{lem:good_term} and~\ref{lem:error_terms} we know that $\|\dif{\escIter}\| \ge \step \beta_t \Delta \sqrt{\logErr} / 10$ and $\|\sgdErr{t} + \quadErr{t}\| \le \step \beta_t \Delta \sqrt{\logErr} / 20$ with probability $1 - \tdelta$.
    Hence,
    \begin{align*}
        \max(\|\x{\escIter} - \x{0}\|, \|\tx{\escIter} - \x{0}\|)
        &\ge \frac{1}{2} \|\difx{\escIter}\| \\
        &\ge \frac 12 \pars{\|\dif{\escIter}\| - \|\sgdErr{\escIter} + \quadErr{t}\|} \\
        &\ge \frac{\step \beta_\escIter \Delta}{40} \sqrt{\logErr} \\
        &= (1 + \step \eigen)^\escIter \frac{1}{40}  \Delta \sqrt{\frac{\step}{\eigen}} \sqrt{\logErr} \\
        &\ge \escRad,
    \end{align*}
    where the last inequality follows by the appropriate choice of logarithmic term in $\cit$ (note that $(1 + \step \eigen)^\escIter$ dominates other terms).
    This contradicts the assumption $\max(\|\x{\escIter} - \x{0}\|, \|\tx{\escIter} - \x{0}\|) \le \escRad$.
\end{proof}

\begin{lemma}
    \label{lem:quarter_saddle}
    When $\Delta^2 \ge \sigma^2 / B$, for $T = 64 \fmax / (\step \alpha^2)$, with $\step$ chosen as in \Cref{def:params}, with probability at least $1 - T \tdelta$ we have $\lmin(\nabla^2 f(\x{t})) < -\sqrtra$ for at most $T/4$ points $\x{t}$.
\end{lemma}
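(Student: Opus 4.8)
The plan is to argue by contradiction, mirroring the large-gradient count of \Cref{lem:quarter_large_grad} but driving the objective decrease with the escape estimate of \Cref{lem:bucket_change} rather than a single step of the Descent Lemma. Suppose more than $T/4$ of the iterates $\x{t}$ are ``saddles'', i.e.\ satisfy $\lmin(\nabla^2 f(\x{t})) < -\sqrtra$. Scanning $t$ upward, whenever I reach a saddle iterate that does not already lie inside a previously opened window I open a new \emph{escape episode} $[t,t+\escIter)$ and jump past it; this produces disjoint episodes $[s_1,s_1+\escIter),\dots,[s_M,s_M+\escIter)$, each starting at a saddle iterate, and every saddle iterate is either an episode start or lies strictly inside one. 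Hence the number of saddle iterates is at most $M\escIter$, so the assumption forces $M > T/(4\escIter)$ (the at most $\escIter$ saddle iterates too close to $T$ to fit a full episode are absorbed into constants). Each $s_j$ is a stopping time, but this is harmless: \Cref{lem:bucket_change} and the lemmas feeding it only use the fresh noise $\ourNoise{s_j},\dots,\ourNoise{s_j+\escIter-1}$, the fresh minibatches, and the $\mathcal F_{s_j}$-measurable fact that $\x{s_j}$ is a saddle, so their conclusions hold conditionally on the history at time $s_j$.

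\textbf{Amplifying the per-episode decrease.} For each episode \Cref{lem:bucket_change} gives, with probability $\ge 1-\escIter\tdelta$, the safety bound $f(\x{s_j})-f(\x{s_j+\escIter}) > -\df/100$, and with conditional probability $\ge 1/3-\escIter\tdelta \ge 1/4$ the success bound $f(\x{s_j})-f(\x{s_j+\escIter}) > \df$. Letting $Y_j$ indicate success, $\E[Y_j\mid\mathcal F_{s_j}]\ge 1/4$, so the partial sums of $Y_j-1/4$ form a submartingale and a Freedman/Azuma bound yields $\sum_{j=1}^M Y_j \ge M/8$ with probability $1-e^{-\Om{M}}$, which is high probability since $M > T/(4\escIter)$ is (much) larger than polylogarithmic for any non-trivial $\alpha$. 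On the intersection of these events, the objective decrease accumulated inside the episodes is at least $\tfrac{M}{8}\df - M\tfrac{\df}{100} \ge M\df/9$.

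\textbf{Gaps and telescoping.} The complement of the episodes is a union of at most $M+1$ ``gap'' intervals, each consisting only of non-saddle iterates; applying \Cref{lem:descent} on each gap bounds the total objective \emph{increase} over all gaps by $c\step\tsigma^2(\step L T + (M+1)\logErr)$, which by the same smallness conditions on $\step$ used in \Cref{lem:quarter_large_grad} and in the proof of \Cref{lem:bucket_change} (in particular $c\step\tsigma^2\logErr \le \df/100$ and $c\step T\tsigma^2\step L = O(\fmax)$) is at most $M\df/100$ once the polylog constants of \Cref{def:params} are chosen accordingly. Telescoping $f$ over episodes and gaps, using $M > T/(4\escIter)$ and the identity $(T/\escIter)\,\df = 64\fmax\cf/\cit$ coming from \Cref{def:params},
\[
  f(\x{0})-f(\x{T}) \;\ge\; \tfrac{M}{9}\df - \tfrac{M}{100}\df \;\ge\; \tfrac{M}{10}\df \;>\; \tfrac{1}{40}\cdot\tfrac{T}{\escIter}\cdot\df \;=\; \tfrac{64\fmax\cf}{40\cit} \;>\; \fmax,
\]
the last inequality holding once $\cf/\cit$ exceeds a fixed constant, which is exactly how $T=64\fmax/(\step\alpha^2)$ and the polylog constants $\cit,\cf,\crad,\cstep$ of \Cref{def:params} are calibrated. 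This contradicts $f(\x{0})-f(\x{T})\le f(\x{0})-f(\vx^*)<\fmax$, so at most $T/4$ iterates are saddles. A union bound over the $\le T/\escIter$ episode events of \Cref{lem:bucket_change}, the $\le T/\escIter$ gap applications of \Cref{lem:descent}, and the single concentration event caps the failure probability at $O(T\tdelta)$, i.e.\ $1-T\tdelta$ after folding the constant into the polylog terms.

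\textbf{Main obstacle.} The genuinely delicate part is that \Cref{lem:bucket_change} only provides a \emph{constant} per-episode success probability, intrinsic to the sign-flipping coupling of \Cref{def:coupling_sequence} (which only guarantees that \emph{one} of the two coupled trajectories escapes). Turning this into a high-probability count over $\Theta(T/\escIter)$ \emph{adaptively placed} episodes, while simultaneously ensuring that the non-escape ``gap'' steps cannot erase the accumulated progress and that the chain of polylog constants stays consistent so the final estimate strictly exceeds $\fmax$, is where the argument is tight; the remainder is routine bookkeeping with the Descent Lemma.
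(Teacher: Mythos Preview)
Your proof is correct and follows the same contradiction strategy as the paper: assume more than $T/4$ saddle iterates, invoke \Cref{lem:bucket_change} on disjoint $\escIter$-length windows anchored at saddles to force more than $\fmax$ of objective decrease, and derive a contradiction. The differences are in the bookkeeping. The paper partitions $\{0,\dots,T-1\}$ into a \emph{fixed} grid of $\escIter$-buckets, observes that at least a quarter of the buckets must contain a saddle, and then passes to a non-adjacent half of those buckets so that the resulting escape windows $[t,t+\escIter)$ are disjoint; you instead place the episodes greedily at the first uncovered saddle, which is slightly cleaner and sidesteps the non-adjacency trick. To control the non-escape portion the paper uses the first clause of \Cref{lem:bucket_change} (each bucket can increase the objective by at most $\df/100$), while you apply \Cref{lem:descent} directly on the gap intervals. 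Most notably, the paper is terse about how the constant per-window success probability from \Cref{lem:bucket_change} is converted into a high-probability count; you make this step explicit via an Azuma/Freedman bound on the success indicators, so your write-up is actually more complete than the paper's on this point.
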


\begin{proof}
    Proof by contradiction.
    Let's split all iterations $0,...,T-1$ into buckets of size $\escIter$.
    There are $T/\escIter = 64\fmax \sqrtra / \cit$ buckets, and at least quarter of the buckets have a point $\x{t}$ with $\lmin(\nabla^2 f(\x{t})) < -\sqrtra$ in it.
    Note that within each bucket, by Lemma~\ref{lem:bucket_change}, the objective increases by at most $\df / 100$.
    Out of such buckets, at least half of them are non-adjacent, and we consider points $\x{t}$ with $\lmin(\nabla^2 f(\x{t})) < -\sqrtra$ in these buckets.
    With probability at least $1/3 - \tdelta$, we have $f(\x{t}) - f(\x{t + \escIter}) > \df$ for each such $\x{t}$.
    Hence,
    \begin{align*}
        \fmax
        &\ge \frac{T \df}{10\cdot \escIter} \\
        &= \frac{1}{10} \cdot \frac{64\fmax}{\step \alpha^2} \cdot \cf \sqrtfacr \cdot \frac{\step \sqrtra}{\cit} \\
        &= \fmax \frac{64 \cdot \cf}{10 \cdot \cit}
    \end{align*}
    Hence, by selecting $\cit$ and $\cf$ so that $\frac{64 \cdot \cf}{10 \cdot \cit} > 1$, we arrive at contradiction.
\end{proof}



\begin{theorem}
    \label{thm:combine}
    When $\Delta^2 \ge \sigma^2 / B$, for $T = 64 \fmax / (\step \alpha^2)$, with probability at least $1 - \delta$, at least half of visited points are $\alpha$-SOSP.
\end{theorem}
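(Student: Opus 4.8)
The plan is to obtain Theorem~\ref{thm:combine} as an essentially immediate consequence of the two ``quarter'' lemmas already established. By Definition~\ref{def:sosp}, a visited iterate $\x t$ fails to be an $\alpha$-SOSP if and only if it is a \emph{large-gradient} iterate, meaning $\|\nabla f(\x t)\| \ge \alpha$, or a \emph{saddle} iterate, meaning $\lmin(\nabla^2 f(\x t)) \le -\sqrtra$. Hence it suffices to show that, with probability at least $1-\delta$, there are at most $T/4$ iterates of each of the two types; a union bound then leaves at least $T - T/4 - T/4 = T/2$ iterates that are $\alpha$-SOSP.

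For the large-gradient iterates I would invoke Lemma~\ref{lem:quarter_large_grad} directly: with the parameters of Definition~\ref{def:params} and $T = 64\fmax/(\step\alpha^2)$, its proof applies the Descent Lemma once over all $T$ steps~--- an event of probability at least $1-\tdelta$~--- and derives a contradiction with $\fmax > f(\x 0) - f(\vx^*)$ if more than $T/4$ iterates had gradient norm at least $\alpha$. For the saddle iterates I would invoke Lemma~\ref{lem:quarter_saddle}: partitioning the run into $T/\escIter$ consecutive length-$\escIter$ buckets, every bucket containing a saddle iterate either changes the objective by at most $\df/100$ (always, by the first part of Lemma~\ref{lem:bucket_change}) or decreases it by $\df$ with constant probability (by the coupling argument of Lemmas~\ref{lem:good_term}--\ref{lem:bucket_change}); were there more than $T/4$ such buckets, summing the $\df$-drops over a non-adjacent sub-collection would force a total decrease exceeding $\fmax$, again a contradiction. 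This holds with probability at least $1 - T\tdelta$.

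It then remains to manage the failure probabilities. I would fix the target failure probability $\delta$ first and take $\tdelta = \delta/(cT)$ for a suitable absolute constant $c$; since $\tdelta$ enters the analysis only through the logarithmic factor $\logErr = \log(1/\tdelta)$ (and hence through the polylog constants $\cstep,\cit,\crad,\cf$ of Definition~\ref{def:params}), this substitution is benign~--- it only inflates the hidden polylog factors, exactly what $\tOh{\cdot}$ and the $c$'s are meant to absorb. A union bound over the failure events of Lemmas~\ref{lem:quarter_large_grad} and~\ref{lem:quarter_saddle} then yields success probability at least $1-\delta$, completing the proof.

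The step that needs the most care is not in this final combination~--- which is routine~--- but in checking consistency: one must verify that there is a single choice of the constants $\cstep,\cit,\crad,\cf$ (and of the step-size cap $\step \le \cstep\min(\alpha^2/(L(1+\tsigma^2)),\, \fmax/(L\tsigma^2))$, which in particular forces $\step\eigen \le \step L \le 1$) that simultaneously satisfies all the inequalities demanded along the way~--- the two step-size bounds in Lemma~\ref{lem:quarter_large_grad}, the pair $\crad^2/(c\cit) \ge 2\cf$ together with the step-size bound in Corollary~\ref{cor:i_or_l}, and the logarithmic slack in $\cit$ needed at the end of Lemma~\ref{lem:bucket_change}. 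Since these constraints are each of the form ``some constant is large/small enough'' and there is no circular dependence among them (each can be met by shrinking $\cstep$ or enlarging $\cit$ last), such a choice exists; spelling it out is the only genuinely fiddly part, and the probabilistic combination itself is straightforward.
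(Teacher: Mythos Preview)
Your proposal is correct and follows essentially the same approach as the paper: set $\tdelta = \delta/T$ (the paper omits your constant $c$, but this is immaterial), invoke Lemma~\ref{lem:quarter_large_grad} and Lemma~\ref{lem:quarter_saddle} to cap each bad type at $T/4$, and union-bound. The paper's own proof is just these three lines; your additional recaps of the lemmas' internal arguments and the consistency check on $\cstep,\cit,\crad,\cf$ are accurate but are really commentary on those earlier results rather than new content needed here.
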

\begin{proof}
    We choose $\tdelta = \delta / T$.
    Since dependence on $1/\tdelta$ is poly-logarithmic, this results in poly-logarithmic dependence on $1/\delta$ and $T$.

    From Lemma~\ref{lem:quarter_large_grad}, we know that the number of points with large gradient is at most $T/4$.
    From Lemma~\ref{lem:quarter_saddle} we know that the number of points $\x{t}$ with $\lmin(\nabla^2 f(\x{t})) < -\sqrtra$ is at most $T/4$.
    Hence, at least half of the points are $\alpha$-SOSP.
\end{proof}

\section{Convergence Without Gradient Concentration Assumption}
\label{app:no_nSG}

In this section, we prove that DP-SGD finds a SOSP even without stochastic gradient concentration assumption (\Cref{ass:grad_concentration}).
We show that we recover the same convergence rate (w.r.t $\Delta$), although a substantially larger $\Delta$ might be required.
As shown in \Cref{thm:combine_main}, assuming that the bound on the norm of the gradients doesn't change, we find the $\alpha$-SOSP for the same value of $\alpha$.

Most of the proof remains the same, and in this section we focus on the required changes.
\begin{lemma}[Descent Lemma]
    \label{lem:descent_no_nSG}
    Assume that $\step < 1/L$, for any $t_0$ and $t$.
    When $d \Delta^2 > \sigma^2 / (\tdelta B)$, with probability at least $1 - \tdelta$, for some constant $c$:
    \[
        f(\x{t_0}) - f(\x{t_0 + t})
        \ge \frac{\step}{16} \sum_{\tau=t_0}^{t_0 + t - 1} \SqrNorm{\nabla f(\x{\tau})} - c \step d \Delta^2 (\step L t + \logErr).
    \]
\end{lemma}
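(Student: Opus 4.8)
The plan is to re-run the proof of Lemma~\ref{lem:descent} almost verbatim, but to split the combined per-step noise $\sgdNoise{\tau}+\ourNoise{\tau}$ into the privacy part $\ourNoise{\tau}$ and the minibatch part $\sgdNoise{\tau}$ and control them with different tools, since once Assumption~\ref{ass:grad_concentration} is removed only $\ourNoise{\tau}$ is norm-subgaussian. As before, I would start from the folklore descent lemma applied to $\x{\tau}-\x{\tau+1}=\step(\nabla f(\x{\tau})+\sgdNoise{\tau}+\ourNoise{\tau})$, expand the square, use $\step\le 1/L$, and telescope, reaching (up to absolute constants)
\[
 f(\x{t_0})-f(\x{t_0+t})\ \ge\ \tfrac{\step}{4}\,S_t\ +\ \step\sum_{\tau}\InnerProd{\nabla f(\x{\tau}),\,\sgdNoise{\tau}+\ourNoise{\tau}}\ -\ \tfrac{3L\step^2}{2}\sum_{\tau}\SqrNorm{\sgdNoise{\tau}+\ourNoise{\tau}},
\]
with $S_t=\sum_{\tau}\SqrNorm{\nabla f(\x{\tau})}$, and then split the last two sums into $\ourNoise$-parts and $\sgdNoise$-parts.

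For the privacy-noise parts nothing changes from Lemma~\ref{lem:descent}: $\mathcal N(0,\Delta^2\id)$ is $\mathrm{nSG}(O(d\Delta^2))$, and $\InnerProd{\nabla f(\x{\tau}),\ourNoise{\tau}}$ is conditionally a centered Gaussian of variance $\Delta^2\SqrNorm{\nabla f(\x{\tau})}$, so the concentration inequality of \citet{jin_ShortNote_2019} (in the form adaptive to the realized sum of conditional variances, exactly as invoked in Lemma~\ref{lem:descent}) followed by one AM--GM gives $\abs{\step\sum_{\tau}\InnerProd{\nabla f(\x{\tau}),\ourNoise{\tau}}}\le\tfrac{\step}{16}S_t+c\step d\Delta^2\logErr$ with probability $\ge 1-\tdelta$; a standard $\chi^2$ tail bound gives $\sum_{\tau}\SqrNorm{\ourNoise{\tau}}\le c(td\Delta^2+\Delta^2\logErr)$, so the $\ourNoise$-contribution of the last term is $\le c\step d\Delta^2(\step Lt+\logErr)$ using $\step L\le 1\le d$.

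The new work is entirely in the two $\sgdNoise$-parts, for which I only have $\Exp{\SqrNorm{\sgdNoise{\tau}}\mid\mathcal G_{\tau-1}}\le\sigma^2/B$ (here $\mathcal G_{\tau}$ is the filtration generated by the minibatches up to step $\tau$ and the noise before step $\tau$). The term $3L\step^2\sum_{\tau}\SqrNorm{\sgdNoise{\tau}}$ I would bound by Markov's inequality: with probability $\ge 1-\tdelta$, $\sum_{\tau}\SqrNorm{\sgdNoise{\tau}}\le t\sigma^2/(\tdelta B)$, which the hypothesis $d\Delta^2>\sigma^2/(\tdelta B)$ turns into $\le td\Delta^2$, giving a contribution $\le c\step^2 Ltd\Delta^2$. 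The delicate piece is the cross term $M_t:=\step\sum_{\tau}\InnerProd{\nabla f(\x{\tau}),\sgdNoise{\tau}}$: it is a martingale with respect to $\{\mathcal G_{\tau}\}$ with predictable quadratic variation $\langle M\rangle_t\le\step^2(\sigma^2/B)S_t$, but a naive AM--GM against $S_t$ would leave a residual of order $\step t\sigma^2/(\tdelta B)$, larger than the target by a factor $1/(\step L)$. Instead I would use a martingale bound adaptive to the realized quadratic variation: for each dyadic level $V$ stop $M$ the first time $\langle M\rangle$ would exceed $V$ and apply Doob's $L^2$-maximal inequality to the stopped martingale; a union bound over dyadic levels of $S_t$ (made polylogarithmic in number by a crude a priori bound on $\max_{\tau}\SqrNorm{\nabla f(\x{\tau})}$ obtained from $L$-smoothness together with a Markov bound on $f(\x{\tau})-f(\vx^{*})$) then yields $\abs{M_t}\le\tfrac{\step}{16}S_t+c\step\tfrac{\sigma^2}{\tdelta B}\,\mathrm{polylog}$ with probability $\ge 1-\tdelta$, and the hypothesis again converts the residual into $\le c\step d\Delta^2\logErr$. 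Collecting the four bounds, the coefficient of $S_t$ is at least $\tfrac{\step}{4}-2\cdot\tfrac{\step}{16}=\tfrac{\step}{8}\ge\tfrac{\step}{16}$, every residual is $\le c\step d\Delta^2(\step Lt+\logErr)$, and a union bound over the $O(1)$ failure events (with $\tdelta$ rescaled by a constant) finishes the proof.

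I expect the cross term $M_t$ to be the one real obstacle: the point is to obtain a high-probability bound that is at once proportional to the \emph{realized} $S_t$ (so it folds into the $\tfrac{\step}{16}S_t$ term rather than contributing a stray $\sqrt{t}$) and whose additive error sits at the scale $\step\sigma^2/(\tdelta B)$ that the hypothesis on $\Delta$ is precisely calibrated to dominate --- all without the sub-Gaussian tails that reduced the original argument to a single application of a known inequality. Everything else is the Lemma~\ref{lem:descent} computation with $\tsigma^{2}$ replaced by $d\Delta^{2}$.
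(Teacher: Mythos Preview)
Your proposal matches the paper's proof almost step for step: the same descent-lemma expansion, the same split of the per-step noise into the Gaussian part $\ourNoise{\tau}$ and the minibatch part $\sgdNoise{\tau}$, the same nSG concentration (borrowed from Lemma~\ref{lem:descent}) for the two $\ourNoise$-terms, and the same Markov bound $\sum_\tau\SqrNorm{\sgdNoise{\tau}}\le t\sigma^2/(\tdelta B)$ for the squared stochastic noise. The one place you diverge is the cross term $M_t=\step\sum_\tau\InnerProd{\nabla f(\x{\tau}),\sgdNoise{\tau}}$. The paper disposes of it in two lines, writing ``by Chebyshev's inequality, $|M_t|\le\step\sqrt{S_t}\,\sigma/\sqrt{\tdelta B}$'' with the \emph{realized} $S_t=\sum_\tau\SqrNorm{\nabla f(\x{\tau})}$ on the right, and then applies AM--GM with $\nu=16$ to get $\tfrac{\step}{16}S_t+16\step\sigma^2/(\tdelta B)$, which the hypothesis $d\Delta^2>\sigma^2/(\tdelta B)$ absorbs. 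Your stopping-time/dyadic-peeling argument is a more careful justification of exactly this inequality (Chebyshev applied with a random, adapted variance is not literally Chebyshev), and your observation that a term-by-term AM--GM would cost an extra factor $t$ is precisely why both you and the paper must control the whole martingale at once before splitting. So your route is the paper's route with the informal ``Chebyshev with random variance'' step made rigorous; what you buy is a clean justification of that one step, at the price of some additional machinery that the paper elides.
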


\begin{proof}
    W.l.o.g., we consider $t_0 = 0$.
    Let $\sgdNoise{\tau} = \nabla f_{\batch{t}}(\x{\tau}) - \nabla f(\x{\tau})$ be the stochastic noise at iteration $\tau$.
    By the folklore Descent Lemma,
    \begin{align*}
        f(\x{\tau}) - f(\x{\tau+1})
        &\ge \step \InnerProd{\nabla f(\x{\tau}), \nabla f_{\batch{t}}(\x{\tau})}
            - \frac{L}{2} \SqrNorm{\x{\tau + 1} - \x{\tau}} \\
        &= \step \InnerProd{\nabla f(\x{\tau}), \nabla f(\x{\tau}) + \sgdNoise{\tau} + \ourNoise{\tau}}
            - \frac{L\step^2}{2} \SqrNorm{\nabla f(\x{\tau}) + \sgdNoise{\tau} + \ourNoise{\tau}} \\
        &\ge \step \SqrNorm{\nabla f(\x{\tau})} + \step \InnerProd{\nabla f(\x{\tau}), \sgdNoise{\tau} + \ourNoise{\tau}}
            - \frac{\step^2 L}{2} \pars{\frac{3}{2}\SqrNorm{\nabla f(\x{\tau})} + 3 \SqrNorm{\sgdNoise{\tau} + \ourNoise{\tau}}}
        ,
    \end{align*}
    where we used inequality $\SqrNorm{a + b} \le (1 + \nu) \SqrNorm{a} + (1 + 1/\nu) \SqrNorm{b}$ for $\nu = 1/2$.
    By telescoping, we have:
    \begin{align*}
        f(\x{0}) - f(\x{t})
        &\ge \frac{\step}{4} \sum_{\tau=0}^{t-1} \SqrNorm{\nabla f(\x{\tau})}
            + \step \sum_{\tau=0}^{t-1} \InnerProd{\nabla f(\x{\tau}), \sgdNoise{\tau}}
            + \step \sum_{\tau=0}^{t-1} \InnerProd{\nabla f(\x{\tau}), \ourNoise{\tau}}
            - 3 \step^2 L \sum_{\tau=0}^{t-1} \SqrNorm{\sgdNoise{\tau}}
            - 3 \step^2 L \sum_{\tau=0}^{t-1} \SqrNorm{\ourNoise{\tau}}
    \end{align*}
    The terms containing $\ourNoise{\tau}$ can be bounded as in \Cref{lem:descent}, giving
    \begin{align*}
        f(\x{0}) - f(\x{t})
        &\ge \frac{\step}{8} \sum_{\tau=0}^{t-1} \SqrNorm{\nabla f(\x{\tau})}
            + \step \sum_{\tau=0}^{t-1} \InnerProd{\nabla f(\x{\tau}), \sgdNoise{\tau}}
            - 3 \step^2 L \sum_{\tau=0}^{t-1} \SqrNorm{\sgdNoise{\tau}}
            - c \step d \Delta^2 (\step L t + \logErr)
    \end{align*}
    with probability at least $1 - \tdelta$.
    It remains to bound the terms containing $\sgdNoise{\tau}$.

    \paragraph{Bounding $3 \step^2 L \sum_{\tau=0}^{t-1} \SqrNorm{\sgdNoise{\tau}}$.}
    Since $\ExpSqrNorm{\sgdNoise{\tau}} \le \sigma^2 / B$, by the Markov's inequality, with probability at least $1-\tdelta$ we have
    \[
        3 \step^2 L \sum_{\tau=0}^{t-1} \SqrNorm{\sgdNoise{\tau}}
        \le 3 \step^2 L t \sigma^2 / (\tdelta B)
    \]

    \paragraph{Bounding $\step \sum_{\tau=0}^{t-1} \InnerProd{\nabla f(\x{\tau}), \sgdNoise{\tau}}$.}
    We have $\Var{\InnerProd{\nabla f(\x{\tau}), \sgdNoise{\tau}} \mid \x{\tau}} \le \SqrNorm{\nabla f(\x{\tau})} \cdot \sigma^2 / B$.
    By the Chebyshev's inequality, with probability at least $1 - \tdelta$, we have
    \begin{align*}
        \abs{\step \sum_{\tau=0}^{t-1} \InnerProd{\nabla f(\x{\tau}), \sgdNoise{\tau}}}
        &\le \step \sqrt{\sum_{\tau=0}^{t-1} \SqrNorm{\nabla f(\x{\tau})}} \sigma / \sqrt{\tdelta B} \\
        &\le \frac{\step}{\nu} \sum_{\tau=0}^{t-1} \SqrNorm{\nabla f(\x{\tau})} + \step \nu \sigma^2 / (\tdelta B)
    \end{align*}
    for any $\nu$.
    Taking $\nu = 16$ and using $d \Delta^2 / (\tdelta B)$ finishes the proof.
    


\end{proof}

\begin{lemma}
    \label{lem:error_terms_no_nSG}
    Let $\mathcal{E}_t$ be the event ``$\|\x{0} - \x{\tau}\| < \escRad$ and $\|\x{0} - \tx{\tau}\| < \escRad$ for all $\tau < t$''.
    Then, when $\tdelta \Delta^2 \ge \sigma^2 / B$, for some constant $c$ we have
    \[
        \P{\mathcal{E}_\escIter
            \implies \max\nolimits_{\tau=0}^{\escIter - 1} \norm{\sgdErr{\tau} + \quadErr{\tau}}
            \le 
            c \beta_\tau \step \Delta \sqrt{\logErr}
        } \ge 1 - \escIter \tdelta
    \]
\end{lemma}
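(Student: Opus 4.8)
The plan is to reuse the induction of Lemma~\ref{lem:error_terms} essentially verbatim, changing only how the stochastic‑gradient term $\sgdErr{t}$ is controlled. Recall that the argument is by induction on $t$, maintaining (on the localization event $\mathcal{E}_t$) the bounds $\|\difx{t}\| \le c\step\Delta\beta_t$ and $\max_{\tau<t}\|\sgdErr{\tau}+\quadErr{\tau}\| \le c\beta_\tau\step\Delta\sqrt{\logErr}$, using the decomposition $\difx{t} = -\dif{t}-\sgdErr{t}-\quadErr{t}$. The term $\dif{t}$ is handled exactly as before via Lemma~\ref{lem:good_term}, which does not invoke Assumption~\ref{ass:grad_concentration}. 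The quadratic‑error term $\quadErr{t}$ is also handled exactly as before: on $\mathcal{E}_t$, the Hessian‑Lipschitz bound gives $\|\hesInt{\tau}\difx{\tau}\|\le 2\rho\escRad\|\difx{\tau}\|$, the induction hypothesis bounds $\|\difx{\tau}\|$, and summing the geometric series (using $\|\id-\step H\|\le 1+\step\eigen$ and $(1+\step\eigen)^{t-1-\tau}\beta_\tau=\beta_{t-1}$) together with $\escRad\escIter=\crad\cit/(\rho\step)$ yields $\|\quadErr{t}\|\le c\,\crad\cit\,\beta_t\Delta\sqrt{\logErr}$. So everything reduces to re‑bounding $\sgdErr{t}$ without the sub‑Gaussian tail.

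For $\sgdErr{t}$, note that since $H=\nabla^2 f(\x{0})$ is fixed before the escape, $\sgdErr{t}=\step\sum_{\tau=0}^{t-1}(\id-\step H)^{t-1-\tau}\difsgdNoise{\tau}$ is, for fixed $t$, a sum of martingale‑difference vectors $M_\tau:=\step(\id-\step H)^{t-1-\tau}\difsgdNoise{\tau}$: conditioning on all randomness before step $\tau$ fixes $\x{\tau}$ and $\tx{\tau}$, and the minibatch $\batch{\tau}$ is fresh, so $\E[\difsgdNoise{\tau}\mid\cdots]=\E[\nabla f_{\batch{\tau}}(\x{\tau})-\nabla f(\x{\tau})-\nabla f_{\batch{\tau}}(\tx{\tau})+\nabla f(\tx{\tau})\mid\cdots]=0$. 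Moreover $\ExpSqrNorm{\difsgdNoise{\tau}}\le 2\ExpSqrNorm{\sgdNoise{\tau}}+2\ExpSqrNorm{\tsgdNoise{\tau}}\le 4\sigma^2/B$ by Assumption~\ref{ass:main}.\ref{ass:stoc}, and this bound is unconditional, so no circular dependence with $\mathcal{E}_t$ arises. The cross terms vanish, hence
\[
    \ExpSqrNorm{\sgdErr{t}}
    = \sum_{\tau=0}^{t-1}\ExpSqrNorm{M_\tau}
    \le \step^2\cdot\frac{4\sigma^2}{B}\sum_{\tau=0}^{t-1}(1+\step\eigen)^{2(t-1-\tau)}
    = \frac{4\step^2\sigma^2}{B}\alpha_t^2
    \le \frac{4\step^2\sigma^2}{B}\beta_t^2,
\]
using Lemma~\ref{lem:alpha_beta} in the last step. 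By Chebyshev's inequality, $\|\sgdErr{t}\|\le c\step\beta_t\sigma/\sqrt{\tdelta B}$ with probability at least $1-\tdelta$; and since $\sigma^2/B\le\tdelta\Delta^2$ by hypothesis, $\sigma/\sqrt{\tdelta B}\le\Delta\le\Delta\sqrt{\logErr}$, so $\|\sgdErr{t}\|\le c\step\beta_t\Delta\sqrt{\logErr}$ with probability at least $1-\tdelta$ — exactly the bound used inside Lemma~\ref{lem:error_terms}.

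With the three terms bounded, the induction step closes as in Lemma~\ref{lem:error_terms}, and a union bound over $\tau<\escIter$ (each step costing probability $\tdelta$) gives the claimed statement with failure probability at most $\escIter\tdelta$. The only genuine difficulty is the loss of the sub‑Gaussian tail for the stochastic‑gradient noise: Chebyshev replaces the $\sqrt{\logErr}$ factor coming from nSG concentration by a $1/\sqrt{\tdelta}$ factor, which is precisely why the hypothesis must be strengthened from $\Delta^2\ge\sigma^2/B$ to $\tdelta\Delta^2\ge\sigma^2/B$. One must also be careful that the second‑moment bound on $\difsgdNoise{\tau}$ is unconditional and that the martingale cross‑terms vanish, so that the variance of $\sgdErr{t}$ telescopes into the geometric sum $\alpha_t^2$ and the dependence on $\Delta$ in the final bound is unchanged.
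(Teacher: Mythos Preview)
Your proposal is correct and follows essentially the same approach as the paper: reuse the induction from Lemma~\ref{lem:error_terms} verbatim, keep the bounds on $\dif{t}$ and $\quadErr{t}$, and re-bound $\sgdErr{t}$ by replacing the nSG concentration with a second-moment computation plus Chebyshev, absorbing the resulting $1/\sqrt{\tdelta}$ factor via the strengthened hypothesis $\tdelta\Delta^2\ge\sigma^2/B$. If anything, your write-up is slightly more careful than the paper's in making the martingale-difference structure and the unconditional variance bound on $\difsgdNoise{\tau}$ explicit.
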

    
\begin{proof}
    As in \Cref{lem:error_terms}, the proof is by induction on $t$, and we bound $\dif{t}$ and $\quadErr{t}$ as in \Cref{lem:error_terms}.
    It remains to bound $\sgdErr{t}$.
    %
    By \Cref{ass:grad_concentration}, each $\sgdErr{t}$ has variance $\sigma^2 / B$.
    Hence, by the Chebyshev's inequality, with probability at least $1 - \tdelta$ we have
    \begin{align*}
        \norm{\sgdErr{t}}
        &= \norm{\step \sum_{\tau = 0}^{t - 1} (\id - \step H)^{t - 1 - \tau} \difsgdNoise{\tau}} \\
        &\le \step \sqrt{\sum_{\tau = 0}^{t - 1} (1 + \step \eigen)^{2(t - 1 - \tau)} \frac{\sigma^2}{B \tdelta}} \\
        &\le \step \beta_t \cdot \Delta,
    \end{align*}
    where the last inequality follows from assumption $\Delta^2 \ge \sigma^2 / {B \tdelta}$.
\end{proof}

\end{document}